\newcommand{\iid}{i.i.d.\xspace}
\newcommand{\rank}{\mathop{\sf rank}}
\newcommand{\diag}{\mathop{\text{diag}}}
\newcommand{\Expect}{\mathbb{E}}
\newcommand{\expect}[1]{\mathbb{E} #1 }
\newcommand{\Prob}{\mathbb{P}}
\newcommand{\Bern}{\text{Bern}}
\newcommand{\argmin}{\mathop{\rm arg\, min}}
\newcommand{\argmax}{\mathop{\rm arg\, max}}
\newcommand{\lunder}[1]{{\underset{\raise0.3em\hbox{$\smash{\scriptscriptstyle-}$}}{#1}}}
\newcommand{\ceil}[1]{{\left\lceil {#1} \right \rceil}}
\newcommand{\norm}[1]{\left\|{#1} \right\|}
\newcommand{\fnorm}[1]{\|#1\|_{\rm F}}
\newcommand{\indc}[1]{{\mathbf{1}({#1})}}
\newcommand{\Indc}{\mathbf{1}}
\def\innergetnumber#1[#2]#3{#2}
\def\getnumber{\expandafter\innergetnumber\jobname}
\newcommand{\bbC}{{\mathbb{C}}}
\newcommand{\bbD}{{\mathbb{D}}}
\newcommand{\bbE}{{\mathbb{E}}}
\newcommand{\bbF}{{\mathbb{F}}}
\newcommand{\bbP}{{\mathbb{P}}}
\newcommand{\bbR}{{\mathbb{R}}}
\newcommand{\calB}{{\mathcal{B}}}
\newcommand{\calC}{{\mathcal{C}}}
\newcommand{\calL}{{\mathcal{L}}}
\newcommand{\calM}{{\mathcal{M}}}
\newcommand{\calP}{{\mathcal{P}}}
\newcommand{\hsigma}{{\hat{\sigma}}}
\newcommand{\sth}[1]{\left\{ #1 \right\}}
\newcommand{\wt}{\widetilde}
\newcommand{\wh}{\widehat}
\newcommand{\sss}{\scriptscriptstyle}
\newcommand{\alphaavg}{\overline{\alpha}}
\newcommand{\omegabar}{{\omega'}}
\newcommand{\indexset}{{J}}
\newcommand{\kmbound}{{\varepsilon}}
\newcommand{\countrate}{\nu_n}
\newcommand{\countrateupper}{\overline{\nu}_n^\epsilon}
\newcommand{\countratelower}{\underline{\nu}_n^\epsilon}
\newcommand{\countratelowereps}[1]{\underline{\nu}_n^{#1}}
\newcommand{\countrateuppereps}[1]{\overline{\nu}_n^{#1}}
\newcommand{\speclore}{{{SpecLoRe}}\xspace}
\newcommand{\omegarate}{\epsilon_1}
\newtheorem{theorem}{Theorem}[section]
\newtheorem{lemma}[]{Lemma}[section]
\newtheorem{proposition}[]{Proposition}[section]
\newtheorem{assumption}[]{Assumption}[section]
\theoremstyle{definition}
\newtheorem{remark}[]{Remark}[section] 
\DeclareMathOperator*{\ave}{ave}
\newcommand{\hs}[1]{{\color{red} #1}\xspace}
\renewcommand{\hs}[1]{{ #1}\xspace}
\title{Community detection in sparse latent space models}
\author{Fengnan Gao\footnote{F.G.~is at School of Data Science of Fudan University and Shanghai Center for Mathematical Sciences. E-mail: fngao@fudan.edu.cn.}
\and Zongming Ma\footnote{Z.M.~is at the Department of Statistics of the University of Pennsylvania. E-mail: zongming@wharton.upenn.edu.}
\and Hongsong Yuan\footnote{H.Y.~is at the School of Information Management and Engineering of Shanghai University of Finance and Economics.
E-mail: yuan.hongsong@shufe.edu.cn.}}
\date{\today}
\begin{document}  

\maketitle

\begin{abstract}
We show that a simple community detection algorithm originated from stochastic blockmodel literature achieves consistency, and even optimality, for a broad and flexible class of sparse latent space models.
The class of models includes latent eigenmodels \cite{hoff2008modeling}.
The community detection algorithm is based on spectral clustering followed by local refinement via normalized edge counting. 
\\

\noindent\textbf{Keywords:}  blockmodel, eigenmodel, minimax rates, social network, spectral clustering. 
\end{abstract}

%!TEX root = ../main.tex

%\paragraph{to-do list}
%\begin{itemize}
%\item error rates on svt + spectral clustering
%\item decide on one of the three refinement methods
%\end{itemize}

% here goes the introduction
\section{Introduction}  

Network is a prevalent form of relational data.
%Suppose there are $n$ subjects.
%In its typical form, an undirected network among these subjects can be represented as an $n\times n$ symmetric adjacency matrix $A$.
%For a pair of nodes $i$ and $j$, we let $A_{ij} = 1$ if $i$ and $j$ are connected and $A_{ij} = 0$ otherwise.
%For example, the subjects could be students in a high school and the network could be their friendship network,  and so $A_{ij}$ equals one if students $i$ and $j$ are friends and zero if they are not.
%
A central theme in learning network data is community detection \cite{goldenberg2010survey,fortunato2010community}.
Community detection seeks to partition the nodes of a network into several disjoint subsets (a.k.a.~communities) upon observing the adjacency matrix \cite{girvan2002community}.
The underlying assumption is that nodes within the same community share some commonalities in their connection patterns.
To understand and to motivate algorithms for community detection, statisticians, probabilists and theoretical computer scientists have studied stochastic blockmodels (SBMs) \cite{holland1983} extensively.
To date, researchers have obtained a thorough understanding of the fundamental limits and the behavior of various algorithms under SBMs.
For more details, we refer interested readers to review papers \cite{abbe2017community,li2018convex,gao2018minimax} and the references therein.
A major shortcoming of SBMs is that nodes within the same community must have exactly the same degree profile, and hence SBMs cannot model degree heterogeneity which is commonly observed in real world networks.
To mitigate this issue, 
researchers have proposed degree-corrected blockmodels (DCBMs) where an extra sequence of degree correction parameters was used to lend more flexibility to individual node degrees \cite{karrer2011stochastic}.
In the regimes of strong consistency (when perfect recovery of community structure is possible) and weak consistency (when perfect recovery except for a vanishing proportion of nodes is possible), it is known that spectral clustering followed by certain local algorithm could achieve the best possible accuracy \cite{abbe2017community,gao2018minimax}.

% \nb{change/remove the following sentences; no covariate now}
% However, even with degree correction, blockmodels are inadequate when we observe node and/or edge covariates in addition to a network.
% In practice, it is not uncommon to have additional covariates.
% For example, in a high school friendship network, one may have additional knowledge of each pair of nodes such as whether they are of the same gender or whether they participate in the same extracurricular activity.
% Such additional information could affect edge formation of the network, and so it is desirable to have a model that take it into account.

In a separate line of literature, statisticians have proposed and studied a class of network models called latent space models \cite{hoff2002latent,hoff2003random,handcock2007model,hoff2008modeling,krivitsky2009representing}.
We may view this class of models as a natural extension of generalized linear models to network setting.
In this paper, we consider the following generative model for entries of the observed adjacency matrix $A$.
For any positive integer $m$, let $[m] = \{1,\dots, m\}$.
First, we exclude self-loops and so $A_{ii} = 0$ for all $i\in [n]$.
In addition, { conditional on unobserved values of $\{\alpha_i\}_{i=1}^n$ and $\{z_i\}_{i=1}^n$}, we assume that the Bernoulli random variables $\{A_{ij} = A_{ji}: 1\leq i<j\leq n \}$ are mutually independent, and for each pair $i< j$,
\begin{equation}
\label{eq:model}
\begin{aligned}
     P_{ij} & =\Prob(A_{ij} = 1| \{\alpha_i,z_i\}_{i=1}^n) 
	= 1 - \Prob(A_{ij} = 0| \{\alpha_i,z_i\}_{i=1}^n)
	= \frac{\exp({\alpha_i + \alpha_j + z_i^\top H z_j})}{ 1 + \exp({\alpha_i + \alpha_j + z_i^\top H z_j}) }. 
\end{aligned}
\end{equation}
Model \eqref{eq:model} is a generalization of the logistic regression model to the binary network setting.
Here $\{\alpha_i\}_{i=1}^n$ is a sequence of degree parameters.
Nodes with larger values of $\alpha_i$'s are expected to have larger degrees.
% For each node pair $(i,j)$, $X_{ij}$ is the observed value of the edge covariate, and $\beta$ its regression coefficient.
Furthermore, $\{z_i\}_{i=1}^n \subset \mathbb{R}^d$ are the latent positions of the nodes in a $d$-dimensional latent space (a.k.a.~``social space'' in the latent space model literature), and $H$ an unobserved $d\times d$ symmetric matrix that moderates how the latent positions affect edge formation.
To impose a community structure, let there be $k$ communities.
Let $\{\calL_{z,1},\dots, \calL_{z, k}\}$ be $k$ different probability distributions defined on the latent space $\mathbb{R}^d$.
We assume that there is an unknown deterministic community label vector $\sigma = (\sigma_1,\dots, \sigma_n)^\top \in [k]^n$.
For each node $i$, $\sigma_i = j$ means the $i$th node belongs to the $j$th community.
In this case $z_i$ is a random vector generated from $\calL_{z,\sigma_i}$, and all the $z_i$'s are mutually independent.
Our goal is to infer $\sigma$ from the observed adjacency matrix $A$.

The latent space model \eqref{eq:model} not only models community structures but is also flexible for modeling degree heterogeneity.
%{In addition, it is convenient to include additional edge and node covariates though we do not pursue this direction in the present paper.}
The particular form \eqref{eq:model} can be identified as the latent eigenmodel in \cite{hoff2008modeling} which was shown to possess more flexibility and modeling power than many other latent space models and various blockmodels.
%(known as latent class models in latent space model literature).
\citet{ma2017exploration} studied fitting methods for this model when $H$ is the identity matrix and $\alpha_i$'s and $z_i$'s are considered deterministic.
See also \cite{levina2017}.
Their study also revealed appealing numerical properties for clustering estimated latent positions after fitting such a special case of \eqref{eq:model}, which has partially motivated the study reported in this manuscript.
%\nb{comment on the lack of theory for community detection in latent space models.}
Nevertheless, to the best of our limited knowledge, the literature of community detection for latent space models has been scarce.
A sound understanding of community detection is crucial to applications of such models in real-world network datasets. 
The present manuscript aims to take a first step along this direction.

\subsection{Main contributions} 

The main contributions of this manuscript are twofold. 

From an algorithmic viewpoint, we establish consistency of \speclore, a simple and intuitive community detection method for latent space model \eqref{eq:model} in a stylized setting. 
The method is based on spectral clustering followed by a local edge counting refinement step. 
It was first proposed for blockmodels and its properties for the broader class of latent space models, especially in the generality of latent eigenmodels, were previously unknown.
Our new consistency result suggests that the method may enjoy a certain level of universality on exchangeable network models.
The community detection method aims only at estimating community structure while not trying to find estimates of latent positions or their distributions.
Thus, it is different in nature from most algorithms developed for latent space models in the literature which fit specific latent space models and estimate model parameters. 
See, for instance, \cite{ma2017exploration,levina2017,zhang2018network}.
As estimation of latent positions usually involves solving a computationally expensive optimization problem,  
our method bypasses it and attains comparable or even better accuracy for community detection with considerably lower computational cost.

From a theoretical viewpoint, our consistency result sheds light on a better understanding of community detection for latent space models. 
Our explicit upper bounds on rates of convergence exhibit an interesting interplay between signal-to-noise ratio affected by network sparsity and that affected by latent positions and the quadratic form matrix $H$ in \eqref{eq:model}.
In a more restrictive setting, we could even show that the resulting estimator achieves nearly optimal rates of convergence in some minimax sense. 
The key insight comes from the investigation of a special simple vs.~simple hypothesis testing problem which underpins the local refinement step in our method.
We study error rates of a simple edge counting procedure for this testing problem.
By a seemingly intuitive yet elegant exploitation of symmetry inherent to our model, we are able to show that the simple testing method is equivalent to the optimal likelihood ratio test under mild assumptions.  
The equivalence, being the major novelty of our manuscript, paves the way for establishing the optimality of our algorithms.

\subsection{Relation to prior work}

The present manuscript is connected to \cite{ma2017exploration,levina2017} which studied efficient fitting methods for model \eqref{eq:model} when the $z_i$'s are treated as deterministic.
\citet{ma2017exploration} also touched community detection for  \eqref{eq:model}. 
However, the method was a ``plug-in'' one which ran $k$-means clustering to estimated latent positions.
As we shall show empirically, its computational efficiency is far inferior to the method we consider in this paper while community detection accuracies are comparable.
% \citet{huang2018pairwise} proposed pairwise covariate-adjusted SBMs and studied spectral methods and likelihood-based inference under these models.
% The model in \cite{huang2018pairwise} requires nodes in the same community to share the same latent position and hence is less flexible compared to \eqref{eq:model}.

Moreover, \citet{handcock2007model} and \citet{krivitskyfitting} proposed Bayesian algorithms for community detection in a latent distance model which is different from \eqref{eq:model} but can be approximated by it  \cite{ma2017exploration}.
Their study emphasized the algorithmic and computational perspective, and theoretical properties of the proposed methods were not considered.

In addition to the community detection literature for blockmodels that we have mentioned earlier, there have been extensive studies of community detection for random dot-product graph models, especially via spectral methods.
See the review paper \cite{athreya2017statistical} and the references therein.
These models relax SBMs and their variants such as DCBMs and mixed membership blockmodels.
However, these studies have also mostly focused on ``plug-in'' methods and community detection is conducted through clustering estimated latent positions. 
There has been little investigation on methods designed specifically for community detection, and there is little understanding on fundamental limits of such an inference goal.

\iffalse
Furthermore, researchers have considered community detection in the scenario where additional covariates contain the same community information as does the network.
See, for instance, \cite{binkiewicz2017covariate} and \cite{deshpande2018contextual}.
This, however, is different from the setting of the present manuscript where covariate contains no community information.
\fi

\begin{comment}
\subsection{Organization of paper}
The rest of the manuscript is organized as follows.
Section \ref{sec:methods} presents the method and a variant of it for which we shall establish theoretical results.
Section \ref{sec:theory} states all theoretical results in an idealized setting for model \eqref{eq:model} and the method.
We demonstrate numerical prowess of the method on simulated and real data examples in Sections \ref{sec:simulation} and \ref{sec:realdata}, respectively.
After a brief discussion in Section \ref{sec:discuss}, the appendices present detailed proofs of theoretical results.
\end{comment}

\paragraph{Notation}
Let $S(\cdot)$ be the sigmoid function $S: x \mapsto 1/(1+e^{-x})$, which is the inverse of the logit function $p \mapsto \log\bigl(p/(1-p)\bigr)$.
Let $\Indc(E)$ be the indicator function of $E$, where $E$ may be an event or a set.
$S_2$ contains the two permutations of $[2]$.
%We make it the convention that a bold capital letter stands for a random matrix where entries are filled by the random varible of the same letter, i.e., $X$ is the matrix with $(i,j)$-th entry $X_{ij}$ and $\Theta$ has $(i,j)$-th entry of $\Theta_{ij}$.
%We also make it the convention of the paper that a bold small letter stands for a random vector whose entries are filled by random variables of the same letter, i.e., for instance, $\bsz^\top := (z_1, \cdots, z_n)$ and $\balpha^\top := (\alpha_1,\cdots, \alpha_n)$.
%Let $A \odot B$ be the Hadamard product of matrices $A$ and $B$.
%Define the usual matrix $L_q$-norm of $A$ by
$\| A \|_2$ is the usual operator norm of $A$:
% \begin{equation*}
$\| A \|_2 = \sup_{x\neq 0}{\| A x \|_2}/{ \|x\|_2}$.
% \end{equation*}
The Frobenius norm $\fnorm{A}$ of matrix $A = (A_{ij})_{i \in [n],  j \in [m]}$ is defined as
% \begin{equation*}
$\fnorm{A} = ( \sum_{i}\sum_{j} A_{ij}^2 )^{1/2}$.
% \end{equation*}
% It is well known that for matrix $A$ of rank $r$
% \begin{equation*}
%     \| A\|_2 \le \| A\|_F \le \sqrt{r} \|A\|_2.
% \end{equation*}
For vector $v = (v_1, \cdots, v_d)^\top \in \bbR^d$, $\|x\|_p = \bigl( \sum_{i=1}^d |x_i|^p\bigr)^{1/p}$ for $p = 1, 2$.
$\boldsymbol{1}_d$ and $\boldsymbol{0}_d$ denote a $d$-dimensional column vector with all entries equal to $1$ and $0$, respectively.
%Define the max-norm of matrix $A$ by $\maxnorm{A} = \max_{i,j} |A_{ij}|$.
% Define the exponential function on a matrix $A$ by $e^{A} = (e^{A_{ij}})_{i,j}$.
%Define $[n] := \{ 1, \cdots ,n\}$.
%Should a bound arise, say we would like to put a bound on $X$, the bound would be $x_0$.
%Whenever we need $(0)$ in the sub- or superscript, we put a smaller version of $(0)$ there, i.e., always use $\sigma^{\sss (0)}$ instead of $\sigma^{(0)}$.
%Whenever we need to indicate the true parameter
%Should it arise, anything with a $\star$ as superscript always indicates that this is the true (and fixed) parameter.
For notational simplicity in asymptotics, for two deterministic sequences $a_n$ and $b_n$, we define the following notations: $a_n \lesssim (\gtrsim)\, b_n$ if and only if there exists a constant $C > 0$ such that $a_n \le (\ge)\, C b_n$; $a_n \ll (\gg)\, b_n$ if and only if $a_n/b_n \rightarrow 0\, (\infty)$ as $n \rightarrow \infty$. 
We also write $a_n = O(b_n)$ when $a_n \lesssim b_n$, and $a_n = o(b_n)$ when $a_n\ll b_n$.

%!TEX root = ../main.tex

\section{Method}
\label{sec:methods}

%\fn{ \begin{enumerate}
%        \item 0 in alg 1 is dummy.
%        \item The computational complexity of the $1 + \varepsilon$-approximative solution  to $k$-median algorithm is polynomial time, therefore, $n^2$ or $n^3$ matters not.
%    \end{enumerate} }

We consider a two-stage procedure, consisting of an initialization stage and a refinement stage.
The algorithm was first proposed in \cite{gao2018community} as a community detection method for DCBMs.
In what follows, we introduce the two stages separately for self-completeness.
%In addition, we offer brief explanations on why the method could work for sparse latent space models.

% It is worth pointing out that our results rely on the exponential approximation to the sigmoid function, which inherently restrict both our algorithms and theory to the sparse network case, albeit this is usually the difficult and more interesting case.

\subsection{A practical version}
\label{subsec:practical}

We first introduce a practical version of our method which we shall refer to as
\speclore~(\underline{spec}tral clustering followed by \underline{lo}cal \underline{re}finement) in the rest of this paper.
It is obtained by running Algorithm \ref{alg:local} with initial value given by Algorithm \ref{alg:init}.
%Putting Algorithms~\ref{alg:init} and \ref{alg:local} together gives the community detection algorithm \speclore.
It relies on Algorithm~\ref{alg:init} to process the adjacency matrix for an initial guess $\widehat{\sigma}^0$ and on Algorithm~\ref{alg:local} to further refine the crude yet informative initial guess to obtain the final estimator.
Here and after, we assume the number of communities $k$ is known.
% putting the following Algorithms \ref{alg:init} and \ref{alg:local} together.

\paragraph{Initialization}
We summarize the initialization stage as Algorithm~\ref{alg:init}.
In this stage, we first compute the best rank-$k$ approximation $\widehat{P}$ to the observed adjacency matrix $A$ where $k$ is the number of clusters.
Then we apply weighted $k$-median clustering on normalized rows of $\widehat{P}$.
While running weighted $k$-median clustering, we only seek a constant-factor approximation solution to ensure that the output could be produced within polynomial time complexity \cite{charikar2002constant,chen2015convexified}.
Here $\varepsilon$ is required to be an absolute constant.

\begin{algorithm}[!th]
    \label{alg:init}
	\caption{Initialization}
	\begin{algorithmic}[1]
		\STATE {\bfseries Input:} Adjacency matrix: $A$; latent dimension $d$; number of clusters $k$.
        \STATE Find the solution to the following optimization problem
        \begin{equation}
            \widehat{P} = \argmin_{\rank(P) \le k} \fnorm{ A - P}^2.
            \label{eqn:def-P-hat}
        \end{equation}
        \STATE Let $\widehat{P}_i$ be the $i$th row. Define $\indexset_0 = \{ i \in [n] \mid \| \widehat{P}_i\|_1 = 0 \}$.  For $i \in J_0^c$, define $\widetilde{P}_i = \widehat{P}_i/\| \widehat{P}_i\|_1$.  Put $\widehat{\sigma}^0_i = 0$ for $ i \in J_0$.
        \STATE Find a $(1+\kmbound)$ approximate weighted $k$-median solution for clustering $(\widetilde{P}_i)_{i=1}^n$.
        That is, find labels $\widehat{\sigma}^0=\{\widehat{\sigma}^0_i\}_{i=1}^n \in [k]^n$ and centers $\widehat{v}_l\in \mathbb{R}^{k}, l=1,\cdots,k$, such that
		\begin{align*}
            \sum_{l=1}^k \min_{v_l \in \bbR^n} \sum_{\{i\in J_0^c:\hat{\sigma}_i^0=l\}}\| \widehat{P}_i\|_1  \| \widetilde{P}_i -\widehat{v}_l\|_1 \le (1+\kmbound)\min_{\sigma \in [k]^n} \sum_{l=1}^k \min_{v_l\in\mathbb{R}^k} \sum_{\{i:\sigma_i=l\}}\| \widehat{P}_i \|_1 \|\widetilde{P}_i-v_l\|_1.
		\end{align*}
		\STATE {\bfseries Output:} $\widehat{\sigma}^0$.
	\end{algorithmic}
\end{algorithm}

\paragraph{Refinement}
We then state the local refinement procedure in Algorithm \ref{alg:local}.
Starting with an initial estimator $\wh\sigma^{0}$, we refine it by the following simple and intuitive majority voting rule.
%For node $i$, recalibrate its community label by the one that has the highest connecting frequency to $i$.
For node $i$, we look at all communities prescribed in $\wh\sigma^0$ and calculate the relative connecting frequency from $i$ to each community.
Then we recalibrate the community label of node $i$ to be that of the community to which it most likely connects.
Since the refinement is strictly local, it can be easily carried out in a parallel fashion on each node.
As the process only involves counting edges, a crude inspection of the algorithm puts the computational cost of one round of refinement at $O(n^2)$.
Moreover, as simulated and real world examples reported in Sections \ref{sec:simulation} and \ref{sec:realdata} suggest, one typically only needs to run an $O(1)$ round of refinement to arrive at a stable estimator.

\begin{comment}
As before, Algorithm \ref{alg:local} was originally proposed for DCBMs.
The reason why it continues to work for sparse latent eigenmodels is more subtle than that of Algorithm \ref{alg:init}.
We shall explain it in great details in Section \ref{subsec:testing}.
\end{comment}

\begin{algorithm}[!h]
	\caption{Local Refinement}
	\label{alg:local}
	\begin{algorithmic}[1]
        \STATE {\bfseries Input:} {Adjacency matrix: $A$; number of clusters $k$; an initial label vector $\widehat{\sigma}^{0}$; number of iterations $R$.}
        \STATE Initialize $\widehat{\sigma}^{\mathrm{old}} : =  \widehat{\sigma}^{0}$.
        \FOR{$t \leftarrow 1$ \KwTo $R$}
           \FOR{$i \leftarrow 1$ \KwTo  $n$}
           \STATE Update the labels \begin{align*}
               \widehat{\sigma}^{\mathrm{new}}_i :=\argmax_{u\in[k]} \frac{1}{|\{j:\widehat{\sigma}^{\mathrm{old}}_j=u\}|}\sum_{\{j:\widehat{\sigma}^{\mathrm{old}}_j=u\}} A_{ij}.
		   \end{align*}
           \ENDFOR
           \STATE $\widehat{\sigma}^{\mathrm{old}} :=  \widehat{\sigma}^{\mathrm{new}}$.
        \ENDFOR
        \STATE {\bfseries Output:} $\widehat{\sigma} :=\widehat{\sigma}^{\mathrm{new}}$.
	\end{algorithmic}
\end{algorithm}

\subsection{A theoretically justifiable variant}

In this part, we state a theoretically justifiable variant of \speclore, summarized as Algorithm~\ref{alg:provable}, for which we will establish an upper bound in Section \ref{sec:theory}.
As an artifact of our proof techniques (see the proof of Theorem~\ref{thm:upper}), we are unable to present a cleaner theory for \speclore. %simply combining Algorithms~\ref{alg:init} and \ref{alg:local}.
%\nb{add more comments: it is numerically comparable to global initialization, and the need of separate initialization is more due to proof technique than anything else.}
As a remedy, the new comprehensive Algorithm~\ref{alg:provable} has two stages as well and combines both Algorithms~\ref{alg:init} and \ref{alg:local}, albeit not in a simple consecutive fashion.

The first part of Algorithm~\ref{alg:provable} (lines 2--7) does a separate initialization on each node by performing Algorithm~\ref{alg:init} on the network excluding node $i$, leading to a vector $\widehat\sigma^{(-i,0)}$.
It then applies Algorithm \ref{alg:local} on $\widehat\sigma^{(-i,0)}$ to obtain a refined estimate for node $i$, denoted by $\widehat\sigma^{(-i,0)}_i$.
The separate initializations dissolve an issue in the proof.
However, since each initialization could end up with a different permutation of community labels, %we have to align all the refined labels $\{\widehat\sigma^{(-i,0)}_i: i\in [n]\}$.
the second part of Algorithm \ref{alg:provable} (lines 8--11) aligns all label permutations with that of $\widehat\sigma^{(-1,0)}$.

Algorithm~\ref{alg:provable} has at most polynomial time complexity.
We do not emphasize its computational efficiency though, since we view it more as a proof device rather than a practical replacement of {\speclore} in the previous subsection.

\begin{algorithm}[!h]
\caption{A provable version of latent space model community detection method}
\label{alg:provable}
\begin{algorithmic}[1]
    \STATE {\bfseries Input:}
    Adjacency matrix: $A$; latent dimension $d$; number of clusters $k$.
    \FOR{$i \leftarrow 1$ \KwTo $n$} \label{alg:line-for}
	\STATE Let $A^{(-i)}\in \{0,1\}^{(n-1)\times (n-1)}$ be the matrix obtained from removing the $i$th row and the $i$th column of $A$;

	\STATE Apply Algorithm \ref{alg:init} on $A^{(-i)}$ to  obtain $\wh\sigma^{(-i,0)}\in [k]^{n-1}$;

	\STATE Augment $\wh\sigma^{(-i,0)}$ to a $n$-dimensional vector by inserting $0$ in the $i$th position;

	\STATE Update
	\begin{equation*}
		\wh\sigma_i^{(-i,0)}
		=\argmax_{u\in[k]} \frac{1}{|\{j:\widehat{\sigma}^{\sss (-i,0)}_j=u\}|}\sum_{\{j:\widehat{\sigma}^{\sss (-i,0)}_j=u\}} A_{ij}.
	\end{equation*}
\ENDFOR

\STATE Define $\wh\sigma_1 = \wh\sigma_1^{(-1,0)}$.

\FOR{$i \leftarrow 2$ \KwTo $n$}
	\STATE Let
	\begin{equation*}
		\wh\sigma_i = \argmax_{u\in [k]} \left| \{j:\wh\sigma^{(-1,0)}_j = u\}
		\cap \{j:\wh\sigma^{(-i,0)}_j = \wh\sigma_i^{(-i,0)} \}  \right|.
	\end{equation*}
\ENDFOR

		% \STATE For $i=1,\cdots,n$, let $A^{(-i)}\in \{0,1\}^{(n-1)\times (n-1)}$ and $X^{(-i)} \in \mathbb{R}^{(n-1)\times (n-1)}$ be the matrix obtained from removing the $i$th row and the $i$th column of $A$ and $X$, respectively.
		%
		% \STATE For $i=1,\cdots,n$, let
		% \begin{align*}
		%             \widehat{\sigma}_i=\arg\max_{u\in[k]} \frac{1}{|\{j:\widehat{\sigma}^{\sss (0)}_j=u\}|}\sum_{\{j:\widehat{\sigma}^{\sss (0)}_j=u\}} A_{ij}.
		% \end{align*}
\STATE {\bfseries Output:} $\wh{\sigma}= (\wh{\sigma}_1,\dots, \wh\sigma_n)^\top \in [k]^n$.
	\end{algorithmic}
\end{algorithm}

%!TEX root = ../main.tex

\section{Theoretical results}
\label{sec:theory}

In this section, we present decision theoretic results for Algorithm \ref{alg:provable} on model \eqref{eq:model}.
We focus on the balanced two community case.
In other words, we consider the case where $k=2$ and the two communities have roughly equal sizes.
%Despite the obvious numerical disadvantages
The need to consider Algorithm \ref{alg:provable} is due to proof technique, and we show in later sections that there is little numerical difference between its accuracy and that of \speclore in Section \ref{subsec:practical}.
% , which is just Algorithm \ref{alg:local} initialized by Algorithm \ref{alg:init}.

\subsection{A decision-theoretic framework}

We shall establish uniform high probability error bounds for Algorithm \ref{alg:provable}.
To this end, we first define classes of models for which uniform error bounds are to be obtained.

\paragraph{Uniformity class}
%\fn{Remember to put a remark that without loss of generality, mutais mutandis, that we may assume the covariance matrix of latent positions is identity.}
Let the adjacency matrix be $A = (A_{ij}) = A^\top \in \{0,1\}^{n\times n}$.
%In addition, suppose that we observe an edge covariate matrix $X = (X_{ij}) = X^\top\in \reals^{n\times n}$.
Given a \emph{deterministic} community label vector $\sigma\in [2]^n$,
we suppose that the edges are generated in the following way:
\begin{equation}
	\label{eq:model-1}
\begin{aligned}
	% \mu      & \sim N_k(0, \mu_0^2 I_k),\\
	% \sigma_i & \stackrel{iid}{\sim} \text{Rademacher},\quad i\in [n],\\
	\alpha_i & \stackrel{iid}{\sim} F_\alpha, \quad
	z_i      \stackrel{ind}{\sim} F_{z,\sigma_i},
	\quad i\in [n],\\
	A_{ij} = A_{ji} \,|\, \alpha_i,\alpha_j,z_i, z_j & \stackrel{ind}{\sim} \text{Bernoulli}(P_{ij}),\quad  i,j\in [n], \quad \text{where}\\
	& ~~~~~\text{logit}(P_{ij}) = \alpha_i + \alpha_j +  z_i^\top H z_j .
\end{aligned}
\end{equation}
% For simplicity, we could even start with the case where all $\alpha_i = 0$.
Here $F_\alpha$ is a distribution from which the $\alpha_i$'s are generated,
and $H$ is a symmetric $n\times n$ matrix.
%  and $\{F_{z,j}:j\in [k] \}$ is a collection of distributions that generate $z_i$'s according to values of $\sigma_i$'s.
% In addition,
%In the rest of this manuscript, it is helpful to assume that the upper triangular part of $X$ is an instance of some distribution $F_{X,n}$ where the subscript $n$ indicates that the dimension changes with $n$.
% \begin{comment}
% \fn{Write $\calL_z$ as the marginal distribution of $z$ without conditioning on the label $\sigma$.
% Write $\mu_i$ as the mean of $z$ under $\calL_i$.
% Write $M_{i,j} = \mu_i^\top H \mu_j$.
% In the symmetric Gaussian case, $z \sim \calL_z = \frac{1}{2} N(\mu, \tau^2 I_d) + \frac{1}{2} N(-\mu, \tau^2 I_d) $.
% In this case, write $M = \mu^\top H \mu$.
% Write $\calL_\alpha$ as the distribution of $\alpha$.
% Write $\bar{\alpha}$ as the expectation of $\alpha$ under $\calL_\alpha$ and $w_i = \alpha_i - \bar{\alpha}$.
% Write $\calL_X$ as the distribution of $X$.
% Write $h(z_i, z_j) = z_i^\top H z_j$.
% }
% \end{comment}
%\nb{to be further explored: inclusion of $X$, general $h$ instead of inner-product}
% For theoretical results, we focus on the following balanced two community case.
% As we have introduced earlier, each generative model is defined by a community label vector $\sigma\in [2]^n$, a distribution $F_\alpha$ that generates the $\alpha_i$'s,  and
The two distributions $\{F_{z,j}: j=1,2\}$ generate each latent position $z_i$ depending on the value of $\sigma_i$.
For most of theoretical results below, we further assume that
\begin{equation}
	\label{eq:model-2}
	F_{z,j} \stackrel{d}{=} N_d( (-1)^{j-1} \mu, \tau^2 I_d),
	\quad j=1,2.
\end{equation}
In other words, we assume that the latent positions within each community are generated according to an isotropic multivariate Gaussian distribution with shared covariance structure\footnote{If we start with $F_{z,j} \stackrel{d}{=} N_d( (-1)^{j-1} \mu, \tau^2 \Sigma)$ for some positive definite matrix $\Sigma$. Then we can rewrite model \eqref{eq:model-1} -- \eqref{eq:model-2} with $H$ replaced by $\widetilde{H} = \Sigma^{1/2}H \Sigma^{1/2}$, $\mu$ by $\widetilde\mu = \Sigma^{-1/2}\mu$ and $F_{z,j}$ by $\widetilde{F}_{z,j} \stackrel{d}{=} N_d( (-1)^{j+1} \widetilde\mu, \tau^2 I_d)$. Therefore, the assumption of a covariance matrix proportional to identity does not impose any more restriction than that the two latent variable distributions corresponding to the two communities share the same covariance structure.}\ and different mean vector depending on the community label.
Here and after, $I_d$ is the $d\times d$ identity matrix.
For identifiability of $\mu$, $\tau$ and $H$, we assume that
\begin{equation}
	\label{eq:H-eig}
	\|H\|_2 = 1.
\end{equation}

In what follows, we denote such a model by $\calM_n(\sigma,H,\mu,\tau,F_\alpha)$.
For each $\sigma\in [2]^n$ and each $j\in [2]$, let $n_j = n_j(\sigma) = |\{i:\sigma_i = j\}|$.
The uniformity classes of interest are of the form
\begin{equation}
\label{eq:para-space}
\calP_n(H, \mu,\tau, F_\alpha) =
\left\{
\calM_n(\sigma, H,\mu,\tau, F_\alpha):
n_j(\sigma)\in \left[(1 - \delta_n)\frac{n}{2}, (1+\delta_n)\frac{n}{2} \right],
~~~j=1,2
\right\},
\end{equation}
where $\delta_n = o(1)$ is some vanishing sequence.
In the rest of this section, we treat $H$ and $\mu$ as fixed parameters, while $\tau$ and $F_\alpha$ scale with $n$.

\paragraph{Estimation and loss function}
Our goal is to estimate the community labels $\sth{\sigma_i:i\in [n]}$ based on the observed adjacency matrix $A$.
Since permutation of community labels does not change the partition of nodes,
we use the following misclustering proportion as the loss function:
\begin{equation}
\label{eq:loss}
\ell({\sigma}, \widehat\sigma) = \min_{\pi \in S_2} \frac{1}{n}\sum_{i=1}^n \indc{\widehat\sigma_i \neq \pi(\sigma_i)}.
\end{equation}
% We are to study the estimation of $\sigma$ under the minimax framework.

\subsection{Assumptions on model parameters}

%\nb{need serious proofreading}

For convenience of reference, we collect and explain various assumptions used in main results here.

% {\red Important notation:\\
% $\optrate$ refers to LRT optimal testing error rate; \\
% $\countrate$ refers to test rate of edge counting.}

% \nb{To be cleaned up in the last round.}
% \nb{We need to state assumption on the $Z$ distribution here!}

\begin{assumption}
	\label{assump:alpha}
For $i\in [n]$,
$\alpha_i={\alphaavg}+\omega_i$,
with $\alphaavg$ deterministic, $\omega_i$ {\iid}~with $\expect[\omega_1] = 0$, \hs{$\expect[e^{2\omega_1}]\leq C$} for some constant $C>0$, and
\begin{equation}
	\label{eq:omegabar}
	-\underline{\omega}\le \omega_i\le \omegabar,
\end{equation}
where $\underline{\omega}>0$ is a constant but $\omegabar$ is allowed to grow to $\infty$ with $n$.
As $n\to \infty$, ${\alphaavg}$ and $\omegabar$ jointly satisfy all of the following conditions
\begin{gather}
%{\alphaavg}\to -\infty
%% , \quad \omegabar & \to \infty,
%\quad \mbox{and} \quad
{\alphaavg}+\omegabar  \to  -\infty, \label{eq:alpha1} \\
%n e^{2\alphaavg-\omegabar}  \to  \infty. \label{eq:alpha2} \\
\hs{n e^{2\alphaavg}/ \sqrt{\log n} \to \infty, \label{eq:alpha4}} \\
\hs{e^{\omegabar} \big/ \min \big\{n e^{2\alphaavg}, n/\log n \big\} \to 0. \label{eq:alpha5} }
\end{gather}
Furthermore, for some constants $\overline{L}>0$ and $C_1>0$, the empirical fourth moment of $e^{\omega_i}$ satisfies the condition
\begin{align}
    \Prob\left(\Bigl(\frac{1}{n_u}\sum_{\sigma_i=u}e^{4\omega_i}\Bigr)^{1/4} >  \overline{L}\right) & \le n^{-(1+C_1)}, \quad \text{\ for\ } u \in [2]. \label{eq:alpha3}
\end{align}

\end{assumption}

%({\color{blue} to rewrite under the new assumptions}.)
%\hs{
    In this overarching assumption on $F_\alpha$, equation \eqref{eq:alpha1} ensures that the network is sparse in the sense that the maximum degree scales at an $o(n)$ rate.
%while \eqref{eq:alpha2} quantifies a more precise growth control of the average degree interacted with the (maximal) variability of $\omega$.
%It allows the ratio of the maximal and minimal degrees to scale at some algebraic rate in $n$, that is, $n^c$ for some $c > 0$.
Equations \eqref{eq:omegabar} and \eqref{eq:alpha4} jointly imply that the minimum degree grows at a rate no slower than $\sqrt{\log n}$.
Equation \eqref{eq:alpha5} guarantees that the maximum degree grows at a slower rate than squared minimum degree.
Moreover, it imposes the restriction that the ratio of maximum over minimum degrees grows at a slower rate than $n/\log n$.
Finally, \eqref{eq:alpha3} puts some technical tail bounds on the empirical fourth moments of $e^{\omega_i}$ within each community.
% Add Comment on \eqref{eq:alpha3}.}
%}
%The boundedness assumption \eqref{eq:omegabar} on $\omega_i$'s is not as restrictive as it seems.
%If the original distribution of $\omega_i$'s is unbounded but has a fast-decaying tail, then we may apply a standard truncation argument in probability such that with high probability, the realized $\omega_i$'s are the same as their truncated counterparts and so \eqref{eq:omegabar} holds with some $\omegabar\to\infty$ as $n\to\infty$.
%Since our main result concerns high probability error rates, it shall continue to hold with this additional truncation argument in the proofs.

\begin{assumption}\label{assump:tau}
	There exists a positive constant $c$ such that $\tau \sqrt{\log n}\le c$.
\end{assumption}

Even if we directly observe the latent positions $\{z_i\}_{i=1}^n$, we always suffer the Bayes error for clustering two normal distributions with identical covariance structure.
Write $\bar{\Phi}(t) = \bbP( N(0,1) \ge t)$.
Under model \eqref{eq:model-1}--\eqref{eq:model-2}, simple calculation shows that the Bayes error  is at the rate %$\exp\{ -{2\|\mu\|^2}/{\tau^2} \}$ %\nb{double check!!}
\( \bar{\Phi}( \| \mu\|_2 / \tau) \lesssim \exp\bigl(- \|\mu\|_2^2/(2\tau^2)\bigr)\tau/\| \mu\|_2    \)
as $n\to\infty$.
Since $\mu$ is fixed,
by varying $c$, Assumption \ref{assump:tau} allows us to consider any case where the Bayes error scales at an $O(n^{-a})$ rate for any $a > 0$.

\begin{assumption}
	\label{assump:mu-H-mu}
For $H$ in \eqref{eq:model} and $\mu$ in \eqref{eq:model-2}, $\mu^\top H\mu>0$.
\end{assumption}

This is an assortativity assumption.
With this assumption, we make certain that, given the same $\alpha_i$ values, nodes within the same community are more likely to be connected than nodes from two different communities.

\begin{assumption}
For $H$ in \eqref{eq:model} and $\mu$ in \eqref{eq:model-2},
$\mu$ is an eigenvector of $H$ associated with some positive eigenvalue.
    \label{assump:mu-Sigma-H}
\end{assumption}

This assumption is a strengthened version of Assumption \ref{assump:mu-H-mu}.
It is trivially true when $H = I_d$ is the identity matrix.
We only need this assumption when minimax lower bounds are concerned.
%
% Needed in \textit{equivalence}.
{
\begin{remark}
\label{rem:example-mu-eigenvector}
We take the following simple example to see what Assumption~\ref{assump:mu-Sigma-H} entails.
  %Take $\Sigma=I_d$ and
Let
\(
%H = \diag(\underbrace{1,\dots,1}_{d_1}, \underbrace{-1, \dots -1}_{d-d_1}).
H = \diag(\boldsymbol{1}_{d_1}^\top, -\boldsymbol{1}_{d - d_1}^\top).
\)
The inner product defined by $H$ results in
\(
    P_{ij} = S(\alpha_i + \alpha_j + z_i^{\sss (1)} z_j^{\sss (1)} - z_i^{\sss (2)} z_j^{\sss (2)}  ),
\)
where the superscript $(1)$ and $(2)$ indicate the vector made of the first $d_1$ coordinates and the last $d - d_1$ coordinates of $z$, respectively.
%The eigenspace associated with the eigenvalue $1$ spans the first $d_1$ dimensions and the eigenspace of $-1$ spans the rest of $d - d_1$ dimensions.
Possible $\mu$'s, allowing the above argument to work, can take value in the $d_1$-dim.\ subspace such as 
\(
    %\mu = (\underbrace{\bigl(\mu^{\sss (1)}\bigr)^\top}_{d_1\text{-dim.}},\underbrace{0,\dots, 0}_{(d-d_1)\text{-dim.}})^\top.
    \mu = (\bigl(\mu^{\sss (1)}\bigr)^\top,\boldsymbol{0}_{d-d_1}^\top)^\top.
\)
This means the latent variable $z$ can be decomposed into two components, the \textit{signal} component $z^{\sss (1)}$ and the \textit{noise} component $z^{\sss (2) }$,
\[
    z = \begin{pmatrix}
        z^{\sss (1)} \sim \mu^{\sss (1)} + N_{d_1}(0, I_{d_1})\\
        z^{\sss (2) } \sim N_{d - d_1}(0, I_{d - d_1})
    \end{pmatrix}.
\]
The signal component enhances the clustering and the noise reduces signal-to-noise ratio.
In effect, this allows some additional flexibility in adding some noise in the latent variable. %, so perhaps it is not useless.
\end{remark}
}

\subsection{A closely related testing problem} 
\label{subsec:testing}

%!TEX root = ../main.tex

We first consider the following testing problem, 
which applies to slightly more general settings than the model setup that we usually take in the rest of the manuscript.

Suppose that we observe a network of size $2m+1$, with $m$ nodes $1, 2, \dots, m$ having known labels $1$ ($+$) and $m$ nodes ${m+1}, \dots, {2m}$ having  labels $2$ ($-$).
Suppose that node $0$ has the only unknown label $\sigma_0$. %\nb{$\sigma_0$ for notation consistency? decide later}.
%{%\color{blue}
Suppose that we have some base distribution $F$ with density $f$ and write $F_{\nu}$ as its shifted version by $\nu$ with density $f_{\nu}$, i.e., $f_{\nu}(z) = f(z - \nu)$.
In addition, we assume that for nodes in the first community, $z_i\stackrel{\text{iid}}{\sim} F_{\mu}$ and for those in the second, $z_i\stackrel{\text{iid}}{\sim} F_{-\mu}$.
%Denoted by $f_+$ and $f_-$.
%}
We proceed to consider testing the following hypotheses
\begin{equation}
	\label{eq:onenode-testing}
    H_0: \sigma_0 = 1,  \quad\text{versus} \quad H_1: \sigma_0 = 2.
\end{equation}
Let $A_{0,i} = 1$ if there is an edge between nodes $0$ and $i$, and otherwise $0$.
Under our modeling assumption, conditional on the realization of the $\alpha$'s and the $z$'s, $\{A_{0,i}:i=1,\dots, 2m\}$ are independent Bernoulli random variables with success probability
\(
    P_{0i}
	= S(z_0^\top H z_i + \alpha_0 + \alpha_i).
\)
Define $A_{0,+} = \sum_{i=1}^m A_{0,i}$ and $A_{0,-} = \sum_{i=m+1}^{2m} A_{0,i}$. %, which count numbers of edges from node $0$ to the two communities, respectively.

\subsubsection{Likelihood ratio test and edge counting}
%\subsubsection{Likelihood ratio test and edge counting}

% We consider the case where the success probability is defined as follows
The following lemma connects the likelihood ratio test for \eqref{eq:onenode-testing} and edge counting.
% Note that $H$ has to be symmetric but is not necessarily semi-positive-definite and therefore more general than the inner product proposed before.
%where $S(x) = e^x/(1+e^x)$ is the sigmoid function.

\begin{lemma}
    \label{lemma:likelihood-ratio-test-general-equiv}
    Consider the hypothesis testing problem \eqref{eq:onenode-testing}.
    Suppose that $f$ is symmetric about the origin, i.e., $f(z) = f(-z)$, and  that $f_\mu(z) > f_{-\mu}(z) $ on $\{ z : z^\top H \mu > 0\}$.
    Suppose that $\{ \alpha_i: 0 \le i \le n\}$ are i.i.d.
    Then the likelihood ratio test {which reject $H_0$ when the likelihood ratio of alternative over null is larger than $1$} is equivalent to the simple edge counting test where we reject $H_0$ when $A_{0,+} < A_{0,-} $.
\end{lemma}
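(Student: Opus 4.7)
The plan is to reduce the likelihood ratio to a comparison that is explicitly determined by $A_{0,+}$ versus $A_{0,-}$, by extracting a model symmetry from the three given hypotheses (symmetry of $f$, i.i.d.\ $\alpha_i$'s, and the strict domination $f_\mu>f_{-\mu}$ on $\{z^\top H\mu>0\}$). Writing
\begin{equation*}
p_s(A)=\int p(A\mid z_0,\alpha_0)\,f_{(-1)^{s-1}\mu}(z_0)\,g(\alpha_0)\,dz_0\,d\alpha_0,\qquad s=0,1,
\end{equation*}
where $p(A\mid z_0,\alpha_0)$ is the marginal likelihood with $(z_i,\alpha_i)_{i\ge 1}$ integrated out, the likelihood ratio is $\Lambda(A)=p_1(A)/p_0(A)$.

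The first step is to build the key symmetry identity
\begin{equation*}
p(A\mid z_0,\alpha_0)=p(A^{\pi}\mid -z_0,\alpha_0),
\end{equation*}
where $\pi$ is the involution swapping $\{1,\dots,m\}$ with $\{m+1,\dots,2m\}$. I would do this by a change of variable inside the latent integral: the map $(z_i,\alpha_i)\mapsto (-z_{\pi(i)},\alpha_{\pi(i)})$ preserves the latent prior (because $f$ is symmetric, so $f_\mu(-z)=f_{-\mu}(z)$, and the $\alpha_i$'s are i.i.d.), and sends the bilinear form $z_i^\top H z_j$ to $z_{\pi(i)}^\top H z_{\pi(j)}$ while flipping the sign of $z_0^\top H z_i$. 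Using $f_{-\mu}(z_0)=f_\mu(-z_0)$ and substituting $z_0\mapsto -z_0$ in the numerator of $\Lambda$ then gives $p_1(A)=p_0(A^{\pi})$, hence $\Lambda(A)=p_0(A^{\pi})/p_0(A)$.

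The second step is an odd--even decomposition. Writing $p_0(A)-p_0(A^{\pi})=\int p(A\mid z_0,\alpha_0)[f_\mu(z_0)-f_{-\mu}(z_0)]g(\alpha_0)\,dz_0\,d\alpha_0$, I would split the $z_0$-integral at the hyperplane $\{z_0^\top H\mu=0\}$, substitute $z_0\mapsto -z_0$ on the negative side, and apply the identity from the first step to obtain
\begin{equation*}
p_0(A)-p_0(A^{\pi})=\int_{z_0^\top H\mu>0}\bigl[p(A\mid z_0,\alpha_0)-p(A^{\pi}\mid z_0,\alpha_0)\bigr]\bigl[f_\mu(z_0)-f_{-\mu}(z_0)\bigr]g(\alpha_0)\,dz_0\,d\alpha_0.
\end{equation*}
By the hypothesis $f_\mu>f_{-\mu}$ on $\{z_0^\top H\mu>0\}$, the scalar weight $(f_\mu-f_{-\mu})g$ is strictly positive on the integration domain, so everything hinges on the sign of $p(A\mid z_0,\alpha_0)-p(A^{\pi}\mid z_0,\alpha_0)$ there.

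The third and decisive step is to show that for each $(z_0,\alpha_0)$ with $z_0^\top H\mu>0$ the sign of this difference matches $\mathrm{sign}(A_{0,+}-A_{0,-})$. I would do this by applying the same latent change of variable used in step one inside $p(A^{\pi}\mid z_0,\alpha_0)$ so that the internal-edge factors (those depending only on $B$ and on the prior of $z_{1:2m}$) line up identically between the two integrands, leaving only the node-$0$ factor $\prod_i p_i^{A_{0,i}}(1-p_i)^{1-A_{0,i}}$ versus $\prod_i p_i^{A^{\pi}_{0,i}}(1-p_i)^{1-A^{\pi}_{0,i}}$, where $p_i=S(z_0^\top H z_i+\alpha_0+\alpha_i)$. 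Writing the difference as $\prod_i(1-p_i)\cdot(e^{X}-e^{Y})$ with $X-Y=z_0^\top H\sum_{i\in S}z_i-z_0^\top H\sum_{i\in\pi(S)}z_i$ plus an $\alpha$-difference (where $S=\{i:A_{0,i}=1\}$), a direct computation shows that the expected value of $X-Y$ under the latent prior equals $2(A_{0,+}-A_{0,-})z_0^\top H\mu$. Combined with the exchangeability of the latents within each community, this forces the integrated sign to match $\mathrm{sign}(A_{0,+}-A_{0,-})$, completing the equivalence $\Lambda(A)>1\Leftrightarrow A_{0,+}<A_{0,-}$. The hard part is precisely this final sign argument: the pointwise integrand can oscillate in sign, and the conclusion really requires integrating against the (positive) weight coming from the aligned internal-edge factors together with the prior, so that only the ``signal'' term $(A_{0,+}-A_{0,-})z_0^\top H\mu$ survives on average.
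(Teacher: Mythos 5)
Your proof follows the same architectural outline as the paper's: condition on $(\alpha_0,z_0)$ and marginalize the other latents; exploit the symmetry $f_\mu(-z)=f_{-\mu}(z)$; fold the $z_0$-integral onto the half-space $\{z_0^\top H\mu>0\}$; and then determine the sign of a remaining difference of conditional likelihoods. Your detour through the relabeling $\pi$ and the identity $p_1(A)=p_0(A^\pi)$ is a harmless repackaging of the paper's observation that $g(\alpha_0,-z_0)$ is $g(\alpha_0,z_0)$ with the roles of $p$ and $q$ interchanged. Where the two proofs diverge is precisely at what you call ``the third and decisive step,'' and there your argument has a genuine gap.

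Concretely, after the change of variable you are left needing the sign of
\begin{equation*}
\int \prod_i (1-p_i)\,\bigl(e^X-e^Y\bigr)\, d\mu(z_{1:2m},\alpha_{1:2m}),
\end{equation*}
where $p_i=S(z_0^\top H z_i+\alpha_0+\alpha_i)$, $X=\sum_{i\in S}(z_0^\top H z_i+\alpha_0+\alpha_i)$, $Y=\sum_{i\in\pi(S)}(\cdots)$. You argue that because $\Expect[X-Y]=2(A_{0,+}-A_{0,-})\,z_0^\top H\mu$ has the right sign, ``exchangeability forces the integrated sign to match.'' That inference is not valid: the sign of $\Expect[X-Y]$ does not determine the sign of $\Expect[w\,(e^X-e^Y)]$ for a positive weight $w$ that correlates with $X$ and $Y$ (a positive mean for $X-Y$ can coexist with $\Expect[e^X]<\Expect[e^Y]$ through skewness, and the weight $\prod_i(1-p_i)$ itself depends on $z_i$). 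What actually closes the argument is not a linearization of the exponent but the \emph{product structure} of the prior: since the $(z_i,\alpha_i)_{i\ge 1}$ are independent and $p_i$ involves only $(z_i,\alpha_i)$, the integral above factorizes over $i$, each factor integrating to exactly $p=p(\alpha_0,z_0)$, $1-p$, $q=q(\alpha_0,z_0)$, or $1-q$. This gives
\begin{equation*}
p^{A_{0,+}}(1-p)^{m-A_{0,+}}q^{A_{0,-}}(1-q)^{m-A_{0,-}}-p^{A_{0,-}}(1-p)^{m-A_{0,-}}q^{A_{0,+}}(1-q)^{m-A_{0,+}},
\end{equation*}
whose sign is unambiguously that of $\bigl(\frac{p/(1-p)}{q/(1-q)}\bigr)^{A_{0,+}-A_{0,-}}-1$, positive exactly when $A_{0,+}>A_{0,-}$ on $\{z_0^\top H\mu>0\}$ (where $p>q$). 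That factorization is the paper's step; your sketch never carries it out and the substitute heuristic does not replace it. I also note a minor slip along the way: after the step-one change of variable the aligned node-$0$ factor is $\prod_j p_j(-z_0)^{A_{0,j}}(1-p_j(-z_0))^{1-A_{0,j}}$ rather than $\prod_i p_i^{A^\pi_{0,i}}(1-p_i)^{1-A^\pi_{0,i}}$; these agree after integration but not pointwise, and conflating them obscures the factorization you need.

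Finally, the version of the argument you gesture at with full-network data $A$ (so that the internal-edge factors must be ``lined up'') is strictly harder than the lemma as used in the paper: once internal edges are present the prior on the community latents no longer factorizes inside the integral (the internal edges couple the $z_i$'s), and the clean per-node factorization that drives the sign is lost. The paper's proof sidesteps this by working only with the data $\{A_{0,i}\}_{i=1}^{2m}$, for which the conditional likelihood given $(\alpha_0,z_0)$ is exactly a two-parameter Bernoulli product.
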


%\nb{A critical issue here is that the result ignores the presence of $X_{0,j}$, $j=1,\dots, 2m$.
%But in any case, this would guarantee optimality when there is no $X$ in \eqref{eq:model}.}

\begin{proof}
    %\fn{The proof may be moved out of the main text.}
To simplify notation, write $F_{-}(\cdot)$ and $F_{+}(\cdot)$ as shorthands of $F_{-\mu}$ and $F_\mu$, respectively, and $f_{-}(\cdot)$ and $f_{+}(\cdot)$ the corresponding densities.
Let $F_\alpha$ be the generating distribution of $\alpha$'s.
 % and $\pi_\alpha$ be its density, respectively.
Define the following quantities: %\nb{need to insert $H$}:
\begin{gather}
    \label{eqn:def-p-z0}
    p(\alpha_0, z_0) := \iint S(z_0^\top H z + \alpha_0 + \alpha) dF_\alpha(\alpha) dF_+(z), \\
    q(\alpha_0, z_0) := \iint S(z_0^\top H z + \alpha_0 + \alpha) dF_\alpha(\alpha) dF_-(z).
    \label{eqn:def-q-z0}
\end{gather}
% where $S(\cdot)$ is the sigmoid function.
Applying that $F_+$ and $F_-$ are symmetric about the origin, i.e., \( dF_+(z) = f(z-\mu) = f(-z + \mu) = dF_-(-z) \), we have
\(
    q(\alpha_0, z_0  = \iint S( - z_0^\top H z + \alpha_0 + \alpha ) dF_\alpha(\alpha) dF_+(z)  = p(\alpha_0, -z_0).
\)
%For any fixed realization of the $A_{0,i}$'s, we define
%where the second equality holds since $A_{0,i}\in \{0,1\}$ for $i=1,\dots, 2m$.
Conditioned on $z_0$ and $\alpha_0$, by Fubini's theorem, we obtain the conditional likelihood
% \begin{align*}
%     g(\alpha_0, z_0) & := \int \cdots \int \prod_{\substack{1 \le i \le m\\ A_i = 1}} S(z_0^\top H z_i + \alpha_0 + \alpha_i) \prod_{\substack{1 \le i \le m\\ A_i = 0}} \bigl( 1 - S(z_0^\top H z_i + \alpha_0 + \alpha_i) \bigr) \prod_{\substack{m+ 1 \le i \le 2m\\ A_i = 1}} S(z_0^\top H z_i + \alpha_0 + \alpha_i) \prod_{\substack{m+1 \le i \le 2m\\ A_i = 0}} \bigl( 1 - S(z_0^\top H z_i + \alpha_0 + \alpha_i) \bigr)  dF_\alpha(\alpha_1) \cdots dF_\alpha(\alpha_{2m}) dF_+(z_1) \cdots dF_+(z_m) dF_-(z_{m+1}) dF_-(z_{2m}) \\
%     & = \prod_{\substack{1 \le i \le m\\ A_i = 1}}
%     \int S(z_0^\top H z_i + \alpha_0 + \alpha_i) dF_\alpha(\alpha_i) dF_+(z_i) \\
%     & \qquad
%     \prod_{\substack{1 \le i \le m\\ A_i = 0}}
%     \int \bigl( 1 - S(z_0^\top H z_i + \alpha_0 + \alpha_i) \bigr) dF_\alpha(\alpha_i) dF_+(z_i) \\
%     & \qquad
%     \prod_{\substack{m+ 1 \le i \le 2m\\ A_i = 1}}
%     \int S(z_0^\top H z_i + \alpha_0 + \alpha_i) dF_\alpha(\alpha_i) dF_-(z_i) \\
%     & \qquad \prod_{\substack{m+1 \le i \le 2m\\ A_i = 0}}
%     \int \bigl( 1 - S(z_0^\top H z_i + \alpha_0 + \alpha_i) \bigr) dF_\alpha(\alpha_i) dF_-(z_i) \\
%     & = \bigl( p(\alpha_0, z_0)\bigr)^{A_{0,+}} \bigl( 1 - p(\alpha_0, z_0) \bigr)^{m - A_{0, +} }
%     \bigl( q(\alpha_0, z_0)\bigr)^{A_{0,-}} \bigl( 1 - q(\alpha_0, z_0) \bigr)^{m - A_{0, -} }.
% \end{align*}
\begin{equation*}
    g(\alpha_0, z_0) : = \bigl( p(\alpha_0, z_0)\bigr)^{A_{0,+}} \bigl( 1 - p(\alpha_0, z_0) \bigr)^{m - A_{0, +} }
    \bigl( q(\alpha_0, z_0)\bigr)^{A_{0,-}} \bigl( 1 - q(\alpha_0, z_0) \bigr)^{m - A_{0, -} }.
\end{equation*}
We may obtain $g(\alpha_0, -z_0)$ by plugging in $-z_0$ in the last display and noticing $p(\alpha_0, -z_0) = q(\alpha_0, z_0)$
\begin{equation*}
    g(\alpha_0, - z_0) = \bigl(q(\alpha_0, z_0)\bigr)^{A_{0,+}} \bigl( 1-q(\alpha_0, z_0) \bigr)^{m - A_{0,+}} \bigl( p(\alpha_0, z_0) \bigr)^{A_{0,-}} \bigl(1 - p(\alpha_0, z_0)\bigr)^{m - A_{0,-}}.
\end{equation*}
%With slight abuse of notation on $\Pi$ (there is no $x$ to be integrated here),
The full likelihood under $H_0$, denoted by $I_+$, minus the full likelihood under $H_1$, $I_-$, is
\begin{equation}
    \begin{aligned}
        I_+ - I_-
		& =  \iint g(\alpha_0, z_0) dF_\alpha(\alpha_0) dF_{+}(z_0) - \iint g(\alpha_0, z_0)dF_\alpha(\alpha_0) dF_{-}(z_0) \\
        & = \int\left[\int (g(\alpha_0, z_0) - g(\alpha_0, -z_0)) dF_\alpha(\alpha_0)\right] dF_{+}(z_0). % \\
        %& = \int \left[ \int \bigl( 1 - p(\alpha_0, z_0) \bigr)^{m - A_{0,+}} \bigl( 1 - q(\alpha_0, z_0) \bigr)^m \left[ \left(\frac{1 - S_{-}(z_0)}{S_{-}(z_0)}\right)^{D} - \left(\frac{1 - S_{-}(z_0)}{S_{-}(z_0)}\right)^{-D}  \right] dF_\alpha(\alpha_0) \right] dF_{+}(z_0),
    \end{aligned}
    \label{eqn:full-lhd-diff}
\end{equation}
% where $D = A_{0,+} - A_{0,-} $.
We define the above integrand inside the square brackets to be $G(z_0)$ and write $p$ and $q$ as shorthands of $q(\alpha_0, z_0)$ and $p(\alpha_0, z_0)$, respectively. So
\begin{align*}
    G(z_0) & : =  \int  \left\{ ( 1 - p )^{m}  ( 1 - q )^{m } \left[  \left(\frac{p}{  1 - p}\right)^{A_{0, + } } \left(\frac{q}{  1 - q}\right)^{A_{0, - }}  -  \left(\frac{q}{  1 - q}\right)^{A_{0, + } } \left(\frac{p}{  1 - p}\right)^{A_{0, - }}\right]\right\} dF_\alpha(\alpha_0).
\end{align*}
Moreover, since $p(\alpha_0, -z_0) = q(\alpha_0, z_0)$, we have
$G(-z_0) = - G(z_0)$.
If $A_{0,+}=A_{0, -}$, the preceding display is $0$ and \(I_+ = I_-\),
whence we may not differentiate between $H_0$ and $H_1$.
For the rest of this proof, we consider $A_{0,+} > A_{0,-}$.

%\nb{start here!}
We first note that on \( \{ z_0 : z_0^\top H \mu >0 \} \), by the monotonicity of \( S: x \mapsto e^x/(1+e^x)\),
\begin{align*}
    p(\alpha_0, z_0) & = \iint S\bigl(z_0^\top H (\mu + w) + \alpha_0 + \alpha \bigr) dF_\alpha(\alpha) dF(w) \\
    &  \ge \iint S\bigl(- z_0^\top H (\mu + w) + \alpha_0 + \alpha \bigr) dF_\alpha(\alpha) dF(w) = q(\alpha_0, z_0).
\end{align*}
The last equality comes from the symmetry of $f$ about the origin.
We further note that by the monotonicity of the mapping \( x \mapsto x/(1-x)  \) for $x \in (0,1)$, $p/(1-p) > q/(1-q)$ on $\{ z_0: z_0^\top H \mu >0\}$.
We obtain
\(
    \bigl(({p/(1-p))}/(({q/(1-q))}\bigr)^{A_{0,+} - A_{0,-}} > 1,
\)
whence we conclude that $G(z_0) > 0$ for $z_0$ such that $z_0^\top H \mu >0$.
Finally we have
\begin{equation*}
    \begin{aligned}
        I_+ - I_- & = \int_{z_0^\top H \mu >0} G(z_0) dF_{+}(z_0) + \int_{z_0^\top H \mu <0} G(z_0) dF_+(z_0) \\
            & = \int_{z_0^\top H \mu >0} G(z_0) dF_{+}(z_0) - \int_{z_0^\top H \mu > 0} G(z_0) dF_-(z_0) \\
            & = \int_{z_0^\top H \mu >0} G(z_0)\bigl(f_+(z_0) - f_-(z_0)\bigr) dz_0 > 0.
    \end{aligned}
\end{equation*}
The last inequality holds as $f_+(z_0) > f_-(z_0)$ on $\{z_0 : z_0^\top H \mu>0 \}$.
The proof is complete after applying the same argument to the case $A_{0,+} < A_{0,-}$, which implies $I_+ < I_-$. 
\begin{comment}
Therefore, we have established the equivalence of the following:
\begin{enumerate}
    \item The likelihood under $H_0$ is strictly greater than the likelihood under $H_1$.
    \item $\displaystyle \sum_{i=1}^m A_{0,i} > \sum_{i=m+1}^{2m} A_{0,i}$.
\end{enumerate}
\end{comment}
\end{proof}

\begin{remark}
If \( \mu \) is an eigenvector of $H$ associated with a positive eigenvalue $\lambda$ {as in Assumption \ref{assump:mu-Sigma-H}}, then the two hyperplanes $\{ z: z^\top H \mu = 0 \}$ and $\{ z: z^\top \mu =0 \}$ coincide, and for all \(z\) such that $z^\top  \mu >0$, $z^\top H \mu = \lambda z^\top \mu  >0 $.
\label{rem:more-on-eigenvector-mu}
\end{remark}

\begin{remark}
    If we can write the density as
    \( f(z) = r( \|z\|_2 )\)
    for % some positive definite matrix $\Sigma$ and
    some monotone decreasing function $r: \bbR_+ \rightarrow \bbR_+$ and $\mu$ is an eigenvector of $ H$ associated with some positive eigenvalue, the conditions on the density in Lemma~\ref{lemma:likelihood-ratio-test-general-equiv} are satisfied.
    \label{rem:more-on-f}
\end{remark}

In light of the above remarks, we arrive at the fundamental testing lemma for our setup. %by noting that $\{ z: z^\top H \mu \}$
Note that we only need $\alpha$'s being i.i.d.\ for Lemma~\ref{lemma:likelihood-ratio-test-equiv} to hold; here the distributional restrictions of $\alpha$ in \eqref{eq:omegabar}--\eqref{eq:alpha3} of Assumption~\ref{assump:alpha} are superflous.
\begin{lemma}
    %Consider the model prescribed in Section~\ref{subsec:model-assumption}.\
    Consider the testing problem in \eqref{eq:onenode-testing} with $F$ being $N_d(0, \tau^2 I_d )$. Suppose that Assumptions~\ref{assump:alpha} and \ref{assump:mu-Sigma-H} hold. % \nb{need to specify symmetry condition on the distribution of $\alpha$ and $X_{ij}$!}
    % we do not need any symmetry condition on $\alpha$, nor on $X_{ij}$.
    Then the likelihood ratio test for the above hypothesis testing problem \eqref{eq:onenode-testing} is equivalent to the simple edge counting test where we reject $H_0$ when $A_{0,+} < A_{0,-} $.
    \label{lemma:likelihood-ratio-test-equiv}
\end{lemma}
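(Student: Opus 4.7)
The plan is to deduce this lemma directly from Lemma~\ref{lemma:likelihood-ratio-test-general-equiv} by verifying that all of its hypotheses are met in the present setting. Three conditions need to be checked: (i) the base density $f$ is symmetric about the origin; (ii) $f_\mu(z) > f_{-\mu}(z)$ on the half-space $\{z : z^\top H \mu > 0\}$; and (iii) the $\alpha_i$'s are i.i.d.~across $i\in\{0,1,\dots,2m\}$. Condition (iii) is immediate from Assumption~\ref{assump:alpha}.

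For condition (i), I would simply note that the density of $N_d(0,\tau^2 I_d)$ is
\begin{equation*}
f(z) = (2\pi\tau^2)^{-d/2}\exp\!\pth{-\frac{\|z\|_2^2}{2\tau^2}},
\end{equation*}
which depends on $z$ only through $\|z\|_2$ and so is symmetric. In fact this puts us in the radial setting of Remark~\ref{rem:more-on-f} with $r(t)=(2\pi\tau^2)^{-d/2}e^{-t^2/(2\tau^2)}$ monotone decreasing.

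For condition (ii), I would invoke Assumption~\ref{assump:mu-Sigma-H} together with Remark~\ref{rem:more-on-eigenvector-mu}: since $\mu$ is an eigenvector of $H$ with some positive eigenvalue $\lambda$, one has $z^\top H \mu = \lambda\, z^\top \mu$, so the half-space $\{z: z^\top H\mu > 0\}$ coincides with $\{z : z^\top\mu > 0\}$. On this set,
\begin{equation*}
\|z-\mu\|_2^2 - \|z+\mu\|_2^2 = -4\, z^\top\mu < 0,
\end{equation*}
so $f_\mu(z) = f(z-\mu) > f(z+\mu) = f_{-\mu}(z)$ by the strict monotonicity of $t\mapsto e^{-t/(2\tau^2)}$.

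With (i)--(iii) in hand, Lemma~\ref{lemma:likelihood-ratio-test-general-equiv} applies and yields the equivalence between the likelihood ratio test and the edge counting test that rejects $H_0$ iff $A_{0,+} < A_{0,-}$. There is no real obstacle here: the lemma is a clean specialization of the general result, with Assumption~\ref{assump:mu-Sigma-H} serving precisely to align the half-space defined by $H$ with the half-space defined by $\mu$ (so that $f_\mu$ dominates $f_{-\mu}$ on the right side), and the isotropic Gaussian choice of $F$ guaranteeing both the symmetry of $f$ and the strict inequality needed in (ii).
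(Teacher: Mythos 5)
Your proof is correct and is exactly the paper's intended argument: the paper states the lemma "in light of the above remarks," meaning it specializes Lemma~\ref{lemma:likelihood-ratio-test-general-equiv} via Remark~\ref{rem:more-on-eigenvector-mu} (Assumption~\ref{assump:mu-Sigma-H} aligns the half-space $\{z^\top H\mu>0\}$ with $\{z^\top\mu>0\}$) and Remark~\ref{rem:more-on-f} (the isotropic Gaussian density is radially decreasing, giving both symmetry and the strict ordering $f_\mu>f_{-\mu}$ on that half-space), with i.i.d.\ $\alpha$'s supplied by Assumption~\ref{assump:alpha}. You have simply spelled out the verification the paper leaves implicit.
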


\subsubsection{Error rates for edge counting}

Let $\countrate$ be the probability of making Type I+II errors of the test that rejects $H_0$ in \eqref{eq:onenode-testing} when $A_{0,+} < A_{0,-}$ with $F$ being $N_d(0, \tau^2 I_d)$.
% We now derive the following matching lower and upper bounds for testing \eqref{eq:onenode-testing} based on edge counting.
For any fixed $\alpha_0$ and $z_0$, let $p(\alpha_0, z_0)$ and $q(\alpha_0, z_0)$ be defined as in \eqref{eqn:def-p-z0} and \eqref{eqn:def-q-z0} respectively, and let
\begin{equation}
	\label{eq:renyi-div}
I(\alpha_0,z_0)=-2\log \left(\sqrt{p(\alpha_0,z_0)q(\alpha_0,z_0)}+\sqrt{(1-p(\alpha_0,z_0))(1-q(\alpha_0,z_0))}\right)
\end{equation}
be the R\'{e}nyi divergence of order $\frac{1}{2}$ between two Bernoulli distribution Bernoulli$(p(\alpha_0,z_0))$ and Bernoulli$(q(\alpha_0,z_0))$.
The projection distance from $\mu $ to the hyperplane $\{ z: z^\top H \mu =0\}$ is then % \( \rho={\mu^\top H\mu}/{\sqrt{\mu^\top H^2\mu}}\).
\begin{align}
    \rho :=\frac{\mu^\top H\mu}{\sqrt{\mu^\top H^2\mu}}.
\label{eq:rho}
\end{align}
Furthermore, for any positive integer $n$ and any fixed $\epsilon > 0$, define
\begin{align}
% \nu_n(H,\mu)\le &
\countrateupper & =
\Expect^{\alpha_0,z_0}_{H_0}
\left[\indc{ z_0\in \calB_\epsilon } \exp\left\{-\frac{n}{2}(1-\epsilon)I(\alpha_0,z_0)\right\}\right] +
\exp\left\{-(1-\epsilon)\frac{\rho^2}{2\tau^2}\right\},
\label{eq:rate-upper}
\\
% \nu_n(H,\mu)\ge &
\countratelower & =
\Expect^{\alpha_0,z_0}_{H_0}
\left[\indc{ z_0\in \calB_\epsilon } \exp\left\{-\frac{n}{2}(1+\epsilon)I(\alpha_0,z_0)\right\}\right] +
\exp\left\{-(1+\epsilon)\frac{\rho^2}{2\tau^2}\right\},
\label{eq:rate-lower}
\end{align}
where $\mathcal{B}_\epsilon=\{z_0: \|z_0-\mu\|_{2}\le \sqrt{1-\frac{\epsilon}{4}}\rho\}$ and the notation $\Expect^{\alpha_0,z_0}_{H_0}$ means taking expectation over $\alpha_0$ and $z_0$ when the null hypothesis in \eqref{eq:onenode-testing} is true.
{Note that we have $\overline{\nu}_n^{0} = \underline{\nu}_n^{0}$ if we generalize both \eqref{eq:rate-upper} and \eqref{eq:rate-lower} to allow $\epsilon = 0$.}
% Note that $\overline{\nu}_n^{0}(H, \mu)$ and $\underline{\nu}_n^{0}(H,\mu)$ denotes $\overline{\nu}_n^{\epsilon}(H,\mu)$ and $\underline{\nu}_n^{\epsilon}(H,\mu)$ evaluated at $\epsilon = 0$, respectively.
{
There are two terms in both \eqref{eq:rate-upper} and \eqref{eq:rate-lower}. 
The first term involving the R\'{e}nyi divergence has previously appeared in the blockmodel community detection literature. 
It reflects the average influence on signal-to-noise ratio from the difference in Bernoulli sampling probabilities of edges connecting nodes within the same or between two different communities.
Since the Bernoulli sampling probabilities depend on the realized latent positions, the term collects indirect influence on signal-to-noise ratio from the latent space.
The second term depends on the distributions of $z$'s and the quadratic form matrix $H$ only, and it sums up the direct influence on signal-to-noise from the latent space.
}

With the foregoing definitions, the following lemma controls $\countrate$ from both sides.

\begin{lemma}
\label{lem:edgecount-rate}
Suppose that Assumptions \ref{assump:alpha} and \ref{assump:mu-H-mu} hold.
Let $n=2m+1$ and that $z_i\stackrel{\text{iid}}{\sim} N_d(\mu, \tau^2 I_d)$ for $i=1,\dots, m$ and $z_i\stackrel{\text{iid}}{\sim} N_d(-\mu, \tau^2 I_d)$ for $i=m+1,\dots, 2m$, where $\tau\to 0$ as $n \to \infty$.
Further assume that $\countrateuppereps{0}\to 0$ as $n\to\infty$,
then for any $\epsilon \in (0,1/2)$,
% let $\mathcal{B}_\epsilon=\{z_0: \|z_0-\mu\|_{2}\le \sqrt{1-\frac{\epsilon}{4}}\rho\}$. Then
there is an $n_\epsilon$ such that for all $n>n_\epsilon$,
\begin{equation}
	\label{eq:rate-sandwich}
	\countratelower \le
	\nu_n \le
	\countrateupper.
\end{equation}
% \begin{align}
% \nu_n(H,\mu)\le &
% \countrateupper =
% \Expect^{\alpha_0,z_0}_{H_0}
% \left[\indc{ z_0\in \calB_\epsilon } \exp\left\{-m(1-\epsilon)I(\alpha_0,z_0)\right\}\right] +
% \exp\left\{-(1-\epsilon)\frac{\rho^2}{2\tau^2}\right\},
% \label{eq:rate-upper}
% \\
% \nu_n(H,\mu)\ge &
% \countratelower =
% \Expect^{\alpha_0,z_0}_{H_0}
% \left[\indc{ z_0\in \calB_\epsilon } \exp\left\{-m(1+\epsilon)I(\alpha_0,z_0)\right\}\right] +
% \exp\left\{-(1+\epsilon)\frac{\rho^2}{2\tau^2}\right\}
% \label{eq:rate-lower}
% \end{align}
% where $I(\alpha_0,z_0)=-2\log \left(\sqrt{p(\alpha_0,z_0)q(\alpha_0,z_0)}+\sqrt{(1-p(\alpha_0,z_0))(1-q(\alpha_0,z_0))}\right)$ with $p(\alpha_0,z_0)$ and $p(\alpha_0,z_0)$ defined in \eqref{eqn:def-p-z0} and \eqref{eqn:def-q-z0} respectively.
\end{lemma}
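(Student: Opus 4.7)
By the exchange symmetry of model \eqref{eq:model-1}--\eqref{eq:model-2} between the two communities, the Type I and Type II errors of the edge-counting rule coincide, so it suffices to analyze $\nu_n=\bbP_{H_0}(A_{0,+}<A_{0,-})$ with $z_0\sim N_d(\mu,\tau^2 I_d)$. Marginalizing over the independent pairs $(\alpha_i,z_i)_{i=1}^{2m}$ shows that, conditional on $(\alpha_0,z_0)$, $A_{0,+}$ and $A_{0,-}$ are independent binomials with parameters $(m,p(\alpha_0,z_0))$ and $(m,q(\alpha_0,z_0))$ respectively. I would then split the outer expectation by the indicator of $\{z_0\in\calB_\epsilon\}$; this partition mirrors the two summands in $\countrateupper$ and $\countratelower$. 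Since $\calB_\epsilon=\{z:\|z-\mu\|_2\le\sqrt{1-\epsilon/4}\,\rho\}$ lies strictly inside the good half-space $\{z:z^\top H\mu>0\}$ and $\mu$ is the mean of $z_0$ under $H_0$, the two summands capture respectively the binomial tail against a drifting mean and the rare event that $z_0$ escapes toward the wrong side of the hyperplane $\{z:z^\top H\mu=0\}$.

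For the upper bound on the $\calB_\epsilon$ contribution, I would invoke the Bhattacharyya/Chernoff bound
\[
\bbP\bigl(A_{0,+}<A_{0,-}\,\big|\,\alpha_0,z_0\bigr)\;\le\;\bigl(\sqrt{pq}+\sqrt{(1-p)(1-q)}\bigr)^{2m}\;=\;\exp\bigl(-m\,I(\alpha_0,z_0)\bigr).
\]
Because $m=(n-1)/2\ge (n/2)(1-\epsilon)$ for large $n$, this absorbs into $\exp(-\tfrac{n}{2}(1-\epsilon)I(\alpha_0,z_0))$, and integration against the $H_0$-law reproduces the first summand of $\countrateupper$. On $\calB_\epsilon^c$ I would use the trivial bound $\bbP(\cdot\mid\cdot)\le 1$ together with a standard $\chi^2_d$ tail bound for $\|z_0-\mu\|_2^2/\tau^2$, which yields $\bbP_{H_0}(z_0\notin\calB_\epsilon)\le\exp(-(1-\epsilon)\rho^2/(2\tau^2))$ after absorbing the polynomial prefactor in $\rho/\tau$ into the $(1-\epsilon)$ slack; the assumption $\countrateuppereps{0}\to 0$ forces $\rho/\tau\to\infty$, making this absorption legitimate.

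The lower bound mirrors this architecture. On $\calB_\epsilon$ I would apply a Bahadur--Rao-style reverse Chernoff estimate, giving $\bbP(A_{0,+}<A_{0,-}\mid\alpha_0,z_0)\ge c_n\exp(-m\,I(\alpha_0,z_0))$ with $c_n$ decaying only polynomially in $n$; Assumption~\ref{assump:alpha} confines $\alpha_0\in[\alphaavg-\underline{\omega},\alphaavg+\omegabar]$ and $\calB_\epsilon$ confines $z_0$, so the tilted Bernoulli parameters stay in a regime where a local limit theorem holds uniformly. For the Gaussian contribution I would restrict to the sub-event $D_r=\{(z_0-\mu)^\top H\mu\le -r\,\mu^\top H\mu\}\subset\calB_\epsilon^c$ for small $r>0$. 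On $D_r$, $q-p$ has a quantitative positive margin, so $\bbP(A_{0,+}<A_{0,-}\mid\alpha_0,z_0)$ is bounded below by a positive constant (the conditional mean of $A_{0,-}-A_{0,+}$ dominates its standard deviation under \eqref{eq:alpha4}). Since $(z_0-\mu)^\top H\mu/\|H\mu\|_2\sim N(0,\tau^2)$ with $\mu^\top H\mu/\|H\mu\|_2=\rho$, a classical one-sided Gaussian tail lower bound yields $\bbP(D_r)\gtrsim(\tau/\rho)\exp(-(1+r)^2\rho^2/(2\tau^2))$; choosing $r$ and $\epsilon$ small and $n$ large absorbs all polynomial factors and produces the second summand of $\countratelower$.

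The main obstacle will be the Bahadur--Rao-type reverse Chernoff on $\calB_\epsilon$: the pointwise statement is classical, but it must be made uniform in $(\alpha_0,z_0)$ while keeping the prefactor subexponential in $n$. Concretely, I would write $\bbP(A_{0,+}<A_{0,-}\mid\alpha_0,z_0)\ge\bbP(A_{0,+}=k^*)\bbP(A_{0,-}=k^*+1)$ for the integer $k^*$ nearest the optimal Chernoff tilt, then bound each binomial point mass from below via Stirling with a $\Theta(1/\sqrt{m})$ prefactor. The product matches the Chernoff exponent up to a polynomial-in-$n$ loss, which is absorbable into the $(1+\epsilon)$ slack once $\countrateuppereps{0}\to 0$ is invoked. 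Assembling the four pieces delivers the sandwich \eqref{eq:rate-sandwich} for all $n$ exceeding an $n_\epsilon$ depending only on $\epsilon$ and the constants in Assumption~\ref{assump:alpha}.
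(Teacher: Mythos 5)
Your proposal follows essentially the same architecture as the paper's proof: reduce $\nu_n$ to a one-sided quantity under $H_0$ by symmetry, marginalize over the remaining $(\alpha_i,z_i)$ to turn $A_{0,+}$, $A_{0,-}$ into independent $\mathrm{Binom}(m,p(\alpha_0,z_0))$ and $\mathrm{Binom}(m,q(\alpha_0,z_0))$, split the outer expectation by $\indc{z_0\in\calB_\epsilon}$, run Chernoff on $\calB_\epsilon$, and control the complement with a Gaussian tail in $\rho/\tau$. A few places differ from or slightly miss the paper's argument. First, your opening claim that Type~I and Type~II errors ``coincide'' is not exactly right: by the exchange symmetry, $\Prob_{H_1}(A_{0,+}\ge A_{0,-})=\Prob_{H_0}(A_{0,+}\le A_{0,-})$, which differs from $\Prob_{H_0}(A_{0,+}< A_{0,-})$ by the tie probability; the paper handles this with the sandwich $\Prob_{H_0}(A_{0,+}\le A_{0,-})\le\nu_n\le 2\Prob_{H_0}(A_{0,+}\le A_{0,-})$, whose factor of $2$ is then absorbed into the $\epsilon$-slack. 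Second, for the latent-space contribution to the lower bound, the paper uses the cleaner event $\calL_-=\{z_0:z_0^\top H\mu<0\}\subset\calB_\epsilon^c$, on which $q>p$ and hence, by stochastic dominance of independent binomials, $\Prob(A_{0,+}\le A_{0,-}\mid\alpha_0,z_0)\ge\tfrac12$ with no concentration argument or use of \eqref{eq:alpha4}, followed directly by Mill's ratio. Your margin event $D_r$ accomplishes the same thing but, as written, $D_r=\{(z_0-\mu)^\top H\mu\le -r\mu^\top H\mu\}$ with small $r>0$ forces only $z_0^\top H\mu\le(1-r)\mu^\top H\mu$, which is still in the good half-space and is \emph{not} contained in $\calB_\epsilon^c$; the exponent $(1+r)^2\rho^2/(2\tau^2)$ you report is instead consistent with the event $\{z_0^\top H\mu\le -r\mu^\top H\mu\}$, i.e.\ $\{(z_0-\mu)^\top H\mu\le -(1+r)\mu^\top H\mu\}$, so you should fix the definition accordingly. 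Third, for the reverse Chernoff bound on $\calB_\epsilon$, the paper simply cites the computation from \cite{gao2015achieving,gao2018minimax}, which gives $\exp\{-m(1+\eta_2)I\}$ with $\eta_2=O(1/\sqrt{mI})$ and absorbs $\eta_2$ using the uniform lower bound $I(\alpha_0,z_0)\ge e^{2\alphaavg}\underline{C}$ on $\calB_\epsilon$ together with \eqref{eq:alpha4}; your self-contained Stirling/Bahadur--Rao plan is a legitimate substitute, and the uniformity concern you raise is handled exactly by that same lower bound on $I$. Finally, your direct Bhattacharyya bound $\Prob(A_{0,+}<A_{0,-}\mid\alpha_0,z_0)\le e^{-mI}$ is correct and arguably cleaner than the paper's $e^{-m(1+\eta_1)I}$ formulation for the upper bound. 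Net: same route, two small slips (the Type~I/II claim, the $D_r$ definition), and a slightly more elementary but more laborious treatment of the $\calB_\epsilon^c$ lower-bound term where the paper's stochastic-dominance shortcut is preferable.
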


%!TEX root = ../main.tex 

\subsection{Rates of convergence}

In this subsection, we present rates of convergence on errors of our initial and refined estimators.

\paragraph{Upper bounds}
% \paragraph{Error bounds on initialization}
\begin{comment}
Note that under Assumption \ref{assump:alpha},
\(
    | \{ i: \sigma_i \neq \wh\sigma_i \} | \le \sum_{\{ i: \sigma_i \neq \wh\sigma_i \}} e^{\omega_i - \underline{\omega}},
\)
whence 
\[
    \{ | \{ i: \sigma_i \neq \wh\sigma_i \} | \le \gamma n\} \supset 
    \biggl\{ \sum_{\{i:\sigma_i\neq \hsigma^0_i\}} e^{\omega_i}\le e^{-\underline{\omega}} \gamma n \biggr\}.
\]
\end{comment}
The following proposition gives upper bounds for estimators obtained from Algorithm \ref{alg:init}.

\hs{
\begin{proposition}
    %Let $k=2$ and $\calP_n = \calP_n(H,\mu,\tau, F_\alpha)$.
    Suppose that Assumptions \ref{assump:alpha}, \ref{assump:tau} and \ref{assump:mu-H-mu} hold. 
	Assume that the $n$ nodes have true labels $\sigma$, where $\sigma_i=1$ for $i=1,\cdots,n_1$, $\sigma_i=2$ for $i=n_1+1,\cdots, n$, 
	and $n_1, n_2 \in \bigl[ (1- \delta_n)/2, (1+\delta_n)/2\bigr]$.
	Let $\hat{\sigma}^0$ be the output of Algorithm \ref{alg:init}. 
    Then for any $\gamma>0$, some constant $C>0$ and all sufficiently large $n$, we have
	\begin{align*}
        \Prob(\ell(\sigma,\hat{\sigma}^0)\le \gamma) \geq \Prob\bigg(\sum_{\{i:\sigma_i\neq \hsigma^0_i\}} e^{\omega_i}\le e^{-\underline{\omega}} \gamma n \bigg)\ge 1-n^{-(1+2C)}.
	\end{align*}
% As a result,
% 	\begin{align*}
% 	\Prob(\ell(\sigma,\hat{\sigma}^0)\le \gamma)\ge 1-n^{-(1+2C)}.
% 	\end{align*}
	\label{prop:init-error}
\end{proposition}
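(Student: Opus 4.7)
The plan is to follow the standard template for analyzing spectral clustering on degree-corrected models, but with an extra layer that absorbs the latent-position randomness into an ``ideal'' low-rank surrogate.
The first inequality between the two probabilities is immediate: since $\omega_i \ge -\underline{\omega}$ by \eqref{eq:omegabar}, on the event $\{\sum_{\sigma_i \neq \wh\sigma^0_i} e^{\omega_i} \le e^{-\underline{\omega}} \gamma n\}$ we have
$|\{i:\sigma_i\neq \wh\sigma^0_i\}| \le e^{\underline{\omega}}\sum_{\sigma_i\neq \wh\sigma^0_i} e^{\omega_i - \underline{\omega}} \le \gamma n$ for the particular label alignment used in Algorithm~\ref{alg:init}, which majorizes the infimum over $S_2$ in \eqref{eq:loss}. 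Hence the work lies entirely in controlling the weighted count on the right.

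First I would introduce an ``ideal'' rank-$2$ matrix $P^*$ defined conditional on $\{\alpha_i\}$ and $\{\sigma_i\}$ only, with $P^*_{ij} = e^{\alpha_i + \alpha_j + (-1)^{\Indc\{\sigma_i\neq \sigma_j\}}\mu^\top H\mu}$ (after using that in the sparse regime \eqref{eq:alpha1}, $S(x)\asymp e^x$). The normalized rows of $P^*$ take only two values $\tilde v^*_1, \tilde v^*_2$, separated in $\ell_1$ by an amount proportional to $\sinh(\mu^\top H\mu)$ which is bounded away from $0$ by Assumption~\ref{assump:mu-H-mu}. Next I would control three sources of error in Frobenius/row-norm:
\begin{enumerate}
\item[(a)] \emph{Bernoulli noise.} Conditional on $(\alpha, z)$, apply a Lei--Rinaldo-type bound (or Bandeira--van Handel) to get $\|A-P\|_2 \lesssim \sqrt{np_{\max}}$ where $p_{\max} \asymp e^{2\alphaavg + 2\omegabar}$, which is $o(ne^{2\alphaavg})$ by \eqref{eq:alpha5}.
\item[(b)] \emph{Low-rank truncation.} Since $\wh P$ is the best rank-$k$ approximation of $A$, $\|\wh P - P\|_F \le \sqrt{2k}\|A-P\|_2 + \|P - P^*\|_F$.
\item[(c)] \emph{Latent-position noise.} Concentrate $z_i^\top H z_j$ around $(-1)^{\Indc\{\sigma_i\neq \sigma_j\}}\mu^\top H\mu$ using Assumption~\ref{assump:tau} and Hanson--Wright, yielding $\|P-P^*\|_F \ll ne^{2\alphaavg}$ with probability $\ge 1-n^{-(1+2C)}$.
\end{enumerate}
Combining (a)--(c) and invoking \eqref{eq:alpha3} to control the scale of $\sum_i \|\wh P_i\|_1$, one obtains a global bound of the form $\sum_i \|\wh P_i - P^*_i\|_1^2/\|P^*_i\|_1 = o(n e^{2\alphaavg})$ on a good event of probability $\ge 1-n^{-(1+2C)}$.

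With this in hand, I would run the standard perturbation-to-clustering argument of \cite{gao2018community}: for each misclustered node $i$, the triangle inequality
\[
\|\tilde v^*_1 - \tilde v^*_2\|_1 \;\le\; \|\tilde P_i - \tilde v^*_{\sigma_i}\|_1 + \|\tilde P_i - \wh v_{\wh\sigma^0_i}\|_1 + \|\wh v_{\wh\sigma^0_i} - \tilde v^*_{\sigma_i}\|_1,
\]
together with the $(1+\kmbound)$-approximate $k$-median guarantee and a pigeonhole step matching permutation-aligned centers $\wh v_l$ to $\tilde v^*_l$, yields
$$
\bigl(\|\tilde v^*_1-\tilde v^*_2\|_1\bigr)^2 \sum_{\sigma_i\neq \wh\sigma^0_i} \|\wh P_i\|_1
\;\lesssim\; \sum_i \|\wh P_i\|_1\, \|\tilde P_i - \tilde v^*_{\sigma_i}\|_1^2,
$$
and the right side is $o(n e^{2\alphaavg})$ on the good event. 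Since $\|\wh P_i\|_1 \gtrsim e^{\alphaavg + \omega_i}\cdot n e^{\alphaavg}$ (up to constants, using \eqref{eq:alpha3} on the typical community sum), this gives $\sum_{\sigma_i\neq \wh\sigma^0_i} e^{\omega_i} = o(n)$, hence $\le e^{-\underline{\omega}}\gamma n$ for $n$ large.

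The main obstacle I anticipate is step (c): controlling $\|P-P^*\|_F$ uniformly across the latent-space randomness while the degree parameters $\omega_i$ may be unbounded above at rate $\omegabar$. The interplay between the sparse regime \eqref{eq:alpha1}--\eqref{eq:alpha5} and the tail bound \eqref{eq:alpha3} is delicate: one needs the typical row sum $\|P_i\|_1$ to remain comparable to $n e^{2\alphaavg} e^{\omega_i}$ uniformly in $i$, which is exactly where \eqref{eq:alpha4}--\eqref{eq:alpha5} are used to ensure concentration of row-sums and to make the Bernoulli-noise contribution negligible relative to the cluster separation. Everything else is bookkeeping along the Gao--Ma--Zhang clustering template adapted to the normalized $k$-median objective.
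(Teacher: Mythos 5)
Your proposal mirrors the paper's proof architecture closely: construct a rank-$2$ ideal matrix, bound $\fnorm{\wh P - (\text{ideal})}$ through an operator-norm concentration of $A$ and a latent-space concentration, then feed this into the weighted $k$-median clustering geometry from \cite{gao2018community}. The parts you do spell out — the first (trivial) inequality via $e^{\omega_i}\ge e^{-\underline{\omega}}$, step~(a), step~(b), and the clustering perturbation argument — all line up with the paper's Steps~2.1--2.5. Two remarks on where you diverge in detail. First, the paper's ideal matrix is $B_{ij}=e^{\alpha_i+\alpha_j}\xi_{ij}$ with $\xi_{ij}=\Expect[e^{z_i^\top H z_j}]$, not $e^{\alpha_i+\alpha_j\pm\mu^\top H\mu}$; using $\xi_\pm$ centers the latent-position fluctuation exactly, whereas your $P^*$ introduces an $O(\tau^2)$ multiplicative bias $\xi_+/e^{\mu^\top H\mu}=1+O(\tau^2)$ that you would have to absorb into the error budget (it is absorbable under Assumption~\ref{assump:tau}, but it is extra bookkeeping you do not mention). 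Second, the paper passes through the intermediate matrix $Q_{ij}=e^{\alpha_i+\alpha_j+z_i^\top H z_j}$, decomposing $\|A-B\|_2\le\|A-P\|_2+\|P-Q\|_2+\|Q-B\|_2$, which cleanly separates the sigmoid-to-exponential approximation from the latent-space fluctuation; you fold these together in step~(c).

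The real gap — which, to your credit, you flag yourself — is precisely step~(c). Concentrating the individual bilinear forms $z_i^\top H z_j$ via Hanson--Wright is not enough, because the object you need to control is $\sum_{i\neq j}\bigl(e^{z_i^\top H z_j}-\xi_{ij}\bigr)^2$ (which governs $\|Q-B\|_F^2$), and these $n(n-1)$ summands are \emph{not} independent: each $z_i$ appears in $n-1$ of them. Naively unioning Hanson--Wright tails over all pairs would cost a $\log(n^2)$ factor and would not give the $1/(\log n)^{1-\omegarate}$ rate the paper needs. The paper resolves this with a two-layer Bernstein argument: first, Bernstein over $j$ conditional on $z_i$ (the summands are then independent), and then Bernstein over $i$ for the conditional means $f_{i+},f_{i-}$, together with Lemma~\ref{lem:limit-mgf} to control the conditional moment generating functions of $e^{lz_i^\top Hz_j}$ on the truncation event $\mathbb{B}_\eta$. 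You would need to replicate this two-stage decoupling (or a martingale/concentration-of-U-statistics argument of similar strength) for your Hanson--Wright route to close; as written, step~(c) is a placeholder for the hardest half of the proof.
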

}

% \paragraph{Upper bound of Algorithm~\ref{alg:provable}}
The following theorem gives our main upper bounds on the output of Algorithm \ref{alg:provable}.
\begin{theorem}
	\label{thm:upper}
Let $k = 2$ and $\calP_n = \calP_n(H,\mu,\tau, F_\alpha)$.
Suppose that Assumptions~\ref{assump:alpha}, \ref{assump:tau} and \ref{assump:mu-H-mu} hold. 
For any $\epsilon \in (0,1/2)$, let $\countrateuppereps{\epsilon}$ be defined as in \eqref{eq:rate-upper}.
Suppose $\countrateuppereps{0}\to 0$ as $n\to\infty$.
Then for any fixed $\epsilon > 0$, the output $\wh{\sigma}$ of Algorithm \ref{alg:provable} satisfies
\begin{equation*}
\limsup_{n\to\infty} \sup_{\calP_n}
\Prob\left\{ \ell(\sigma,\wh{\sigma}) >  \countrateuppereps{\epsilon} \right\} = 0.
\end{equation*}
\end{theorem}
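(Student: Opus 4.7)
The plan is to argue node by node that the refinement step of Algorithm~\ref{alg:provable} mimics the oracle edge-counting test analyzed in Lemma~\ref{lem:edgecount-rate}, up to a vanishing correction coming from imperfect initial labels. The structural feature to exploit is that each leave-one-out initial estimate $\wh\sigma^{(-i,0)}$ depends only on $A^{(-i)}$, hence, conditionally on all latent variables, it is independent of the edges $\{A_{ij}\}_{j\ne i}$ that drive the majority vote for node $i$. This independence lets me condition on $\wh\sigma^{(-i,0)}$ and reduce the refinement decision to the one-versus-two test \eqref{eq:onenode-testing}, which by Lemma~\ref{lemma:likelihood-ratio-test-equiv} is solved optimally by edge counting. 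The technical price is a perturbation argument relating counts over estimated clusters to counts over true clusters.

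First, I would apply Proposition~\ref{prop:init-error} to each of the $n$ leave-one-out initializations with a fixed small $\gamma>0$ and union-bound, obtaining permutations $\pi^{(i)}\in S_2$ with $\sum_{\{j:\sigma_j\ne\pi^{(i)}(\wh\sigma^{(-i,0)}_j)\}}e^{\omega_j}\le e^{-\underline\omega}\gamma n$ simultaneously for all $i\in[n]$, outside an event of probability $O(n^{-C_1})$. I would then check that the alignment step (lines~8--11) correctly recovers $\pi^{(1)}\circ(\pi^{(i)})^{-1}$: each $\wh\sigma^{(-i,0)}$ agrees with the truth up to its own permutation on at least $(1-\gamma)(n-1)$ indices, so together with the balance constraint $n_j(\sigma)\in[(1-\delta_n)n/2,(1+\delta_n)n/2]$ in $\calP_n$, the overlap maximized in line~10 is uniquely attained at the correct pairing whenever $\gamma<1/4$. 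After alignment, the event $\{\wh\sigma_i\ne\pi^{(1)}(\sigma_i)\}$ reduces to the event that the refinement for $i$ errs under $\pi^{(i)}$.

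Second, fix $i$ and condition on $\wh\sigma^{(-i,0)}$ together with $\{\alpha_j,z_j\}_{j\ne i}$. Compare the estimated-cluster sums $\sum_{j:\wh\sigma^{(-i,0)}_j=u}A_{ij}$ with their oracle analogues $\sum_{j:\sigma_j=\pi^{(i)}(u)}A_{ij}$. The symmetric difference of the index sets contributes at most $\sum_{\text{mislabeled }j}A_{ij}$, whose conditional mean is of order $e^{\alpha_i}e^{\alphaavg}\gamma n$ by the Proposition~\ref{prop:init-error} event, while the typical oracle gap between the two sums concentrates around a positive multiple of $e^{\alpha_i}e^{\alphaavg}n\rho$ (with $\rho$ from \eqref{eq:rho}) on the event the oracle test would succeed. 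Bernstein bounds then control the fluctuations at scale $\sqrt{ne^{\alpha_i+\alphaavg}}$, which by \eqref{eq:alpha4} is $o$ of the signal. Choosing $\gamma$ small enough (but independent of $n$), the perturbation is dominated by the signal, and the refinement decision coincides with the oracle edge-count decision. Lemma~\ref{lem:edgecount-rate} then bounds the latter's error probability by $\countrateuppereps{\epsilon/2}$ for all sufficiently large $n$.

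Combining yields $\Prob\{\wh\sigma_i\ne\pi^{(1)}(\sigma_i)\}\le (1+o(1))\countrateuppereps{\epsilon/2}$ for every $i$, whence $\Expect[\ell(\sigma,\wh\sigma)]\le (1+o(1))\countrateuppereps{\epsilon/2}$. A Markov bound then gives
\begin{equation*}
    \Prob\{\ell(\sigma,\wh\sigma)>\countrateuppereps{\epsilon}\}\le (1+o(1))\,\frac{\countrateuppereps{\epsilon/2}}{\countrateuppereps{\epsilon}}\longrightarrow 0,
\end{equation*}
uniformly over $\calP_n$, since the hypothesis $\countrateuppereps{0}\to 0$ forces both exponents defining $\countrateuppereps{\cdot}$ to diverge and so the ratio decays as $\exp(-\Theta(\epsilon)\cdot\text{divergent})$. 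The main obstacle I anticipate is the second step: in the sparse regime governed by \eqref{eq:alpha1}--\eqref{eq:alpha5}, individual edge counts are only polylogarithmic per node, so the perturbation induced by a $\gamma$-fraction of mislabeled neighbors must be bounded in a way that couples cleanly to the R\'enyi-divergence rate $I(\alpha_0,z_0)$ appearing inside $\countrateuppereps{\cdot}$, without losing a constant factor in the exponent. This is precisely why Algorithm~\ref{alg:provable} uses a separate leave-one-out initialization for each node rather than reusing a single global one, and why the theorem is stated for Algorithm~\ref{alg:provable} and not for \speclore directly.
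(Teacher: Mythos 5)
Your overall architecture — leave-one-out initialization via Proposition~\ref{prop:init-error}, alignment of per-node permutations, reduction of the refinement decision for node~$i$ to the edge-counting test of Lemma~\ref{lem:edgecount-rate}, and a final Markov bound — matches the paper's strategy. However, there are two genuine gaps in the middle of the argument, and you have in fact flagged the first of them yourself without resolving it.

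The first gap is in your ``second step.'' You propose to show that, on a high-probability event, the estimated-cluster edge counts $\sum_{j:\wh\sigma^{(-i,0)}_j=u}A_{ij}$ and the oracle counts $\sum_{j:\sigma_j=\pi^{(i)}(u)}A_{ij}$ lead to the same decision, by arguing that the perturbation from the $\gamma$-fraction of mislabeled neighbors is dominated by the oracle signal. This would indeed prove consistency, but it does not give the error exponent $\countrateuppereps{\epsilon}$. The signal-versus-perturbation comparison loses a constant: if you bound the perturbation by, say, half the signal, the resulting Chernoff exponent shrinks by a constant factor, and the theorem demands an exponent within a multiplicative $(1-\epsilon)$ of the oracle's for \emph{arbitrary} $\epsilon$. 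The paper does not show the two decisions coincide; instead, in Proposition~\ref{prop:init-error-edge-counting} it bounds the error of the \emph{estimated-cluster} test directly via a Chernoff bound, decomposing the moment-generating function into four pieces $K_1,K_2,K_3,K_4$: $K_1$ carries the leading exponent $-\frac{n}{2}(1-\epsilon/2)I(\alpha_1,z_1)$, while $K_2,K_3,K_4$ — which collect the contributions of $\sum_{i\in J_{2}}P_{1i}-n_2 q$, $\sum_{i\in J_{1}}P_{1i}-n_1 p$, and $\sum_{i\in J_{12}}P_{1i}$, $\sum_{i\in J_{21}}P_{1i}$, $(n_u-m_u)$ — are each bounded by $\exp\{(\text{const}\cdot\epsilon)\,n I(\alpha_1,z_1)\}$ with constants that can be made as small as desired by tuning $\gamma$ and the concentration tolerance $\epsilon'$. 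This is exactly the ``coupling cleanly to the R\'enyi-divergence rate'' that you identify as the obstacle; your proposal does not supply the device that makes it work.

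The second gap is in your Markov bound. Proposition~\ref{prop:init-error-edge-counting} (and any argument of the type you sketch) yields a bound of the form $\Prob\{\wh\sigma_i\neq\pi(\sigma_i)\}\le\countrateuppereps{\epsilon'}+Cn^{-(1+C')}$, with the additive $n^{-(1+C')}$ coming from the probability that one of the high-probability events ($\mathbb{B}_\eta$, $\mathbb{C}_r$, $\mathbb{D}$, $\bbF_1$, the alignment) fails. Your display $\Prob\{\wh\sigma_i\neq\pi^{(1)}(\sigma_i)\}\le(1+o(1))\countrateuppereps{\epsilon/2}$ silently absorbs this additive term into the multiplicative $o(1)$, which is only valid when $\countrateuppereps{\epsilon}\gtrsim n^{-(1+C'/2)}$. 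When $\countrateuppereps{\epsilon}<n^{-1}$, the event $\{\ell(\sigma,\wh\sigma)>\countrateuppereps{\epsilon}\}$ is the same as $\{\ell(\sigma,\wh\sigma)>0\}$, and one must use a union bound directly — this is the paper's ``Case~2'' — giving $n\countrateuppereps{\epsilon}+2n^{-C'}=o(1)$. Your proposal never acknowledges this regime. Relatedly, you treat the assertion $\countrateuppereps{\epsilon/2}/\countrateuppereps{\epsilon}\to 0$ as automatic from divergence of the exponents; the paper spends its Subcases~1.1/1.2 showing this carefully because $\countrateuppereps{\cdot}$ is a sum of two terms with different exponents, and one must identify which dominates and handle the set $\calB_{\epsilon'}\setminus\calB_\epsilon$ on which the indicator changes.
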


% \begin{comment}
The high probability upper bound in Theorem \ref{thm:upper} consists of two terms as on the righthand side of \eqref{eq:rate-upper}. 
In view of the discussion following \eqref{eq:rate-upper}, the first term summarizes influence on the clustering error from the network signal, averaged over realizations of degree sequence and latent positions.
Hence we regard it as the network term.
The second term collects immediate influence on clustering error by signal from latent space as it depends only on $H$ and the latent position distributions, which could be viewed as the latent space term.
% \end{comment}

\paragraph{Lower bounds}
We conclude this section with the following minimax lower bounds when Assumption \ref{assump:mu-Sigma-H} holds, which implies Assumption \ref{assump:mu-H-mu}.
The lower bounds match the upper bounds in Theorem \ref{thm:upper} up to some arbitrarily small perturbation of the exponents.

%Recall the
\begin{theorem}
	\label{thm:lowbd}
Let $k=2$ and $\calP_n = \calP_n(H,\mu,\tau, F_\alpha)$.
Suppose that Assumptions \ref{assump:alpha}, \ref{assump:mu-H-mu} and \ref{assump:mu-Sigma-H} hold.
% Suppose for some $\epsilon_0 \in (0,1)$, $\countrateuppereps{\epsilon_0}\to 0$ as $n\to\infty$.
Suppose $\countrateuppereps{0}\to 0$ as $n\to\infty$.
For any $\epsilon \in (0,1/2)$, define $\countratelower$ as in \eqref{eq:rate-lower}, then the minimax risk satisfies
\begin{equation}
	\label{eq:minimax-lower}
	\inf_{\widehat\sigma} \sup_{\mathcal{P}_n} \Expect[\ell(\sigma,\widehat\sigma)]
	\gtrsim \countratelower.
\end{equation}
\end{theorem}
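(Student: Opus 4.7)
The plan is to reduce the minimax lower bound to a per-node binary hypothesis testing problem and then invoke Lemmas~\ref{lemma:likelihood-ratio-test-equiv} and~\ref{lem:edgecount-rate}: the first identifies the Bayes-optimal test as simple edge counting, and the second bounds its error from below by $\countratelower$.

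The first step is a Bayesian reduction. I endow $\sigma$ with a symmetric prior supported in $\calP_n$ (for example, uniform over balanced configurations in $[2]^n$ and their one-label flips, so that each $\sigma_i$ retains nontrivial conditional uncertainty), which reduces the minimax risk to a Bayes risk. I then apply an oracle argument: for each $i \in [n]$, replacing the estimator by one with access to $\sigma_{-i}$ can only decrease its per-node error. Conditional on $\sigma_{-i}$, the problem of estimating $\sigma_i$ from $A$ is precisely the one-node test~\eqref{eq:onenode-testing}, with reference-side sizes $m_1, m_2 \in \{\lfloor n/2 \rfloor - 1, \lfloor n/2 \rfloor\}$.

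Under Assumption~\ref{assump:mu-Sigma-H}, Lemma~\ref{lemma:likelihood-ratio-test-equiv} identifies the Bayes-optimal likelihood ratio test for this one-node problem with the simple edge counting rule, and Lemma~\ref{lem:edgecount-rate} lower bounds the Type~I+II error of edge counting by $\countratelower$. The residual asymmetry $|m_1 - m_2| \le 1$ contributes only a $(1+o(1))$ factor in the leading exponent $nI(\alpha_0,z_0)/2$ of~\eqref{eq:rate-lower} and is absorbed into the $\epsilon$ slack. Averaging the per-node bound over $i$ then delivers
$$
\inf_{\widehat\sigma}\,\Expect_\sigma\Expect\!\left[\tfrac{1}{n}\sum_{i=1}^n \indc{\widehat\sigma_i \neq \sigma_i}\right] \;\gtrsim\; \countratelower.
$$

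The last step is to pass from this Hamming-type bound to the permutation-minimized loss $\ell$. The plan is to exploit the label-swap symmetry of both the model and the prior: either the estimator $\widehat\sigma$ is aligned with $\sigma$ (so that $\ell(\sigma, \widehat\sigma) = \tfrac{1}{n}\sum_i \indc{\widehat\sigma_i \neq \sigma_i}$) or the label flip of $\widehat\sigma$ is aligned with $\sigma$, and by symmetry both cases contribute equally to the Bayes risk, costing at most a constant factor. The main obstacle of the proof lies precisely in this alignment step: one must avoid the circularity of using the unknown $\sigma$ to pick the permutation, which can be handled by anchoring via a data-only convention such as $\widehat\sigma_1 = 1$, or by restricting attention to the event that the estimator's Hamming error with respect to $\sigma$ is at most $1/2$ and observing that under the symmetric prior this happens with probability at least $1/2$. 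A secondary technicality is the mildly unbalanced $m_1 \neq m_2$ regime after oracle conditioning, which calls for a perturbative reading of Lemma~\ref{lem:edgecount-rate}; but the induced cost is only $O(I(\alpha_0,z_0))$ in the exponent and is dwarfed by the leading $nI/2$ term.
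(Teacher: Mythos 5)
Your high-level plan matches the paper's: reduce the minimax risk to a per-node binary test, invoke Lemma~\ref{lemma:likelihood-ratio-test-equiv} to identify the Bayes-optimal test as edge counting, and then lower-bound its error by $\countratelower$ via Lemma~\ref{lem:edgecount-rate}. However, your treatment of the permutation-minimized loss $\ell$ is a genuine gap, not a secondary technicality, and it is here that your route diverges from the paper's in a way that does not close.

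The paper pins down the global labeling before the testing reduction: it fixes a reference $\sigma^*$ and restricts to the subfamily $\mathcal{Z}_T$ of label vectors that agree with $\sigma^*$ on a set $T$ with $|T^c| = O(\delta' n)$. Every pair of parameters in $\mathcal{Z}_T$ is then within Hamming distance $O(\delta' n) \ll n/2$, so for any estimator with values in $\mathcal{Z}_T$ the identity permutation is automatically optimal and $\ell$ coincides with the plain Hamming loss. The price of this restriction is the $|T^c|/n \approx \delta'$ prefactor in \eqref{eq:ratio}, which the paper absorbs into the exponent slack via \eqref{eq:delta-require}. Your prior, by contrast, is label-flip symmetric (any uniform distribution over ``balanced configurations in $[2]^n$'' assigns $\sigma$ and $\bar\sigma := 3\mathbf{1} - \sigma$ equal mass). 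Under such a prior the Hamming Bayes risk is exactly $1/2$ for \emph{every} estimator, because $(\sigma, A)$ and $(\bar\sigma, A)$ are equidistributed. Your per-node oracle bound $\Prob(\widehat\sigma_i \neq \sigma_i) \gtrsim \countratelowereps{\epsilon}$ is thus true but vacuous (a weakening of $=1/2$), and it does not transfer to $\ell$: the $\ell$-loss is invariant under flipping $\widehat\sigma$, so an estimator that recovers the \emph{partition} perfectly but uses the ``wrong'' labels has Hamming error $\approx 1$ and $\ell \approx 0$. Writing $\ell = \text{Hamming} - (2\,\text{Hamming} - 1)_+$ makes it clear that a lower bound on $\Expect[\text{Hamming}]$ alone cannot control $\Expect[\ell]$.

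Your two suggested repairs do not close this. Forcing $\widehat\sigma_1 = 1$ constrains the estimator, not $\sigma$; under your prior $\sigma_1$ is still uniform, and the estimator with $\widehat\sigma_1 = 1$ that otherwise recovers the partition still has $\ell \approx 0$ while Hamming $\approx 1$ on the half of the prior mass where $\sigma_1 = 2$. Restricting attention to $\{\text{Hamming} \leq 1/2\}$ does have prior probability $\geq 1/2$ by the $\sigma \leftrightarrow \bar\sigma$ pairing, but the per-node oracle argument controls the unconditional per-node error, not $\Expect[\text{Hamming} \mid \text{Hamming} \leq 1/2]$; conditioning on an event defined through $\sigma$ and $\widehat\sigma$ jointly re-introduces the alignment circularity you flagged. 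The clean repair is essentially what the paper does: replace the flip-symmetric prior by a prior supported on $\mathcal{Z}_T$ (equivalently, take $\sup_{\sigma\in\mathcal{Z}_T}$), so that $\ell$ and Hamming agree by construction, and accept the $\delta'$ prefactor that this introduces.

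One smaller point: after the oracle conditioning the two reference communities have sizes $m_1, m_2$, and you assert $|m_1 - m_2| \leq 1$. With the paper's $\mathcal{Z}_T$ construction (and with the allowed imbalance $\delta_n$ in $\calP_n$) one only gets $|m_u - n/2| = O(\delta' n)$; the paper handles this by passing to the balanced size $\bar m$ via a data-processing step and absorbing the $1+O(\delta')$ multiplicative perturbation of the exponent into the gap between $\epsilon'$, $\epsilon''$, and $\epsilon$. That step is recoverable and you gesture at it, but it is not automatic from ``$|m_1 - m_2| \leq 1$.''
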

%\nb{double check whether $\lesssim$ and $\ll$ have been defined as notation!}
%\nb{need to double check what happens when the $X_{ij}$ independence assumption no longer holds!}

%!TEX root = ../main.tex

\section{Real data examples} 
\label{sec:realdata}

We now demonstrate performance of the proposed algorithm on some real data examples. 
More detailed comparison of Algorithm \ref{alg:provable} with Algorithms \ref{alg:init}+\ref{alg:local} and other methods on carefully constructed simulated examples can be found in Section \ref{sec:simulation} of the appendices.

We consider five datasets.
The first three datasets are Political Blog ($1222$ nodes, $16714$ edges, and $2$ communities) \cite{adamic2005political}, Simmons College ($1137$ nodes, $24257$ edges, and $4$ communities) and Caltech data ($590$ nodes, $12822$ edges, and $8$ communities) \cite{traud2011comparing,traud2012social}. 
For Simmons College and Caltech data, we followed the same pre-processing steps as in \cite{chen2015convexified}.
These datasets have been studied extensively in the blockmodel community detection literature.

The fourth dataset is 
a manufacturing company network from \cite{cross2004social},  
which was studied in \cite{feng2018nodal}.
Questions were asked to pairs of employees on their ties in work, and weights were assigned on a $0$--$6$ scale where higher weights correspond to closer ties.
% based on the following scale: 0 -- I do not know this person/I have never met this person; 1 -- very infrequently; 2 -- infrequently; 3 -- somewhat infrequently; 4 -- somewhat frequently; 5 -- frequently; and 6 -- very frequently.
Following \cite{feng2018nodal}, we used the weights to create an adjacency matrix: We set $A_{ij}=A_{ji}=1$ if and only if both edges from $i$ to $j$ and from $j$ to $i$ have weights larger than $3$. Otherwise, $A_{ij} = A_{ji} = 0$.
This resulted in an undirected network with $74$ nodes and $235$ edges.
Four communities were formed according to the ``location'' value of each node which is the most assortative among three available node attributes in this data.
% which consist of 77 employees from a research team in the company.  In addition, three attributes are provided for each employee, including the the location (1 -- Paris; 2 -- Frankfurt; 3 -- Warsaw; 4 -- Geneva), tenure (1 -- 1-12 months; 2 -- 13-36 months; 3 -- 37-60 months; 4 -- 61+ months), and the organizational level (1 -- global department manager; 2 -- local department manager; 3 -- project leader; 4 -- researcher).
%
%  Note that the adjacency matrix $A$ in slightly different from that in \cite{feng2018nodal}, who used 3 instead of 4 as the threshold. Our way leaves us with a sparser network, in which the average degree for the 70 nodes is 3.48. Among the three attributes, we found that the ``location'' was the most assortative, so we used it as the true community label. As a result, there are four different communities.

The fifth dataset is
a French high school friendship network \cite{barrat2015highschool}. 
This dataset recorded friendship relations and contacts among $329$ students in a Marseilles high school.
To construct an adjacency matrix, we took the first contact information which recorded active contacts between students during 20-second intervals of the data collection process over a measuring infrastructure. 
We set $A_{ij}=A_{ji}=1$ if and only if there were contacts recorded between $i$ and $j$.
The resulting network has $5818$ edges. 
Each student belonged to one of nine classes which we regarded as nine true communities.
% which we group into four clusters: the ``BIO'' cluster (classes ``2BIO1'',``2BIO2'' and ``2BIO3''), the ``MP'' cluster (classes ``MP*1'', ``MP*2'' and ``MP''), the ``PSI'' cluster (class ``PSI*'') and the ``PC'' cluster (classes ``PC'' and ``PC*''). The number of students in the four clusters are 111, 100, 34 and 84 respectively.
% Since the third cluster has far less students, we deleted it from the network to ensure we have a nearly balanced network. This leaves us with three clusters and $n=295$ nodes.

% \nb{2 methods in comparison}
In this section, we compare Algorithm \ref{alg:init} + one-round Algorithm \ref{alg:local} refinement ({\speclore}$_{R=1}$) and Algorithm \ref{alg:init} + ten-round Algorithm \ref{alg:local} refinement ({\speclore}$_{R=10}$) to LSCD in \cite{ma2017exploration} (initialized by Algorithm 3 in \cite{ma2017exploration} followed by Algorithm 1 in \cite{ma2017exploration} with $800$ iterations).
Algorithm \ref{alg:provable} has essentially the same level of accuracy as \speclore with $R=1$, which we illustrate in detail in Section \ref{sec:simulation}.
The LSCD methods functioned as the benchmark.
Comparison of LSCD to several other state-of-the-art methods on the first three datasets was already  conducted in \cite{ma2017exploration}. 
LSCD was shown to be a top performer, and so we omit comparison to other methods. 
We set latent space dimension equal to number of communities for LSCD.

% \nb{3 summary of results}
Table \ref{tab:three-datasets} presents performances of both versions of {\speclore} and those of LSCD in terms of accuracy and speed. 
For reported speed of \speclore, we have included time spent on spectral initialization.
In addition, it also reports accuracy of spectral initialization (Algorithm \ref{alg:init}).
On these five datasets, {\speclore}$_{R=10}$ and LSCD were comparable in terms of accuracy while {\speclore}$_{R=10}$ was significantly faster (and also slightly more accurate in most examples).
This is not surprising since it aims only at clustering nodes while LSCD fits all parameters.
\speclore$_{R=1}$ was the fastest due to a single round of refinement which incurred the cost of slightly inferior accuracy. 
However, it still notably improved the accuracy of spectral clustering.
All reported results were obtained on a Windows 7 PC with two Intel Xeon Processors (E5-2630 v3@2.40GHz) and $64$G RAM.

% All results were obtained on the same Windows 7 PC as in Section~\ref{sec:simulation}.

% \nb{fill in detailed comments on the results after running new code.}

% First we ran Algorithm \ref{alg:init} + \ref{alg:local} (denoted ``EdgeCount'') for the Political Blog, Simmons College and Caltech data sets which were used in \cite{ma2017exploration}. As a reminder, the political blog data set is taken from \cite{adamic2005political}, and the Simmons College and Caltech data sets are taken from \cite{traud2012social}. In all cases, we set the latent space dimension to be the same as the number of communities in the data set. We tried both $R=1$ (i.e. 1 round of local refinement) and $R=10$ (i.e. 10 rounds of local refinement), and compared the misclustering errors and computing times in seconds against the LSCD method. We summarize the results in Table \ref{tab:three-datasets}, in which the results for the LSCD method are consistent with those in \cite{ma2017exploration}. Same as Section \ref{sec:simulation}, the ``Initial'' column records the errors from Algorithm \ref{alg:init}. Although the error rates of initialization are high, the local refinement procedure can quickly improve upon the initial clustering after 1 round of improvement. After 10 rounds of improvement, the errors will become smaller than or equal to that from the LSCD method for all three data sets.

\begin{table}[htb]
	%\vskip 0.15in
	\begin{center}
		\begin{tabular}{c|c|rr|r|rr|rr}
			\hline
			\hline
			& & \multicolumn{2}{|c|}{LSCD} & Initial~ & \multicolumn{2}{|c|}{{\speclore}$_{R=1}$} & \multicolumn{2}{|c}{{\speclore}$_{R=10}$}  \\
			\hline
			Dataset & \# Clusters & error & time & error & error & time &  error & time \\
			\hline
			\hline
			%\abovespace
			Political blog & 2 & 4.91\% & 43.31 & 5.32\% & 4.66\% & 0.62 & 4.66\% & 0.97 \\
			
			Simmons& 4 & 11.87\% & 39.90 & 13.54\% & 11.61\% & 1.94 & 11.17\%  & 2.65 \\
			
			Caltech & 8 & 18.14\% & 11.85 & 21.69\% & 17.46\% & 0.87 & 14.58\% & 1.29 \\
			Company & 4 & 1.35\% & 0.83  & 5.41\%  & 2.70\%  &  0.01 & 1.35\%  & 0.02 \\
			High school & 9 & 0.61\% & 5.29  & 0.61\%  & 0.61\%  & 0.13  & 0.61\%  & 0.24 \\			
			\hline
			\hline
		\end{tabular}
	\end{center}
	\vspace{-0.15in} 
	\caption{A summary of performances on five datasets.
	Each ``error'' column reports proportions of misclustered nodes.
	Each ``time'' column reports runtime of the corresponding method in seconds (including initialization). 
	}
	\label{tab:three-datasets}
\end{table}

\section{Discussions}
\label{sec:discuss}
% here we discuss some possible future directions

In this paper, we study theoretical and empirical performances of a simple community detection algorithm in the context of sparse latent space models.
We establish consistency and derive rates of convergence of the method for sparse latent eigenmodels with two balanced communities. 
Under an additional eigenvector assumption (Assumption \ref{assump:mu-Sigma-H}), we further argue that our rate has sharp exponent in some minimax sense.
Although we have centered our theoretical investigations on balanced two community case, the method performs well empirically in more general scenarios.

We have focused on the case where one only observes a network structure among $n$ nodes. 
An important advantage of latent space models is the convenience to further include node and/or edge covariates \cite{hoff2002latent,ma2017exploration,levina2017}.
Though it is beyond the scope of the present paper, it is nonetheless desirable to understand how the presence of covariates could affect community detection on nodes.
Furthermore, whether there is covariate or not, it is of interest to explore information-theoretic limits and optimal algorithms for community detection when Assumption \ref{assump:mu-Sigma-H} fails.

\begin{comment}
The algorithm we consider was initially proposed in the blockmodel literature, and its performance on latent space models was previously unknown. 
Our study suggests that it enjoys some universal theoretical guarantees within the much broader class of exchangeable network models.
This could be an interesting direction for future research.
\end{comment}

%\subsection{Consistency of the edge-counting test}
%In this section, we show the consistency of the edge-counting test in \ref{subsec:testing}, even when the eigenvector conditon Assumption~\ref{assump:mu-Sigma-H} is not satisfied.
%
%Consider the same \textit{fundamental} testing problem as in \ref{subsec:testing}

\bibliographystyle{abbrvnat} 
\bibliography{network}

\newpage
\appendix
%!TEX root = ../main.tex

\section{Simulation studies}
\label{sec:simulation}

In this section,   
we evaluate numerical performance of both {\speclore} and Algorithm~\ref{alg:provable} on simulated examples generated according to different parameter specifications of the latent space model.
All reported results were obtained on a Windows 7 PC with two Intel Xeon Processors (E5-2630 v3@2.40GHz) and $64$G RAM.

\paragraph{Specification 1}
We first consider the case where $H$ is positive semi-definite. 
In this case, we compare both {\speclore} and Algorithm~\ref{alg:provable} with the LSCD method in Section 6.1 of \cite{ma2017exploration}.

We set up model \eqref{eq:model} with latent space dimension $d=3$ and size $n=1000$. The nodes were split into two clusters of sizes $n_1=n_2=500$.
For $i=1,\cdots,n_1$, we generated i.i.d.~$z_i \sim N_d(\mu,\tau^2 I_d)$, where $\mu=(0.5,1,0)^\top$, and for $i=n_1+1,\cdots, n$, we generated i.i.d.~$z_i\sim N_d(-\mu,\tau^2 I_d)$. 
We varied $\tau\in\{0.75, 0.5, 0.25\}$. 
In addition, we let 
$H=\mathrm{diag}(1,1,0.5)$, and generated $\alpha_i=\alphaavg + \omega_i$, where $\alphaavg=-2.49$ (so that {the median degree} $ne^{2\alphaavg} = \log n$) and $\omega_i\stackrel{iid}{\sim} N(0,1)$. 
We have designed the setting so that $\mu$ is an eigenvector of $H$ with positive eigenvalue $1$. 
In each repetition, we generated one copy of the adjacency matrix $A$ with diagnoals $A_{ii} = 0$ for $ i \in [n]$. 
Then we applied the {\speclore} method with $R=1$ and $R=10$ rounds of local refinement to cluster nodes. 
We also ran Algorithm \ref{alg:provable} to investigate its numerical difference from {\speclore}.
For LSCD, we used Algorithm 3 in \cite{ma2017exploration} as the initializer, then applied Algorithm 1 in \cite{ma2017exploration} with $800$ iterations followed by $k$-means clustering.  

Table \ref{tab:simu1} reports
average misclustering proportions \eqref{eq:loss} over $100$ repetitions and average runtimes (in seconds) of {\speclore} (denoted ``{\speclore}'' with subscripts $R=1$ and $R=10$), Algorithm \ref{alg:provable} and LSCD. 
The runtime of {\speclore} included time spent on spectral initialization by Algorithm \ref{alg:init}.
It also reports average degrees (namely the average of  $\frac{1}{n}\sum_{i=1}^n\sum_{j=1}^n A_{ij}$ over $100$ repetitions). 
Furthermore, it reports theoretical Bayes risks, which are best possible misclustering errors if we observe the latent positions directly and know the underlying distributions that generated the $z_i$'s.  
Bayes risk is only attainable by reconstructing the underlying distributions based on infinite samples directly observed from the latent variable distributions. 
Finally, the ``Initial'' column reports the average errors of the initial estimates obtained from  Algorithm \ref{alg:init}. 

\begin{table}[htb] 
	%\vskip 0.15in
	\begin{center}
		\begin{tabular}{c|c|c|cc|c|c|cc|cc}
			\hline
			\hline
			\multirow{2}{*}{$\tau$} & Avg & Bayes & \multicolumn{2}{|c|}{LSCD} & Algo3 & Initial & \multicolumn{2}{|c|}{{\speclore}$_{R=1}$} & \multicolumn{2}{|c}{{\speclore}$_{R=10}$}\\
			\cline{4-11}
			 & degree & risk & error & time & error & error & error & time & error & time \\
			\hline
			\hline
			%\abovespace
			0.75 & 47.68 & 6.80\% & 8.03\% & 179.29 & 8.27\% & 8.33\% & 8.21\% & 2.10 & 8.20\% & 2.72 \\
			0.5 & 35.28 & 1.27\% & 2.93\% & 184.31 & 3.20\% & 3.44\% & 3.18\% & 2.07 & 3.18\% & 2.63 \\
			0.25 & 29.51 & 3.87E-4\% & 0.82\% & 182.72 & 0.84\% & 1.36\% & 0.85\% & 2.02 & 0.83\% & 2.63 \\
			\hline
			\hline
		\end{tabular}
	\end{center}
\vspace{-0.15in} 
\caption{Misclustering proportions and runtimes in Specification 1.
% \nb{replace the results on political blogs by proportions}
}
\label{tab:simu1}
\end{table}

For all three values of $\tau$, misclustering errors of {\speclore} with $R=10$ and LSCD were close, but runtimes of the former method were only tiny proportions of those of the latter.
% only about 1/80 of the latter.
We also observe that misclustering errors of {\speclore} with $R=1$ were nearly identical to those of Algorithm \ref{alg:provable}. 
{This reassures that repeated initializations in Algorithm \ref{alg:provable} were only needed for technical reasons in proofs,}
and justifies the use of {\speclore} in practice. 
Furthermore, for $\tau=0.75$, the misclustering errors of {\speclore} were close to Bayes risk, while for $\tau=0.25$ the misclustering errors of {\speclore} were much larger than Bayes risk. 
This suggests that when $\tau$ is large, the signal-to-noise ratio affected by the latent positions dominates the error rate, while when $\tau$ is small, the signal-to-noise ratio affected by the network sparsity dominates. 

\paragraph{Specification 2}
In the second study, we kept the same settings as in the first case except that we set $H = \mathrm{diag}(1,1,-0.5)$ which is no longer positive semi-definite, while $\mu$ is still an eigenvector of $H$ with eigenvalue $1$. 
In this case, the LSCD method cannot be directly applied, and so we did not report its results in this case. 
Table \ref{tab:simu2} reports all the other columns in Table \ref{tab:simu1} in the present setting. 
Overall, misclustering errors and runtimes of various algorithms in this setting were almost identical to those in the first study. 

\begin{table}[htb]
	%\vskip 0.15in
	\begin{center}
		\begin{tabular}{c|c|c|c|c|cc|cc}
			\hline
			\hline
			\multirow{2}{*}{$\tau$} & Avg & Bayes & Algo3 & Initial & \multicolumn{2}{|c|}{{\speclore}$_{R=1}$} & \multicolumn{2}{|c}{{\speclore}$_{R=10}$} \\
			\cline{4-9}
			 & degree & risk & error & error & error & time & error & time \\
			\hline
			\hline
			%\abovespace
			0.75 & 47.85 & 6.80\% & 8.25\% & 8.28\% & 8.18\% & 2.13 & 8.16\% & 2.68 \\
			0.5 & 35.41 & 1.27\% & 3.16\% & 3.44\% & 3.16\% & 2.18 & 3.14\% & 2.73  \\
			0.25 & 29.51 & 3.87E-4\% & 0.82\% & 1.31\% & 0.85\% & 2.12 & 0.79\% & 2.65  \\
			\hline
			\hline
		\end{tabular}
	\end{center}
	\vspace{-0.15in} 
	\caption{Misclustering proportions and runtimes in Specification 2.
		% \nb{replace the results on political blogs by proportions}
	}
	\label{tab:simu2}
\end{table}

\paragraph{Specification 3}
In the third study, the settings remained the same as in the first study except that we fixed $\tau=0.5$ and let $\alphaavg\in \{-2.14, -2.49, -2.83\}$, which calibrated the median degree of networks to be around $\{2, 1, 0.5 \}\times \log n$, respectively. 
Table \ref{tab:simu3} reports the results for all three different $\alphaavg$'s.  
As $|\alphaavg|$ grows, the average degree decreases significantly. 
Misclustering errors of {\speclore} with $R=10$ were slightly worse than those of the LSCD method, but were always within $110\%$ of the LSCD errors. 
On the other hand, runtimes of {\speclore} with $R=10$ were of smaller order of magnitude than those of LSCD. 
Misclustering errors of {\speclore} were comparable to Bayes risk when ${\alphaavg}=-2.14$, and became more sizeable relative to Bayes risk for larger $\alphaavg$. 
This suggests that network sparsity becomes the dominating factor in error rate as $|{\alphaavg}|$ grows.

\begin{table}[htb]
	%\vskip 0.15in
	\begin{center}
		\begin{tabular}{c|c|c|cc|c|c|cc|cc}
			\hline
			\hline
			\multirow{2}{*}{$\alphaavg$} & Avg & Bayes & \multicolumn{2}{|c|}{LSCD} & Algo3 & Initial & \multicolumn{2}{|c|}{{\speclore}$_{R=1}$} & \multicolumn{2}{|c}{{\speclore}$_{R=10}$} \\
			\cline{4-11}
			 & degree & risk & error & time & error & error & error & time & error & time \\
			\hline
			\hline
			%\abovespace
			-2.14 & 58.86 & 1.27\% & 2.04\% & 219.92 & 2.24\% & 2.27\% & 2.25\% & 2.04 & 2.23\% & 2.59  \\
			-2.49 & 35.28 & 1.27\% & 2.93\% & 211.29 & 3.20\% & 3.44\% & 3.18\% & 2.31 & 3.17\% & 2.86  \\
			-2.83 & 20.30 & 1.27\% & 4.58\% & 213.31 & 4.94\% & 6.04\% & 4.91\% & 2.26 & 4.88\% & 2.85  \\
			\hline
			\hline
		\end{tabular}
	\end{center}
	\vspace{-0.15in} 
	\caption{Misclustering proportions and runtimes in Specification 3.
		% \nb{replace the results on political blogs by proportions}
	}
	\label{tab:simu3}
\end{table}

\paragraph{Specification 4}
%\nb{= to be revisited =}
Finally, we repeated the last two studies with $H=\mathrm{diag}(1,1,-0.5)$ and $\mu = \sqrt{{1.25}/{1.29}}\left(0.5,1,0.2\right)^\top$. In this case, $\mu$ is no longer an eigenvector of $H$ but $\|\mu\|_2$ is the same as in Specifications 1--3 to make the results more comparable.
Table \ref{tab:simu4} summarizes the relevant results for all different combinations of $\tau$ and $\alphaavg$ values. 
We observe that the first three rows had slightly larger misclustering errors than those in Tables \ref{tab:simu1} and \ref{tab:simu2}, and the last three rows had slightly larger misclustering errors than those in Table \ref{tab:simu3}.
Such a difference conforms with our theory since quantity $\rho$ (defined in \eqref{eq:rho}) in \eqref{eq:rate-upper}--\eqref{eq:rate-lower}  becomes smaller when $\mu$ is no longer an eigenvector of $H$ with maximum possible eigenvalue $1$ under \eqref{eq:H-eig}, resulting in larger error rates.

\begin{table}[htb]
	%\vskip 0.15in
	\begin{center}
		\begin{tabular}{c|c|c|c|c|c|cc|cc}
			\hline
			\hline 
			\multirow{2}{*}{$\tau$} & \multirow{2}{*}{$\alphaavg$} & Avg & Bayes & Algo3 & Initial & \multicolumn{2}{|c|}{{\speclore}$_{R=1}$} & \multicolumn{2}{|c}{{\speclore}$_{R=10}$} \\
			\cline{5-10}
			 &  & degree & risk & error & error & error & time & error & time \\
			\hline
			\hline
			%\abovespace
			0.75 & -2.49 & 46.34 & 6.80\% & 8.89\% & 8.89\% & 8.83\% & 2.27 & 8.80\% & 2.82  \\
			0.5 & -2.49 & 34.09 & 1.27\% & 3.63\% & 3.93\% & 3.62\% & 2.16 & 3.62\% & 2.71 \\
			0.25 & -2.49 & 28.55 & 3.87E-4\% & 0.97\% & 1.56\% & 1.01\% & 2.11 & 1.00\% & 2.68  \\
			\cline{1-10} 
			0.5 & -2.14 & 57.64 & 1.27\% & 2.55\% & 2.60\% & 2.53\% & 2.07 & 2.53\% & 2.63  \\
			0.5 & -2.49 & 34.09 & 1.27\% & 3.51\% & 3.93\% & 3.62\% & 2.16 & 3.62\% & 2.71  \\
			0.5 & -2.83 & 19.72 & 1.27\% & 5.35\% & 6.45\% & 5.33\% & 2.15 & 5.27\% & 2.73  \\
			\hline
			\hline
		\end{tabular}
	\end{center}
	\vspace{-0.15in} 
	\caption{Misclustering proportions and runtimes in Specification 4.
		% \nb{replace the results on political blogs by proportions}
	}
	\label{tab:simu4}
\end{table}

%!TEX root = ../main.tex

% \section{Approximating the Probability Matrix}
\section{Proof of Lemma \ref{lem:edgecount-rate}}
\label{sec:proofedgecountapp}

%\begin{proof}
    We note that, by Jensen's inequality, for any fixed $\epsilon \in (0, 1/2)$,
    \begin{equation*}
        \countrateuppereps{\epsilon} \le (\countrateuppereps{0})^{1-\epsilon} \rightarrow 0, \quad \text{ as } n \rightarrow \infty.
    \end{equation*}

    By symmetry, we have
\begin{align*}
    \nu_n & =  \Prob_{H_0}(A_{0,+}<A_{0,-})+\Prob_{H_1}(A_{0,+}\ge A_{0,-}) \\
& =  \Prob_{H_0}(A_{0,+}<A_{0,-})+\Prob_{H_0}(A_{0,+}\le A_{0,-}).
\end{align*}
Hence
\begin{align}\label{eq:nu_inequalities}
\Prob_{H_0}(A_{0,+}\le A_{0,-}) \le \nu_n  \le 2\Prob_{H_0}(A_{0,+}\le A_{0,-}).
\end{align}
% Next we focus on $\Prob_{H_0}(A_{0,+}\le A_{0,-})$.

%\fn{$\omegabar$ is bad notation because $\alphaavg$ denotes the center of $\alpha$ whilst $\omegabar$ is the upper bound. }
\paragraph{Upper bound}
By law of total expectation,
\begin{align*}
\Prob_{H_0}(A_{0,+}\le A_{0,-}) = \Expect_{H_0}^{\alpha_0, z_0} \big[ \Prob(A_{0,+}\le A_{0,-}|\alpha_0,z_0) \big].
%\sum_{\{a_{0,i}\}_{i=1}^{2m}\in\Omega} \Prob_{H_0}(A_{0,1}=a_{0,1},\cdots,A_{0,2m}=a_{0,2m}) \\
% & =\sum_{\{a_{0,i}\}_{i=1}^{2m}\in\Omega}\Expect_{H_0}^{\alpha_0, z_0} \Prob(A_{0,1}=a_{0,1},\cdots,A_{0,2m}=a_{0,2m}|\alpha_0, z_0) \\
% & =\Expect_{H_0}^{\alpha_0, z_0} \left\{\sum_{\{a_{0,i}\}_{i=1}^{2m}\in\Omega} \Prob(A_{0,1}=a_{0,1},\cdots,A_{0,2m}=a_{0,2m} |\alpha_0, z_0)\right\} \\
% & = \Expect_{H_0}^{\alpha_0, z_0} \Prob(A_{0,+}<A_{0,-}|\alpha_0, z_0),
\end{align*}
% where $\Expect_{H_0}^{\alpha_0, z_0}$ means the expectation over $z_0$ and $\alpha_0$ under $H_0$.
Let
\[
\Omega=\left\{\{a_{0,i}\}_{i=1}^{2m}: a_{0,i}\in\{0,1\} \mathrm{\ for \ } 1\le i\le 2m, \sum_{i=1}^m a_{0,i}\le \sum_{i=m+1}^{2m} a_{0,i}\right\}.
\]
We then have
\begin{align*}
\Prob(A_{0,+}\le A_{0,-}|\alpha_0, z_0) & =
\sum_{\{a_{0,i}\}_{i=1}^{2m}\in\Omega} \Prob(A_{0,1}=a_{0,1},\cdots,A_{0,2m}=a_{0,2m}|\alpha_0, z_0) \\
& =
\sum_{\{a_{0,i}\}_{i=1}^{2m}\in\Omega}
\Expect^{\{\alpha_i, z_i \}_{i=1}^{2m}}
% \Expect^{\{z_i,\alpha_i,X_{0,i}\}}
\big[\Prob(A_{0,1}=a_{0,1},\cdots,A_{0,2m}=a_{0,2m}|\alpha_0, z_0,\{\alpha_i, z_i \}_{i=1}^{2m}) \big] \\
& =
\sum_{\{a_{0,i}\}_{i=1}^{2m}\in\Omega}
\Expect^{\{\alpha_i, z_i\}_{i=1}^{2m}}
\left[\prod_{i=1}^{2m} \Prob(A_{0,i}=a_{0,i}|\alpha_0,z_0,\alpha_i, z_i) \right] \\
& =
\sum_{\{a_{0,i}\}_{i=1}^{2m}\in\Omega}\prod_{i=1}^{2m}
\Expect^{\alpha_i, z_i}
\big[ \Prob(A_{0,i}=a_{0,i}|\alpha_0,z_0,\alpha_i, z_i) \big].
\end{align*}
Here $\Expect^{\alpha_i, z_i}$ means the expectation over $\alpha_i$ and $z_i$ (under $H_0$).
In the last equality, we have used the mutual independence of $\{\alpha_i,z_i \}$ for $1\le i\le 2m$.
By the discussion preceding \eqref{eq:onenode-testing} and the definition in \eqref{eqn:def-p-z0} and \eqref{eqn:def-q-z0},
we have
\begin{align*}
\Expect^{\alpha_i,z_i} \big[
\Prob(A_{0,i}=1|\alpha_0,z_0,\alpha_i, z_i)
\big]
=
\begin{cases}
p(\alpha_0,z_0), \mathrm{\ for \ } 1\le i\le m, \\
q(\alpha_0,z_0), \mathrm{\ for \ } m+1\le i\le 2m.
\end{cases}
\end{align*}
By definition, $p(\alpha_0,z_0)$ and $q(\alpha_0,z_0)$ can be written as
\begin{align}
\label{eqn:def-p-z0-new}
p(\alpha_0,z_0) & = \Expect^{\alpha_1,z_1} S(z_0^\top H z_1 + \alpha_0 + \alpha_1 ), \\
q(\alpha_0,z_0) & = \Expect^{\alpha_{m+1},z_{m+1}} S(z_0^\top H z_{m+1} + \alpha_0 + \alpha_{m+1} ) \nonumber \\
 & = \Expect^{\alpha_1,z_1} S(-z_0^\top H z_1 + \alpha_0 + \alpha_1).
\label{eqn:def-q-z0-new}
\end{align}
Here $\alpha_i \stackrel{iid}{\sim} F_\alpha$, $z_1\sim N(\mu,\tau^2 I_d)$ and $z_{m+1}\sim N(-\mu,\tau^2 I_d)$, and they are mutually independent.
Define $\mathcal{L}_+=\{z_0: z_0^\top H\mu\ge 0\}$ and $\mathcal{L}_-=\{z_0: z_0^\top H\mu< 0\}$. Conditional on $\alpha_0$ and $z_0$, the distribution of $z_0^\top H(z_1-\mu)$ is symmetric about zero and is independent of $\alpha_1$. Since $S$ is a monotone increasing function, together with \eqref{eqn:def-p-z0-new} and \eqref{eqn:def-q-z0-new}, this observation implies that $p(\alpha_0,z_0)\ge q(\alpha_0,z_0)$ when $z_0\in \mathcal{L}_+$ and $p(\alpha_0,z_0)<q(\alpha_0,z_0)$ when $z_0\in \mathcal{L}_-$.

%!TEX root = ../main.tex

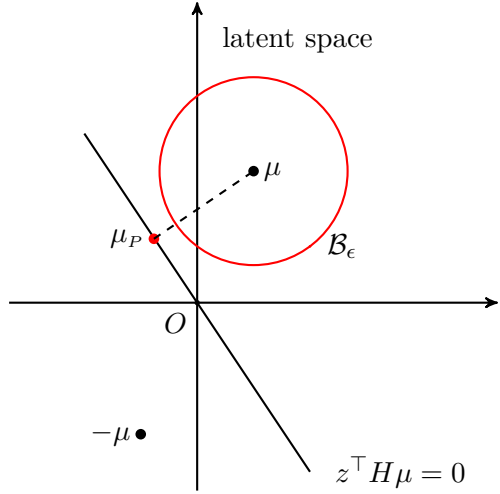
\begin{figure}[!tb]
	\centering
\begin{tikzpicture}[
    scale=5,
    axis/.style={thick, ->, >=stealth'},
    important line/.style={thick},
    dashed line/.style={dashed, thin},
    pile/.style={thick, ->, >=stealth', shorten <=2pt, shorten
    >=2pt},
    every node/.style={color=black}
    ]
    % axis
    \draw[axis] (0,0.5)  -- (1.3,0.5) node(xline)[right]
        {$~$};
    \draw[axis] (0.5,0) -- (0.5,1.3) node(yline)[below right = 0.5em] {latent space};
	\fill[black] (0.5,0.5) circle (.2pt) node[below left] {$O$};
	\fill[black] (0.65,0.85) coordinate (M) circle (.4pt) node[right] {$\mu$};
	\fill[black] (0.35,0.15) coordinate (N) circle (.4pt) node[left] {$-\mu$};
    % Lines
    % \draw[important line] (.15,.15) coordinate (A) -- (.85,.85)
    %     coordinate (B) node[right, text width=5em] {$Y^O$};
    \draw[thick] (.2,.95) coordinate (A) -- (.8,.05)
        coordinate (B) node[right = 0.5em] {$z^\top H\mu = 0$};
	\fill[red] (0.385,0.67) coordinate (P) circle (.4pt) node[left] {$\mu_{\sss P}$};
	\draw[dashed,thick] (P) -- (M);
	\draw[red,thick] (M) circle (0.25cm); 
	% node(ball)[below right = 2em] {$B_\epsilon$};
	\fill[red] (0.8, 0.65) coordinate (B) circle (0.1pt) node[right = 0.25em] {$\mathcal{B}_\epsilon$};
	% \draw[dashed,red,thick] (M) -- (B) node[above right = 1.1em] {$\sqrt{1-\frac{1}{2}\epsilon}\rho$};
    % % Intersection of lines
    % \fill[red] (intersection cs:
    %    first line={(A) -- (B)},
    %    second line={(C) -- (D)}) coordinate (E) circle (.4pt)
    %    node[above,] {$A$};
    % % The E point is placed more or less randomly
    % \fill[red]  (E) +(-.075cm,-.2cm) coordinate (out) circle (.4pt)
    %     node[below left] {$B$};
    % % Line connecting out and ext balances
    % \draw [pile] (out) -- (intersection of A--B and out--[shift={(0:1pt)}]out)
    %     coordinate (extbal);
    % \fill[red] (extbal) circle (.4pt) node[above] {$C$};
    % % line connecting  out and int balances
    % \draw [pile] (out) -- (intersection of C--D and out--[shift={(0:1pt)}]out)
    %     coordinate (intbal);
    % \fill[red] (intbal) circle (.4pt) node[above] {$D$};
    % % line between out og all balanced out :)
    % \draw[pile] (out) -- (E);
\end{tikzpicture}
\caption{An illustration of a $\mathcal{B}_\epsilon$-ball in the latent space: $\mu_{\sss P}$ is the orthogonal projection of $\mu$ onto the hyperplane $\{z: z^\top H\mu = 0\}$ with the distance between $\mu$ and $\mu_{\sss P}$ equal to $\rho$ defined in \eqref{eq:rho}. Given $\epsilon > 0$, $\mathcal{B}_\epsilon$ is the ball in red with radius $\sqrt{1-\frac{\epsilon}{4}}\rho$. } 
\label{fig:ball}	
\end{figure}

For any $z_0\in \mathcal{B}_\epsilon$, we have
\begin{align}
z_0^\top H \mu = & \mu^\top H \mu + (z_0-\mu)^\top H \mu \nonumber \\
\ge & \mu^\top H \mu - |(z_0-\mu)^\top H \mu| \nonumber \\
\ge & \mu^\top H \mu - \|H \mu\|_2 \|z_0-\mu\|_2  \nonumber \\
\ge & \mu^\top H \mu - \sqrt{\mu^\top H^2 \mu} \sqrt{1-\frac{\epsilon}{4}}\rho \nonumber \\
= & \left(1-\sqrt{1-\frac{\epsilon}{4}}\right) \mu^\top H \mu \nonumber \\
\ge & \frac{\epsilon}{8} \mu^\top H \mu. \label{eq:z0-H-mu-bound}
\end{align}
Here the second equality holds due to \eqref{eq:rho}.
Thus $\mathcal{B}_\epsilon\subset \mathcal{L}_+$.
See Figure \ref{fig:ball} for a graphical illustration.

Next, we derive uniform bounds of $p(\alpha_0,z_0)$, $q(\alpha_0,z_0)$ and $I(\alpha_0,z_0)$ for all $z_0\in\calB_\epsilon$.
To this end, define
\begin{align*}
D_p(\omega_0,z_0)= & \Expect^{\omega_1,z_1}\left[ \frac{e^{z_0^\top H (z_1-\mu)+\omega_0+\omega_1 }}{1+e^{z_0^\top H z_1+2\alphaavg+\omega_0+\omega_1 }}\right], \quad D_q(\omega_0,z_0)= \Expect^{\omega_1,z_1}\left[ \frac{e^{-z_0^\top H (z_1-\mu)+\omega_0+\omega_1 }}{1+e^{-z_0^\top H z_1+2\alphaavg+\omega_0+\omega_1 }}\right].
\end{align*}
By \eqref{eqn:def-p-z0-new} and \eqref{eqn:def-q-z0-new}, we have
\begin{align}
p(\alpha_0,z_0)= & e^{2\alphaavg}e^{z_0^\top H\mu} D_p(\omega_0,z_0) \label{eq:p-decomposition} \\
q(\alpha_0,z_0)= & e^{2\alphaavg}e^{-z_0^\top H\mu} D_q(\omega_0,z_0). \label{eq:q-decomposition}
\end{align}
To find upper bounds for $D_p(\omega_0,z_0)$ and $D_q(\omega_0,z_0)$, we define
\begin{align*}
D(\omega_0,z_0)= & \Expect^{\omega_1,z_1}\left[ e^{z_0^\top H (z_1-\mu)+\omega_0+\omega_1 }\right]=e^{\omega_0}\Expect(e^{\omega_1})\Expect^{z_1}[e^{z_0^\top H (z_1-\mu)}].
\end{align*}
Then we have
\begin{align}
D_p(\omega_0,z_0)\le & \Expect^{\omega_1,z_1}\left[ e^{z_0^\top H (z_1-\mu)+\omega_0+\omega_1}\right] = D(\omega_0,z_0) \label{eq:Dp-upper-bound}\\
D_q(\omega_0,z_0)\le & \Expect^{\omega_1,z_1}\left[ e^{-z_0^\top H (z_1-\mu)+\omega_0+\omega_1}\right] = D(\omega_0,z_0). \label{eq:Dq-upper-bound}
\end{align}
where the last equality holds since the distribution of $z_1-\mu$ is symmetric about zero. By Assumption \ref{assump:alpha}, $\Expect[e^{\omega_1}]\le \left(\Expect[e^{2\omega_1}]\right)^{1/2}\le C^{1/2}$. This inequality, combined with the boundedness of $z_0$ for $z_0\in \calB_\epsilon$ and \eqref{eq:omegabar} of Assumption \ref{assump:alpha} implies that
\begin{align}\label{eq:D-bound}
0< e^{-2\underline{\omega}} \underline{D}\le D(\omega_0,z_0)\le e^{\omegabar}\overline{D},
\end{align}
where $\overline{D}$ and $\underline{D}$ are constants.
%\fn{
%    Take a look at, for instance, $\overline{D}$,
%    \begin{equation*}
%        D(w_0, z_0) \le e^{2\omegabar} \bbE^{z_1}[\exp(z_0^\top H (z_1 - \mu))] ,
%    \end{equation*}
%    so $\overline{D}$ should be $\bbE^{z_1}[\exp(z_0^\top H (z_1 - \mu)]$?  Why is this of $O(1)$?  This still depends on random $z_0$.
%        Maybe you mean, conditioned on $z_0$? One has to be careful.
%}

On the other hand, to find lower bounds for $D_{p}(z_0, \omega_0)$ and $D_{q}(z_0, \omega_0)$, we define
\begin{align*}
D_2(\omega_0,z_0) = & \Expect^{\omega_1,z_1}\left[ e^{2z_0^\top H (z_1-\mu)+2\omega_0+2\omega_1}\right] = e^{2\omega_0}\Expect(e^{2\omega_1}) \Expect^{z_1}\left[ e^{2z_0^\top H (z_1-\mu)}\right].
\end{align*}
By Assumption \ref{assump:alpha}, $\Expect[e^{2\omega_1}]\le C$. Further by \eqref{eq:omegabar} of Assumption \ref{assump:alpha} and boundedness of $z_0$, $D_2(\omega_0,z_0)$ also has an upper bound $e^{2\omegabar}\overline{D}_2$ where $\overline{D}_2$ is a constant. Then
\begin{align}
D(\omega_0,z_0)-D_p(\omega_0,z_0)= & \Expect^{\omega_1, z_1}\left[ e^{z_0^\top H (z_1-\mu)+\omega_0+\omega_1} \left(1-\frac{1}{1+e^{z_0^\top H z_1+2\alphaavg+\omega_0+\omega_1}}\right)\right] \nonumber \\
= & e^{2\alphaavg}e^{z_0^\top H \mu}\Expect^{\omega_1,z_1}\left[ \frac{e^{2z_0^\top H (z_1-\mu)+2\omega_0+2\omega_1}} {1+e^{z_0^\top H z_1+2\alphaavg+\omega_0+\omega_1 }}\right] \nonumber \\
\le & e^{2\alphaavg}e^{z_0^\top H \mu} \Expect^{\omega_1,z_1}\left[ e^{2z_0^\top H (z_1-\mu)+2\omega_0+2\omega_1 }\right] \nonumber \\
= & e^{2\alphaavg}e^{z_0^\top H \mu} D_2(\omega_0,z_0) \nonumber \\
\le & e^{2\alphaavg+2\omegabar}e^{z_0^\top H \mu} \overline{D}_2. \label{eq:Dp-lower-bound-temp1}
\end{align}
Let $0<\kappa<1$ be any fixed constant. By \eqref{eq:alpha1} of Assumption \ref{assump:alpha} and the boundedness of $z_0$ within $\mathcal{B}_\epsilon$, the inequality $e^{2\alphaavg+2\omegabar}e^{z_0^\top H \mu} \overline{D}_2\le \kappa e^{-2\underline{\omega}} \underline{D}$ holds for all sufficiently large $n$.
By \eqref{eq:D-bound},
\begin{align}
e^{2\alphaavg+2\omegabar}e^{z_0^\top H \mu} \overline{D}_2 \le \kappa e^{-2\underline{\omega}}\underline{D} \le  \kappa D(\omega_0,z_0). \label{eq:Dp-lower-bound-temp2}
\end{align}
Combining \eqref{eq:Dp-lower-bound-temp1} and \eqref{eq:Dp-lower-bound-temp2}, we have
\begin{align}
D_p(\omega_0,z_0)\ge (1-\kappa)D(\omega_0,z_0).\label{eq:Dp-lower-bound}
\end{align}
By the same argument, we can also get
\begin{align}
D_q(\omega_0,z_0)\ge (1-\kappa)D(\omega_0,z_0).\label{eq:Dq-lower-bound}
\end{align}

We now derive a lower bound for $I(\alpha_0,z_0)$. By definition, we have
\begin{align*}
I(\alpha_0,z_0)= & -2 \log\left(\sqrt{p(\alpha_0,z_0)q(\alpha_0,z_0)}+ \sqrt{(1-p(\alpha_0,z_0))(1-q(\alpha_0,z_0))}\right) \\
\ge & -2\log\left(\sqrt{p(\alpha_0,z_0)q(\alpha_0,z_0)}+ 1-\frac{1}{2}\left[p(\alpha_0,z_0)+q(\alpha_0,z_0)\right] \right) \\
\ge & -2\sqrt{p(\alpha_0,z_0)q(\alpha_0,z_0)}+ p(\alpha_0,z_0)+q(\alpha_0,z_0) \\
= & e^{2\alphaavg}e^{z_0^\top H \mu} \left(\sqrt{D_p(\omega_0,z_0)}-e^{-z_0^\top H \mu} \sqrt{D_q(\omega_0,z_0)}\right)^2,
\end{align*}
where the last inequality is due to $\log(1-x)\le -x$ for $0<x<1$. We let
\begin{align*}
C(\omega_0,z_0)=e^{z_0^\top H \mu} \left(\sqrt{D_p(\omega_0,z_0)}-e^{-z_0^\top H \mu} \sqrt{D_q(\omega_0,z_0)}\right)^2,
\end{align*}
and let $\kappa=1-\frac{1}{4}(1+e^{-\frac{\epsilon}{8}\mu^\top H\mu})^2$. Then by \eqref{eq:z0-H-mu-bound}, \eqref{eq:Dq-upper-bound} and \eqref{eq:Dp-lower-bound} we get
\begin{align*}
C(\omega_0,z_0) \ge & e^{\frac{\epsilon}{8} \mu^\top H \mu} \left(\sqrt{(1-\kappa)D(\omega_0,z_0)}-e^{-z_0^\top H \mu} \sqrt{D(\omega_0,z_0)}\right)^2 \\
= & e^{\frac{\epsilon}{8} \mu^\top H \mu} D(\omega_0,z_0) \left(\sqrt{1-\kappa}-e^{-z_0^\top H \mu} \right)^2 \\
\ge & e^{\frac{\epsilon}{8} \mu^\top H \mu} D(\omega_0,z_0) \left[\frac{1}{2}\left(1+e^{-\frac{\epsilon}{8}\mu^\top H\mu}\right)-e^{-z_0^\top H \mu} \right]^2 \\
\ge & e^{\frac{\epsilon}{8} \mu^\top H \mu} D(\omega_0,z_0) \left[\frac{1}{2}\left(1+e^{-\frac{\epsilon}{8}\mu^\top H\mu}\right)-e^{-\frac{\epsilon}{8}\mu^\top H \mu} \right]^2 \\
= & e^{\frac{\epsilon}{8} \mu^\top H \mu} D(\omega_0,z_0) \left[\frac{1}{2}\left(1-e^{-\frac{\epsilon}{8}\mu^\top H\mu}\right) \right]^2 \\
\ge & e^{\frac{\epsilon}{8} \mu^\top H \mu} e^{-2\underline{\omega}} \underline{D} \left[\frac{1}{2}\left(1-e^{-\frac{\epsilon}{8}\mu^\top H\mu}\right) \right]^2 :=\underline{C}.
\end{align*}
Since $\underline{D}$ and $\underline{\omega}$ are both constants, $\underline{C}>0$ is also a constant.
In summary, for $z_0\in \calB_\epsilon$, we have established
\begin{align}\label{eq:I-lower-bound}
I(\alpha_0,z_0)\ge e^{2\alphaavg} \underline{C},
\end{align}
where $\underline{C}$ is some constant depending on $\epsilon$. % C is not universal

In view of the foregoing discussion, we can write
\begin{align}\label{eq:prob_break}
\Prob_{H_0}(A_{0,+}\le A_{0,-})
= &
\Expect_{H_0}^{\alpha_0, z_0}
\left[
\indc{z_0 \in \mathcal{B}_\epsilon }\Prob(A_{0,+}\le A_{0,-}|\alpha_0, z_0 )
\right]
\nonumber \\
& +
\Expect_{H_0}^{\alpha_0, z_0}
\left[
\indc{ z_0\in \mathcal{B}^c_\epsilon }\Prob(A_{0,+}\le A_{0,-}|\alpha_0, z_0 )
\right].
\end{align}
Conditional on $\alpha_0$ and $z_0$, we can generate independent random variables $W_i\sim \mathrm{Bernoulli}(p(\alpha_0,z_0))$ for $i=1,\cdots, m$ and $W_i\sim \mathrm{Bernoulli}(q(\alpha_0,z_0))$ for $i=m+1,\cdots, 2m$. Then we have
$$
\Prob(A_{0,+}\le A_{0,-}|\alpha_0, z_0)=
\Prob\left(\sum_{i=1}^m W_i\le  \sum_{i=m+1}^{2m} W_i \right).
$$

For any $\alpha_0$ and any $z_0\in \calB_\epsilon$, aside from $p(\alpha_0,z_0)>q(\alpha_0,z_0)$, we can also get from \eqref{eq:p-decomposition}, \eqref{eq:q-decomposition}, \eqref{eq:Dp-upper-bound}, \eqref{eq:Dq-upper-bound}, \eqref{eq:D-bound}, $z_0$ bounded, and $\eqref{eq:alpha1}$ of Assumption \ref{assump:alpha} that as $n\to\infty$,
$$
p(\alpha_0,z_0)\to 0,\quad \mbox{and} \quad
q(\alpha_0,z_0)\to 0.
$$
We then obtain from the calculation in \cite{gao2015achieving,gao2018minimax} that
\begin{align*}
\Prob\left(\sum_{i=1}^m W_i\le \sum_{i=m+1}^{2m} W_i \right) \le \exp\left\{-m(1+\eta_1(\alpha_0,z_0))I(\alpha_0,z_0)\right\},
\end{align*}
in which $\eta_1(\alpha_0,z_0)=O(1/\sqrt{mI(\alpha_0,z_0)})$. By \eqref{eq:I-lower-bound} and \eqref{eq:alpha4} of Assumptions \ref{assump:alpha}, we have $1/\sqrt{mI(\alpha_0,z_0)}\le 1/\sqrt{m e^{2\alphaavg}\underline{C}}\to 0$. Then $-\eta_1(\alpha_0,z_0)\le \frac{\epsilon}{2}$ for all sufficiently large $n$. Therefore,
\begin{align*}
\Prob\left(\sum_{i=1}^m W_i\le \sum_{i=m+1}^{2m} W_i\right)
\le \exp\left\{-m \left(1-\frac{\epsilon}{2}\right)I(\alpha_0,z_0)\right\}.
\end{align*}

Note that $z_0\sim N(\mu,\tau^2 I)$ under $H_0$, we have $\|z_0-\mu\|_2^2/\tau^2\sim \chi^2(d)$. Since $\tau\to 0$ as $n\to \infty$, the inequality below holds for all sufficiently large $n$:
\begin{align*}
\left(1-\frac{\epsilon}{4} \right)\frac{\rho^2}{\tau^2}\ge d+2\sqrt{d\left(1-\frac{\epsilon}{2}\right)\frac{\rho^2}{2\tau^2}}
+\left(1-\frac{\epsilon}{2} \right)\frac{\rho^2}{\tau^2}.
\end{align*}
Then by Lemma 1 of \cite{massart2000chisquare}, we can get
\begin{align}
\Prob_{H_0}(z_0\in \calB_\epsilon^c)
= & \Prob_{H_0}\left(\frac{1}{\tau^2}\|z_0-\mu\|_2^2 > \left(1-\frac{\epsilon}{4} \right)\frac{\rho^2}{\tau^2}\right) \nonumber \\
\le & \Prob_{H_0}\left(\frac{1}{\tau^2}\|z_0-\mu\|_2^2 \ge d+2\sqrt{d\left(1-\frac{\epsilon}{2}\right)\frac{\rho^2}{2\tau^2}}
+\left(1-\frac{\epsilon}{2}\right)\frac{\rho^2}{\tau^2}\right) \nonumber \\
\le & \exp\left\{-\left(1-\frac{\epsilon}{2}\right)\frac{\rho^2}{2\tau^2}\right\}. \label{eq:ball_bound}
\end{align}
Therefore by \eqref{eq:prob_break},
\begin{align}\label{eq:edge_upperbound}
& \Prob_{H_0}(A_{0,+}\le A_{0,-})  \nonumber \\
&  \le \Expect^{\alpha_0,z_0}_{H_0}
\left[\indc{ z_0\in \calB_\epsilon} \exp\left\{-m\left(1-\frac{\epsilon}{2}\right)I(\alpha_0,z_0)\right\}\right] + \Prob_{H_0}(z_0\in \calB_\epsilon^c) \nonumber \\
& \le \Expect^{\alpha_0,z_0}_{H_0}
\left[\indc{ z_0\in \calB_\epsilon }
\exp\left\{-m\left(1-\frac{\epsilon}{2} \right)I(\alpha_0,z_0)\right\}\right] +
 \exp\left\{-\left(1-\frac{\epsilon}{2}\right)\frac{\rho^2}{2\tau^2}\right\}.
\end{align}
Combining \eqref{eq:edge_upperbound} with the second inequality of \eqref{eq:nu_inequalities}, we get
\begin{align*}
\nu_n\le & 2 \Expect^{\alpha_0,z_0}_{H_0}
\left[\indc{ z_0\in \calB_\epsilon } \exp\left\{-m\left(1-\frac{\epsilon}{2}\right)I(\alpha_0,z_0)\right\}\right] +
2\exp\left\{-\left(1-\frac{\epsilon}{2}\right)\frac{\rho^2}{2\tau^2}\right\} \\
\le & \Expect^{\alpha_0,z_0}_{H_0}
\left[\indc{ z_0\in \calB_\epsilon } \exp\left\{-m(1-\epsilon)I(\alpha_0,z_0)\right\}\right] +
\exp\left\{-(1-\epsilon)\frac{\rho^2}{2\tau^2}\right\}.
\end{align*}
Here the last inequality holds because $\frac{\epsilon}{2}>\frac{\log 2}{m e^{2\alphaavg}\underline{C}}\ge \frac{\log 2}{mI(\alpha_0,z_0)}$ by \eqref{eq:alpha4} of Assumption~\ref{assump:alpha} and $\frac{\epsilon}{2}>\frac{2\tau^2}{\rho^2}\log 2$ for all sufficiently large $n$.

\paragraph{Lower bound}
For the lower bound, when $z_0\in \calB_\epsilon$, we apply the Chernoff argument in \cite{gao2015achieving,gao2018minimax}
to get
\begin{align*}
\Prob(A_{0,+}\le A_{0,-}|\alpha_0, z_0) \ge \exp\{-m(1+\eta_2(\alpha_0,z_0))I(\alpha_0,z_0)\}.
\end{align*}
in which $\eta_2(\alpha_0,z_0)=O(1/\sqrt{mI(\alpha_0,z_0)})$. By \eqref{eq:I-lower-bound} and \eqref{eq:alpha4} of Assumption \ref{assump:alpha}, we get $\eta_2(\alpha_0,z_0)\le \epsilon$ for all sufficiently large $n$. Therefore,
\begin{align*}
\Prob\left(\sum_{i=1}^m W_i\le \sum_{i=m+1}^{2m} W_i \right) \ge \exp\left\{-m(1+\epsilon)I(\alpha_0,z_0)\right\}.
\end{align*}
Note $\calL_-\subset \calB_\epsilon^c$. When $z_0\in \mathcal{L}_-$, we have $p(\alpha_0,z_0)<q(\alpha_0,z_0)$, so
$$\Prob(A_{0,+}\le A_{0,-}|\alpha_0, z_0)\ge \frac{1}{2}.$$
Also,
\begin{align*}
\Prob_{H_0} (z_0\in \mathcal{L}_-)= & \Prob_{H_0} \left((z_0-\mu)^\top H \mu<-\mu^\top H \mu\right) \\
= & \Phi\left(-\frac{\mu^\top H \mu}{\tau\sqrt{\mu^\top H^2\mu}}\right)
\ge  \exp\left\{-(1+\frac{\epsilon}{2})\frac{\rho^2}{2\tau^2}\right\},
\end{align*}
where the last inequality is due to Mill's ratio. Therefore by \eqref{eq:prob_break} again,
\begin{align}\label{eq:edge_lowerbound}
& \Prob_{H_0}(A_{0,+}\le A_{0,-})
\nonumber
\\
& \ge \Expect_{H_0}^{\alpha_0, z_0}
\left[
\indc{z_0\in \mathcal{B}_\epsilon} \exp\left\{-m(1+\epsilon)I(\alpha_0,z_0)\right\}
\right]+ \frac{1}{2}\Prob_{H_0} (z_0\in \mathcal{L}_-) \nonumber\\
& \ge
\Expect_{H_0}^{\alpha_0,z_0}
\left[\indc{ z_0\in \mathcal{B}_\epsilon } \exp(-m(1+\epsilon)I(\alpha_0,z_0))\right]
+ \frac{1}{2}
 \exp\left\{-\left(1+\frac{\epsilon}{2} \right)\frac{\rho^2}{2\tau^2}\right\}
\nonumber \\
& \ge
\Expect_{H_0}^{\alpha_0, z_0}
\left[\indc{z_0\in \mathcal{B}_\epsilon} \exp\left\{-m(1+\epsilon)I(\alpha_0,z_0)\right\}
\right]+ \exp\left\{-(1+\epsilon)\frac{\rho^2}{2\tau^2}\right\}.
\end{align}
Here the last inequality holds because $\frac{\epsilon}{2}\ge \frac{2\tau^2}{\rho^2}\log 2$ for sufficiently large $n$. Combining \eqref{eq:edge_lowerbound} and the first inequality in \eqref{eq:nu_inequalities}, we obtain the first inequality in \eqref{eq:rate-sandwich}.
%\end{proof}

%!TEX root = ../main.tex

% \section{Approximating the Probability Matrix}
\section{Proof of Proposition~\ref{prop:init-error}}
The following lemma will be useful in the proof of Proposition \ref{prop:init-error}.
\begin{lemma}
Suppose a $d$-dimensional random vector $z \sim N(\mu, \tau^2 I_d)$. 
Let $M$ be a positive constant. 
Conditional on the event $\| z - \mu\|_2 \le \eta$ with $\eta/\tau \rightarrow \infty$ and $\tau \rightarrow 0$, we have, for $\|t \|_2 \le M$,
	\[
	\wt{\bbE}[ \exp(z^\top t)] 
	% = \exp(\mu^\top t + \tau^2 t^\top t/2 )  \bigl( 1 - C\tau M\bigr)
	= \exp\left(\mu^\top t + \frac{\tau^2 t^\top t}{2} \right) \bigl( 1 - o(1)\bigr),
	\]  
	where $C$ is a constant and $\wt{\bbE}$ denotes the expectation taken over the conditional measure of $z$ on $\| z - \mu \|_2 \le \eta$.
	\label{lem:limit-mgf}
\end{lemma}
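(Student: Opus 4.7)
The plan is to reduce to a truncated standard Gaussian MGF by writing $z = \mu + \tau w$ with $w \sim N(0, I_d)$. Then $z^\top t = \mu^\top t + \tau w^\top t$, and the unconditional MGF factors as $\bbE[\exp(z^\top t)] = \exp(\mu^\top t + \tau^2 \|t\|_2^2/2)$. The conditional expectation can be written as
\begin{equation*}
\wt{\bbE}[\exp(z^\top t)] = \frac{\exp(\mu^\top t)\, \bbE\left[\exp(\tau w^\top t)\,\indc{\|w\|_2 \le \eta/\tau}\right]}{\bbP(\|w\|_2 \le \eta/\tau)},
\end{equation*}
so the task is to show that both the numerator's truncation factor and the denominator are $1 - o(1)$.

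For the denominator, since $\eta/\tau \to \infty$, the complementary probability $\bbP(\|w\|_2 > \eta/\tau)$ tends to zero; a $\chi^2(d)$ tail bound (e.g.\ Lemma~1 of \cite{massart2000chisquare}, which was already used in the proof of Lemma~\ref{lem:edgecount-rate}) gives in fact exponential decay in $(\eta/\tau)^2$. Hence $\bbP(\|w\|_2 \le \eta/\tau) = 1 - o(1)$.

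For the numerator, I would split
\begin{equation*}
\bbE\left[\exp(\tau w^\top t)\right] = \bbE\left[\exp(\tau w^\top t)\,\indc{\|w\|_2 \le \eta/\tau}\right] + \bbE\left[\exp(\tau w^\top t)\,\indc{\|w\|_2 > \eta/\tau}\right],
\end{equation*}
and bound the tail piece via Cauchy--Schwarz:
\begin{equation*}
\bbE\left[\exp(\tau w^\top t)\,\indc{\|w\|_2 > \eta/\tau}\right] \le \sqrt{\bbE[\exp(2\tau w^\top t)]\cdot \bbP(\|w\|_2 > \eta/\tau)} = \sqrt{e^{2\tau^2\|t\|_2^2}\cdot \bbP(\|w\|_2 > \eta/\tau)}.
\end{equation*}
Since $\|t\|_2 \le M$ and $\tau \to 0$, the factor $e^{2\tau^2 \|t\|_2^2}$ is $O(1)$, while the tail probability is $o(1)$, so the tail piece is $o(1)$. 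Since the unconditional expectation $\bbE[\exp(\tau w^\top t)] = \exp(\tau^2\|t\|_2^2/2) = 1 + o(1)$, subtracting off the $o(1)$ tail yields the truncated expectation equal to $\exp(\tau^2\|t\|_2^2/2)(1 - o(1))$.

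Assembling these two pieces,
\begin{equation*}
\wt{\bbE}[\exp(z^\top t)] = \frac{\exp(\mu^\top t + \tau^2 \|t\|_2^2/2)(1 - o(1))}{1 - o(1)} = \exp\left(\mu^\top t + \frac{\tau^2 t^\top t}{2}\right)(1 - o(1)),
\end{equation*}
as claimed. The only delicate point is that the $o(1)$ terms must be uniform over $\|t\|_2 \le M$, but this is automatic since the Cauchy--Schwarz bound depends on $t$ only through $\|t\|_2^2$, which is bounded by $M^2$, so all error bounds can be made uniform in $t$ within the ball of radius $M$. No truly hard step is anticipated; the proof is essentially a routine tightness/uniform-integrability argument for the truncated Gaussian MGF, packaged appropriately.
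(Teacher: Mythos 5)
Your proof is correct, and it takes a genuinely different (and arguably more elementary) route than the paper's. The paper substitutes $z=\mu+\tau u$ and completes the square in the exponent to reduce the truncated MGF to the ratio $\exp(\tau^2 t^\top t/2)\,A/B$ with $A=\Prob_0(\|u+\tau t\|_2\le \eta/\tau)$ and $B=\Prob_0(\|u\|_2\le\eta/\tau)$, then sandwiches $A/B$ between $1$ (which uses the Anderson-type fact that a centered ball maximizes Gaussian probability among translates) and $\Prob\bigl(\chi_d^2\le(\eta/\tau-\tau\|t\|_2)^2\bigr)/\Prob\bigl(\chi_d^2\le(\eta/\tau)^2\bigr)$, the latter controlled by a $\chi_d$-density bound and the hypothesis $\tau\to 0$. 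You instead factor out $\exp(\mu^\top t)$, leave the normalization $\Prob(\|w\|_2\le\eta/\tau)$ in the denominator untouched (which tends to $1$ by the $\chi^2$ tail), and control the missing mass in the numerator by Cauchy--Schwarz against $\Prob(\|w\|_2>\eta/\tau)$; this avoids both the change of variables and Anderson's inequality, and depends on $\tau$ only through its boundedness. The one thing you give up is the one-sided bound $A/B\le 1$ that the paper establishes (your assembled ratio $(1-o(1))/(1-o(1))$ is really $1+o(1)$ with a signed $o(1)$), but since the lemma's downstream uses only need a two-sided $1+o(1)$ estimate, this is immaterial. Your note on uniformity in $\|t\|_2\le M$ is exactly the right observation to make the $o(1)$ rigorous.
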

\begin{proof}
	Without loss of generality, we assume $\mu = 0$.
	We calculate
	\begin{align*}
	\wt{\bbE}[\exp(z^\top t)] & = \frac{\int_{\|z \|_2 \le \eta} \exp( z^\top t) \exp\bigl(- z^\top z/(2\tau^2)\bigr)/\bigl(\sqrt{2 \pi \tau^2}\bigr)^d dz }
	{\int_{\|z \|_2 \le \eta} \exp\bigl(- z^\top z/(2\tau^2)\bigr)/\bigl(\sqrt{2 \pi \tau^2}\bigr)^d dz} 
	\\
	& = \exp(\tau^2 t^\top t/2) 
	\frac{  \int_{\| z + \tau t\|_2 \le \eta/\tau} \exp\bigl(-z ^\top z/2\bigl)/\bigl( \sqrt{2 \pi }\bigr)^d dz} 
	{ \int_{\| z\|_2 \le \eta/\tau} \exp(-z^\top z/2)/\bigl( \sqrt{2 \pi} \bigr)^d dz }. % \\
	%& =: \exp(-\tau^2 t^\top t/2) 
	%\frac{ 1- A}{ 1- B }.
	\end{align*}
	Denote the probability measure of $N(0, I_d)$ by $\bbP_0$ and we define
	\begin{gather*}
	A  : =  \int_{\| z + \tau t\|_2 \le \eta/\tau} \exp\bigl(-z^\top z/2\bigl)/\bigl( \sqrt{2 \pi }\bigr)^d dz = \bbP_0( \| z + \tau t \|_2 \le \eta/\tau),\\ %\le \bbP\bigl( \chi^2_d > (\eta/\tau)^2\bigr)
	B  :=  \int_{\| z\|_2 \le \eta/\tau} \exp(-z^\top z/2)/\bigl( \sqrt{2 \pi} \bigr)^d dz = \bbP_0( \| z \|_2 \le \eta/\tau) = \bbP( \chi_d^2 \le (\eta/\tau)^2).
	\end{gather*}
	We note that 
	\[
	\bbP\bigl(\chi^2_d \le (\eta/\tau - \tau \|t\|_2 )^2\bigr) = \bbP_0( \|z\|_2 \le \eta/\tau - \tau \| t\|_2 ) \le A \le \bbP_0( \| z \|_2 \le \eta/\tau ) = \bbP\bigl( \chi^2_d \le (\eta/\tau)^2 \bigr).
	\]
	As a result, we bound
	\begin{align*}
	1 \ge \frac{A}{B} \ge  \frac{ \bbP\bigl( \chi^2_d \le (\eta/\tau - \tau \|t\|_2)^2 \bigr) }
	{  \bbP\bigl( \chi_d^2 \le (\eta/\tau)^2 \bigr)} = 1 - o(1).
	\end{align*}
	The last equlity comes from the trivial bound of $\chi^2$ distribution after choosing $\eta/\tau$ sufficiently large such that
	\begin{equation*}
	\frac{ \bbP\bigl((\eta/\tau)^2 \le \chi^2_d \le (\eta/\tau - \tau \|t\|)^2 \bigr) }
	{  \bbP\bigl( \chi_d^2 \le (\eta/\tau)^2 \bigr)} \le 2 f_d\bigl( (\eta/\tau - \tau M) \bigr) \tau M \le C \tau M,
	\end{equation*}
where $f_d$ is the density function of $\chi_d$ and 
$C =2 \sup_x f_d(x)$.
\end{proof}

\begin{proof}[{Proof of Proposition~\ref{prop:init-error}}]
	First of all, by law of total expectation,
	\begin{align*}
	\Prob(\ell(\sigma,\hsigma^0)> \gamma) = \Expect^{\{\alpha_i, z_i\}_{i=1}^n} \left[\Prob \left(\ell(\sigma,\hsigma^0) > \gamma | \{\alpha_i, z_i\}_{i=1}^n\right) \right].
	\end{align*}
Given $\{\alpha_i, z_i\}_{i=1}^n$, the probability matrix $P$ is deterministic. 
Let $\mu_i$ be the mean value of $z_i$, that is, $\mu_i=\mu$ for $i=1,\cdots,n_1$ and $\mu_i=-\mu$ for $i=n_1+1,\cdots,n$. Let $\xi_{ij}=\Expect [e^{z_i^\top H z_j}]$ for $i\neq j$ and 
$\xi_{ii}=\Expect [e^{z_1^\top H z_2}]$. 
Define 
	\begin{align}
	B_{ij}= e^{\alpha_i+\alpha_j}\xi_{ij}. \label{eq:def-B}
	\end{align}
We further denote $\xi_{+}=\Expect[e^{z_1^\top H z_2}]$ 
and 
$\xi_{-}=\Expect[e^{z_1^\top H z_{n_1+1}}]=\Expect[e^{-z_1^\top H z_2}]$, 
then $B_{ij}=e^{\alpha_i+\alpha_j}\xi_{+}$ if $\sigma_i=\sigma_j$ and $B_{ij}=e^{\alpha_i+\alpha_j}\xi_{-}$ otherwise. 
Note that $B$ is a matrix of rank $2$, 
and we will show the proximity of $B$ and $\widehat{P}$ on a high-probability event.

\medskip
   
\paragraph{Step 1: Finding high probability event.}
Define $\mathbb{D}=\{(\omega_1,\cdots,\omega_n): (1/n_u)\sum_{\{i:\sigma_i=u\}}e^{4\omega_i} \le \overline{L}^4 \mathrm{\ for\ } u=1,2\}$. By \eqref{eq:alpha3} of Assumption \ref{assump:alpha},
    \begin{align}
	\Prob((\omega_1,\cdots,\omega_n)\in \mathbb{D}^c)\le 2n^{-(1+C_1)} \le n^{-(1+C_1/2)}. \label{eq:Dc_bound}
    \end{align}
%    Define $\mathbb{D}=\{(\omega_1,\cdots,\omega_n): (1/n_u)\sum_{\{i:\sigma_i=u\}}e^{4\omega_i} \le 2 \Expect(e^{4\omega_1}) \mathrm{\ for\ } u=1,2\}$. By Bernstein inequality, 
%    \begin{align*}
%    \Prob\left(\sum_{\{i:\sigma_i=u\}}e^{4\omega_i}-n_u \Expect(e^{4\omega_1})> n_u \Expect(e^{4\omega_1}) \right) \le \exp\left\{-\frac{\left(n_u \Expect(e^{4\omega_1})\right)^2}{2 n_u \Var(e^{4\omega_1})+\frac{2}{3}e^{4\omegabar} n_u \Expect(e^{4\omega_1})}\right\}.
%    \end{align*}
%    By Assumption \ref{assump:alpha},  $\Var(e^{4\omega_1})=\Expect(e^{8\omega_1})- (\Expect(e^{4\omega_1}))^2\le \Expect(e^{4\omega_1})e^{4\omegabar}-(\Expect(e^{4\omega_1}))^2\le 2\Expect(e^{4\omega_1}) e^{4\omegabar}\le 2Ce^{4\omegabar}$ and $\Expect(e^{4\omega_1})\ge e^{-4\underline{\omega}}$. Hence we have
%    \begin{align*}
%    \Prob\left(\sum_{\{i:\sigma_i=u\}}e^{4\omega_i}-n_u \Expect(e^{4\omega_1})> n_u \Expect(e^{4\omega_1}) \right) \le & \exp\left\{-\frac{n_u^2 e^{-8\underline{\omega}}}{4C n_u e^{4\omegabar}+\frac{2}{3}Cn_u e^{4\omegabar}} \right\} \\
%     = & \exp\left\{-\frac{n_u}{\frac{14}{3}C e^{8\underline{\omega}} e^{4\omegabar}} \right\} \\
%     \le & \exp\left\{-\frac{n}{14 C e^{8\underline{\omega}} e^{4\omegabar}} \right\} \\
%     \le & n^{-(2+C_1)},
%    \end{align*}
%    where the penultimate inequality is due to $n_u\ge n/3$ for large $n$, and the last inequality is due to $\eqref{eq:alpha5}$ of Assumption \ref{assump:alpha}. By union bounds,
%	\begin{align}
%	\Prob((\omega_1,\cdots,\omega_n)\in \mathbb{D}^c)\le 2n^{-(2+C_1)} \le n^{-(1+C_1)}. \label{eq:Dc_bound}
%	\end{align}
\noindent
Let $\eta=\tau\sqrt{12 \log n}$, then by Assumption \ref{assump:tau}, $\eta\le \sqrt{12}c$. 
Define 
\begin{equation*}
\mathbb{B}_\eta=\{(z_1,\cdots,z_n):\|z_i-\mu_i\|_2\le \eta,1\le i\le n\}.	
\end{equation*}
Since $\frac{\eta^2}{\tau^2} > d+ 2\sqrt{d \frac{\eta^2}{4\tau^2}}+ \frac{\eta^2}{2\tau^2}$ when $n$ is large, by Lemma 1 of \cite{massart2000chisquare}, 
\begin{align*}
	\Prob(\|z_i-\mu_i\|_2 > \eta) & = \Prob\left(\frac{1}{\tau^2}\|z_i-\mu_i\|_2^2 > \frac{\eta^2}{\tau^2}\right) \\
	& < \Prob\left(\frac{1}{\tau^2}\|z_i-\mu_i\|_2^2-d > 2\sqrt{d \frac{\eta^2}{4\tau^2}}+ \frac{\eta^2}{2\tau^2} \right) 
	\le \exp\left\{-\frac{\eta^2}{4\tau^2}  \right\}.
\end{align*}
Therefore,
\begin{align}
\Prob(\mathbb{B}_\eta^c) \le n \exp\left\{ -\frac{\eta^2}{4\tau^2}\right\}
=n^{-2}. 
\label{eq:B_bound}
\end{align}
Assume $(z_1,\cdots,z_n)\in \mathbb{B}_\eta$, then $z_i^\top H z_j\le \mu_i^\top H\mu_j+ \eta \|H\mu_i\|_2+\eta \|H\mu_j\|_2+\eta^2 \|H\|_2\le \mu_i^\top H\mu_j+ \sqrt{12} c \|H\mu_i\|_2+\sqrt{12} c \|H\mu_j\|_2+12 c^2 \|H\|_2$ which is a constant. Hence there is a positive constant $\overline{\xi}$ such that
\begin{equation}
	\label{eq:xibar}
	e^{z_i^\top H z_j} \leq \overline{\xi}\quad \mbox{on $\mathbb{B}_\eta$}.
\end{equation}
% $e^{z_i^\top H z_j}$ has a constant upper bound on $\mathbb{B}_{\eta}$, which we denote by $\overline{\xi}$.
	
Let $f_{ij}= \bigl(e^{z_i^\top H z_j} - \xi_{ij} \bigr)^2$, and define the set
\begin{equation*}
\mathbb{C}_r= \Big\{(z_1,\cdots,z_n): \sum_{1\le i\neq j\le n} f_{ij}\le 4 r^2 n (n-1)/(\log n)^{1-\omegarate}
\Big\}.	
\end{equation*}
for any small constant $\epsilon_1 \in (0,0.01)$ and some fixed constant $r>0$. 
We will specify the choice of $r$ later. 
Since $\xi_{ij}$, $\eta$ and $\|H\|_2$ are all constants, by \eqref{eq:xibar},
$f_{ij}$ has a uniform constant upper bound for all $1\le i\neq j\le n$ on $\mathbb{B}_\eta$, which we denote by $\overline{f}$. 
Write $\Phi_\eta^{+}$ as the measure of $z_i$ conditioned on $\| z_i - \mu\|_2 \le \eta$ for $i\in [n_1]$, and $\Phi_\eta^{-}$ for $n_1+1 \le i \le n$. The conditional distribution of $\{z_i\}_{1\le i \le n}$ on $\mathbb{B}_\eta$ is  
	\[
	\underbrace{\Phi_\eta^{+} \times \cdots \times \Phi_\eta^{+}}_{n_1} \times 
	\underbrace{\Phi_\eta^{-} \times \cdots \times \Phi_\eta^{-}}_{n_2},
	\]
where $\times $ denotes the product measure.
In particular, $z_i$'s are still mutually independent conditioned on $\mathbb{B}_\eta$. 
Hence, for any particular $i\in [n]$, $(f_{ij})_{j\neq i}$ are independent, and follow one of two distributions, depending on whether node $j$ is in the same community as node $i$. 
Thus we define
	\begin{align*}
	f_{i+}= & \wt{\Expect}^{z_j} ( f_{ij} |z_i) \mathrm{\ for\ any\ } 1\le j\le n_1, j\neq i, \\
	f_{i-}= & \wt{\Expect}^{z_j} ( f_{ij} |z_i) \mathrm{\ for\ any\ } n_1+1\le j\le n, j\neq i, \\
	f_{++} = & \wt{\Expect}^{z_i} (f_{i+}), \mathrm{\ for\ any\ } 1\le i\le n_1, \\
	f_{-+} = & \wt{\Expect}^{z_i} (f_{i+}), \mathrm{\ for\ any\ } n_1+1\le i\le n, \\
	f_{+-} = & \wt{\Expect}^{z_i} (f_{i-}), \mathrm{\ for\ any\ } 1\le i\le n_1, \\
	f_{--} = & \wt{\Expect}^{z_i} (f_{i-}), \mathrm{\ for\ any\ } n_1+1\le i\le n,
	\end{align*}
where $\wt{\Expect}^{z_j} (\cdot | z_i)$ in the first two equations denotes expectation with respect to the distribution of $z_j$ conditional on $z_i$ and $\|z_j-\mu_j\|_2\le \eta$, and $\wt{\Expect}^{z_i} (\cdot)$ in the last four equalities means expectation with respect to the distribution of $z_i$ conditional on $\|z_i-\mu_i\|_2\le \eta$. By Bernstein's inequality, we obtain
	\begin{align*}
	& \wt{\Prob}\left(\sum_{j\neq i}^n f_{ij} - \sum_{j\neq i}^n \wt{\Expect}^{z_j}(f_{ij}|z_i) > r^2 \frac{n-1}{(\log n)^{1-\omegarate}} \bigg| z_i \right) \\
	&~~~ \le \exp\left\{-\frac{r^4 (n-1)^2/(\log n)^{2(1-\omegarate)} } {2\sum_{j\neq i}\wt{\mathrm{Var}}^{z_j} (f_{ij}|z_i)+\frac{2}{3}\overline{f} r^2 (n-1) /(\log n)^{1-\omegarate}} \right\},
	\end{align*}
	% for some $0<\omegarate<1$,
where $\wt{\Prob}$ and $\wt{\mathrm{Var}}^{z_j}(\cdot)$ are taken over the distribution of $z_j$ conditional on $\|z_j-\mu_j\|_2\le \eta$. By direct calculation we have
	\begin{align*}
	\wt{\mathrm{Var}}^{z_j}(f_{ij}|z_i) = \wt{M}_{ij}^{(4)}-4\xi_{ij} \wt{M}_{ij}^{(3)} +4\xi_{ij}^2 \wt{M}_{ij}^{(2)} + 4\xi_{ij} \wt{M}_{ij}^{(1)} \wt{M}_{ij}^{(2)}-(\wt{M}_{ij}^{(2)})^2 - 4\xi_{ij}^2 (\wt{M}_{ij}^{(1)})^2,
	\end{align*}
	where $\wt{M}_{ij}^{(l)}=\wt{\Expect}^{z_j}(e^{l z_i^\top H z_j}|z_i)$. Let $\zeta_{ij}=z_i^\top H \mu_j$ and $\iota_{i}=z_i^\top H^2 z_i$. Since $\|H z_i\|_2$ is upper bounded by a constant, by Lemma \ref{lem:limit-mgf}, $\wt{M}_{ij}^{(l)}=\exp\left(l \zeta_{ij}+\frac{\tau^2}{2} l^2 \iota_{i}\right)(1-o(1))$. Further calculation leads to
	\begin{align*}
	& \wt{\mathrm{Var}}^{z_j}(f_{ij}|z_i) \\
	= & (e^{\tau^2 \iota_{i}}-1) e^{2\zeta_{ij}+\tau^2 \iota_{i}}\left[e^{2\zeta_{ij}+3\tau^2 \iota_{i}}(e^{\tau^2 \iota_{i}}+1)(e^{2\tau^2 \iota_{i}}+1)- 4\xi_{ij}e^{\zeta_{ij}+\frac{3}{2}\tau^2 \iota_{i}}(e^{\tau^2 \iota_{i}}+1)+4\xi_{ij}^2\right](1+o(1)),
	\end{align*}
	which is upper bounded by $c_1 \tau^2$ with some constant $c_1>0$, since $\xi_{ij}$, $\zeta_{ij}$ and $\iota_i$ are upper bounded by constants. By Assumption \ref{assump:tau}, we have $2\sum_{j\neq i}\wt{\mathrm{Var}}^{z_j} (f_{ij}|z_i)\le 2c^2 c_1 (n-1)/\log n\le \frac{1}{3}\overline{f}r^2 (n-1)/(\log n)^{1-\omegarate}$ for large $n$. Consequently,
	\begin{align}
	\wt{\Prob}\left(\sum_{j\neq i}^n f_{ij} -\sum_{j\neq i}^n \wt{\Expect}^{z_j} (f_{ij}|z_i) > r^2 \frac{n-1}{(\log n)^{1-\omegarate}} \bigg| z_i \right) & \le \exp\left\{-\frac{r^2 (n-1)} {\overline{f} (\log n)^{1-\omegarate}} \right\} \le n^{-(2+C_2)} \label{eq:bernstein1}
	\end{align}
	for some constant $C_2>0$.
	
Recall that 
\begin{equation*}
\sum_{j\neq i}^n \wt{\Expect}^{z_j} (f_{ij}|z_i)
= 
\begin{cases}
(n_1-1)f_{i+} + n_2 f_{i-}, & 	1\le i\le n_1,\\
n_1 f_{i+} + (n_2-1) f_{i-}, &  n_1+1\le i\le n.
\end{cases}
\end{equation*}
% $\sum_{j\neq i}^n \wt{\Expect}^{z_j} (f_{ij}|z_i)=(n_1-1)f_{i+} + n_2 f_{i-}$ for $1\le i\le n_1$ and $\sum_{j\neq i}^n \wt{\Expect}^{z_j} (f_{ij}|z_i)=n_1 f_{i+} + (n_2-1) f_{i-}$ for $n_1+1\le i\le n$.
Since $f_{i+}, f_{i-}\le \overline{f}$ on $\mathbb{B}_\eta$ for any $1\le i\le n$, by Bernstein's inequality again, we obtain 
	\begin{align*}
	\wt{\Prob}\left(\sum_{i=1}^{n_1} f_{i+}-n_1 f_{++}> r^2 \frac{n_1}{(\log n)^{1-\omegarate}}\right)\le &  \exp\left\{-\frac{r^4 n_1^2/ (\log n)^{2(1-\omegarate)}} {2n_1 \wt{\mathrm{Var}}^{z_1}(f_{1+})+\frac{2}{3} \overline{f} r^2 n_1/(\log n)^{1-\omegarate}} \right\}. 
	%\\
	%	\Prob(\sum_{i=1}^m f_{i-}-m f_{+-}> mr^2)\le &  \exp\left\{-\frac{mr^4} {2\mathrm{Var}(f_{1-})+\frac{2}{3}r^2 \overline{f}} \right\} \\
	%	\Prob(\sum_{i=m+1}^{2m} f_{i+}-m f_{-+}> mr^2)\le &  \exp\left\{-\frac{mr^4} {2\mathrm{Var}(f_{(m+1)+})+\frac{2}{3}r^2 \overline{f}} \right\} \\
	%	\Prob(\sum_{i=m+1}^{2m} f_{i-}-m f_{--}> mr^2)\le &  \exp\left\{-\frac{mr^4} {2\mathrm{Var}(f_{(m+1)-})+\frac{2}{3}r^2 \overline{f}} \right\}.
	\end{align*}
We further bound the righthand side of the above display. 
By definition we have
	\begin{align*}
	f_{1+}= \wt{\Expect}^{z_j} ( f_{1j} |z_1) = \wt{M}_{1j}^{(2)} -2\xi_{+} \wt{M}_{1j}^{(1)}+\xi_{+}^2 \qquad \text{for } 1 \le j \le n_1, 
	\end{align*}
the variance of which is 
	\(
	%\mathrm{Var}(f_{1+})=
	\wt{\mathrm{Var}}^{z_1}(\wt{M}_{1j}^{(2)})+4\xi_{+}^2 \wt{\mathrm{Var}}^{z_1}(\wt{M}_{1j}^{(1)})- 4\xi_{+}\wt{\mathrm{Cov}}^{z_1}(\wt{M}_{1j}^{(2)}, \wt{M}_{1j}^{(1)}).
	\)
Since $z_1$ is bounded by constants and $\tau^2\to 0$, 
we can find a constant $c_1'>0$ such that 
$1 \le e^{4\tau^2\iota_{1}}\le 1+c_1'\tau^2$. 
Then we get
\begin{align*}
\wt{\mathrm{Var}}^{z_1}(\wt{M}_{1j}^{(2)}) 
& = \left[\wt{\Expect}^{z_1} e^{4\zeta_{1j} + 4\tau^2 \iota_{1}} - (\wt{\Expect}^{z_1} e^{2\zeta_{1j} + 2\tau^2 \iota_{1}})^2\right](1+o(1)) 
\\
& \le 
2\left[(1+c_1'\tau^2) \wt{\Expect}^{z_1} e^{4\zeta_{1j}} 
-  (\wt{\Expect}^{z_1} e^{2\zeta_{1j}})^2\right] \\
& = 2\left[ (1+c_1'\tau^2) e^{4\mu_1^\top H \mu_j+8\tau^2 \mu_j^\top H^2 \mu_j} - e^{4\mu_1^\top H \mu_j+4\tau^2 \mu_j^\top H^2 \mu_j}\right](1+o(1)) \\
& \leq 4e^{4\mu_1^\top H \mu_j+4\tau^2 \mu_j^\top H^2 \mu_j} \left[e^{4\tau^2 \mu_j^\top H^2 \mu_j}-1 +\tau^2 c_1' e^{4\tau^2 \mu_j^\top H^2 \mu_j} 
\right] \\
& \le c_2' \tau^2,
\end{align*}
for some constant $c_2'>0$. 
The last inequality is again due to $\tau^2\to 0$. 
We can use similar argument to get $\wt{\mathrm{Var}}^{z_1}(\wt{M}_{1j}^{(1)})\le c_3'\tau^2$ and $\wt{\mathrm{Cov}}^{z_1}(\wt{M}_{1j}^{(2)}, \wt{M}_{1j}^{(1)})\le c_4'\tau^2$. Therefore, we have $\wt{\mathrm{Var}}(f_{1+})\le c_2 \tau^2\le c^2 c_2/\log n\le \frac{1}{6}\overline{f}r^2/(\log n)^{1-\omegarate}$, where $c_2>0$ is a constant. This implies
	\begin{align}
	\wt{\Prob}\left(\sum_{i=1}^{n_1} f_{i+}-n_1 f_{++}> r^2 \frac{n_1}{(\log n)^{1-\omegarate}}\right)\le & \exp\left\{-\frac{r^2 n_1} {\overline{f}(\log n)^{1-\omegarate}}\right\} \le n^{-(3+C_{3})} \label{eq:bernstein21}
	\end{align}	
	for some constant $C_{3}>0$.
Similarly, we also obtain 
	\begin{align}
	\wt{\Prob}\left(\sum_{i=1}^{n_1} f_{i-}-n_1 f_{+-}> r^2 \frac{n_1}{(\log n)^{1-\omegarate}}\right) \le n^{-(3+C_{3})} \label{eq:bernstein22} \\
	\wt{\Prob}\left(\sum_{i=n_1+1}^{n} f_{i+}-n_2 f_{-+}> r^2 \frac{n_2}{(\log n)^{1-\omegarate}}\right)\le n^{-(3+C_{3})} \label{eq:bernstein23} \\
	\wt{\Prob}\left(\sum_{i=n_1+1}^{n} f_{i-}-n_2 f_{--}> r^2 \frac{n_2}{(\log n)^{1-\omegarate}}\right)\le n^{-(3+C_{3})}. \label{eq:bernstein24}
	\end{align}

Next we bound $f_{+-}, f_{+-}, f_{-+}$, and $f_{--}$. Since $z_1$ is bounded by constants and $\tau^2\to 0$, we can find constants $c_1''>0$ such that $e^{2\tau^2 \iota_{1}}\le 1+c_1''\tau^2$.
% ,
% $e^{\frac{\tau^2}{2}\iota_{1}}\ge 1-c_2'' \tau^2$.
Then 
\begin{align}
f_{++} & =\wt{\Expect}^{z_1} [\wt{M}_{12}^{(2)} -2\xi_{+} \wt{M}_{1j}^{(1)}+\xi_{+}^2] 
\nonumber \\
& = 
\left[\wt{\Expect}^{z_1} [e^{2\zeta_{12}+ 2\tau^2 \iota_{1}}] 
-2\xi_{+} \wt{\Expect}^{z_1} [e^{\zeta_{12}+\frac{\tau^2}{2} \iota_{1}}] 
+ \xi_{+}^2 \right]
(1+o(1)) 
\nonumber \\
& \le 2 \left[(1+c_1''\tau^2) \wt{\Expect}^{z_1} [e^{2\zeta_{12}} ]
- 2\xi_{+}  \wt{\Expect}^{z_1} [e^{\zeta_{12}}]
+ \xi_{+}^2\right] 
\nonumber \\
& = 2\left[(1+c_1''\tau^2) e^{2\mu^\top H \mu+2\tau^2 \mu^\top H^2 \mu} 
-2\xi_{+} e^{\mu^\top H \mu+\frac{\tau^2}{2}\mu^\top H^2 \mu}
+ \xi_{+}^2 \right](1+o(1)) 
\label{eq:f++-bound-1}
\\
& \le 
4\left[(1+c_1''\tau^2) e^{2\mu^\top H \mu+2\tau^2 \mu^\top H^2 \mu} 
-2\xi_{+} e^{\mu^\top H \mu+\frac{\tau^2}{2}\mu^\top H^2 \mu}
+ \xi_{+}^2 \right]. 
\label{eq:f++-bound}
\end{align}
Here equality \eqref{eq:f++-bound-1} is due to Lemma \ref{lem:limit-mgf}.
By the definition of $\xi_{+}$, 
we have
\begin{align*}
\xi_{+} & = 
\Expect^{z_1}\left[\Expect^{z_2} \left[e^{z_1^\top H z_2}|z_1\right]\right] 
= \Expect^{z_1}\left[ 
e^{z_1^\top H \mu+\frac{\tau^2}{2} z_1^\top H^2 z_1}
\right].
\end{align*}
Let $z_1=\mu+\tau y_1$. 
Direct calculation leads to
\begin{align*}
\xi_{+} & = 
e^{\mu^\top H \mu +\frac{\tau^2}{2} \mu^\top H^2 \mu} 
\,\Expect\left[ e^{\frac{\tau^4}{2}y_1^\top H^2 y_1 + \tau \mu^\top H (I+ \tau^2 H) y_1}\right] \\
& = 
\left[\det(I-\tau^4 H^2)\right]^{-\frac{1}{2}} \exp\left\{\mu^\top H \mu 
+\frac{\tau^2}{2} \left[\mu^\top H^2 \mu + \mu^\top H (I+\tau^2 H)(I-\tau^4 H^2)^{-1}(I+\tau^2 H) H\mu\right]\right\}.
\end{align*}
By Taylor expansion, we have $\det(I-\tau^4 H^2)=1-\tau^4 \mathrm{Tr}(H^2)+o(\tau^4)$. 
Further note that $H^2$ is p.s.d., and so 
$1  \le \left[\det(I-\tau^4 H^2)\right]^{-\frac{1}{2}}\le 1+c_2''\tau^2$ for some constant $c_2''>0$. 
In addition, as $\tau\to 0$,
\begin{equation*}
\mu^\top H^2 \mu + \mu^\top H (I+\tau^2 H)(I-\tau^4 H^2)^{-1}(I+\tau^2 H) H\mu\to 2\mu^\top H^2 \mu.	
\end{equation*}
Therefore, we have 
\begin{align*}
	1 \le \exp\left\{ \frac{\tau^2}{2} \left[\mu^\top H^2 \mu + \mu^\top H (I+\tau^2 H)(I-\tau^4 H^2)^{-1}(I+\tau^2 H) H\mu\right]\right\} 
	\le (1+c_3''\tau^2)  
\end{align*}
for some constant $c_3''>0$. 
Therefore, we can find a constant $c_4''>0$ such that 
\begin{equation*}
e^{\mu^\top H \mu} \le \xi_{+}\le (1+c_4''\tau^2)e^{\mu^\top H \mu}. 	
\end{equation*}
Plugging this into \eqref{eq:f++-bound}, we get
\begin{align*}
f_{++} 
% \\
&
\le 4\left[
(1+c_1''\tau^2) e^{2\mu^\top H \mu+2\tau^2 \mu^\top H^2 \mu} 
-2 e^{\mu^\top H \mu}\,e^{\mu^\top H \mu+\frac{\tau^2}{2}\mu^\top H^2 \mu}
+ (1+c_4'' \tau^2)^2 e^{2\mu^\top H \mu} 
\right] 
\\ 
& \le 4 e^{2\mu^\top H \mu} 
\left(
e^{2\tau^2 \mu^\top H^2 \mu} 
-2e^{\frac{\tau^2}{2}\mu^\top H^2 \mu}
+1
+c_5'' \tau^2  
\right) \\
& \le c_3\tau^2,
\end{align*}
where $c_5''>0$, $c_3>0$ are constants. 
The last two inequalities are both due to $\tau^2\to 0$. We can bound $f_{+-}, f_{-+}, f_{--}$ in similar ways. Assumption \ref{assump:tau} then ensures that for sufficiently large values of $n$, 
\begin{equation}
	\label{eq:fpm-bounds}
\max\{f_{++}, f_{+-}, f_{-+}, f_{--}\} \le 2r^2/(\log n)^{1-\omegarate}. 	
\end{equation}
In view of the decomposition
	\begin{align*}
	\sum_{1\le i\neq j\le n} f_{ij} = & \sum_{i=1}^{n_1} \left[\sum_{j\neq i}f_{ij} -(n_1-1) f_{i+}-n_2 f_{i-}\right] + \sum_{i=n_1+1}^{n} \left[ \sum_{j\neq i} f_{ij} -n_1 f_{i+}-(n_2-1) f_{i-}\right] \\
	& + (n_1-1)\sum_{i=1}^{n_1} (f_{i+}-f_{++}) + n_2\sum_{i=1}^{n_1} (f_{i-}-f_{+-})\\
	& + n_1 \sum_{i=n_1+1}^{n} (f_{i+}-f_{-+}) 
	 + (n_2-1)\sum_{i=n_1+1}^{n} (f_{i-}-f_{--}) 
	\\
	& + n_1(n_1-1)f_{++} + n_1 n_2 f_{+-} + n_1 n_2 f_{-+} +n_2 (n_2-1) f_{--}
	\end{align*}
and that \eqref{eq:fpm-bounds} implies
\begin{equation*}
n_1(n_1-1)f_{++} + n_1 n_2 f_{+-} + n_1 n_2 f_{-+} +n_2 (n_2-1) f_{--}\le 
\frac{2r^2 n(n-1)}{(\log n)^{1-\omegarate}},	
\end{equation*}
 we obtain
	\begin{align}
	& \Prob\left((z_1,\cdots,z_n)\in \mathbb{C}_{r}^c \bigg|(z_1,\cdots,z_n)\in \mathbb{B}_\eta \right) = \wt{\Prob}\left(\sum_{1\le i\neq j\le n} f_{ij}> 4r^2 n(n-1)/(\log n)^{1-\omegarate} \right) \nonumber \\
	& \le \wt{\Prob} \left(\sum_{1\le i\neq j\le n} f_{ij}- n_1(n_1-1)f_{++} - n_1 n_2 f_{+-} - n_1 n_2 f_{-+} - n_2 (n_2-1) f_{--}> 2r^2 n(n-1)/(\log n)^{1-\omegarate} \right) \nonumber \\
	& \le \sum_{i=1}^{n_1} \wt{\Expect}^{z_i}\left[\wt{\Prob}\left(\sum_{j\neq i} f_{ij}-(n_1-1) f_{i+}-n_2 f_{i-}> r^2 (n-1)/(\log n)^{1-\omegarate} \bigg| z_i\right)\right] \nonumber \\
	&~~~ + \sum_{i=n_1+1}^{n} \wt{\Expect}^{z_i}\left[ \wt{\Prob}\left(\sum_{j\neq i} f_{ij}-n_1 f_{i+}-(n_2-1) f_{i-}> r^2 (n-1)/(\log n)^{1-\omegarate} \bigg| z_i\right) \right] \nonumber \\
	&~~~ + \wt{\Prob}\left(\sum_{i=1}^{n_1} f_{i+}-n_1 f_{++}>r^2 n_1 /(\log n)^{1-\omegarate} \right) + \wt{\Prob}\left(\sum_{i=1}^{n_1} f_{i-}-n_1 f_{+-}> r^2 n_1 /(\log n)^{1-\omegarate} \right) \nonumber \\ 
	&~~~ + \wt{\Prob}\left(\sum_{i=n_1+1}^{n} f_{i+}-n_2 f_{-+}>r^2 n_2 /(\log n)^{1-\omegarate} \right) + \wt{\Prob} \left(\sum_{i=n_1+1}^{n} f_{i-}-n_2 f_{--}>r^2 n_2 /(\log n)^{1-\omegarate} \right) \nonumber \\	
	& \le n^{-(1+C_2)}+ 4 n^{-(3+C_3)}\le n^{-(1+C_2)}+ n^{-(1+C_3)}. \nonumber 
	\end{align}
	%where with slight abuse of notation, we denote by $\wt{\Prob}$ the conditional law of $(z_i)_{i=1}^n$ on the event $(z_i)_{i=1}^n \in \bbB_\eta$. 
The penultimate inequality is due to \eqref{eq:bernstein1}--\eqref{eq:bernstein24}.
We then have for large $n$
\begin{equation}
\begin{aligned}
	\Prob\bigl((z_1,\cdots,z_n)\in \mathbb{B}_\eta \cap \mathbb{C}_{r}^c\bigr) & \le \Prob\bigl((z_1,\cdots,z_n)\in \mathbb{C}_{r}^c|(z_1,\cdots,z_n)\in \mathbb{B}_\eta \bigr) \\
& \le  n^{-(1+C_2)}+ n^{-(1+C_3)}. 
\end{aligned}	
\label{eq:B_Cc_bound}
\end{equation}

\medskip

\paragraph{Step 2: Bounding initialization error.}	
The next part of the proof is in line with the proofs of Lemma 1 and Corollary 2 in \cite{gao2018community}.  
Let $B_i$ denote the $i$th row of $B$, which is defined by \eqref{eq:def-B}, and define $\bar{B}_i=\|B_i\|_1^{-1} B_i$. 
Throughout this part, we conduct all the calculation on the intersection of the events $\{ (z_1,\cdots, z_n)\in \mathbb{B}_\eta \cap \mathbb{C}_{r}^c \}$ and $\{ (\omega_1,\cdots,\omega_n)\in \mathbb{D} \}$. 
	
\subparagraph
{Step 2.1: Establishing the separation condition for the rows of $\bar{B}$.} Note that $\bar{B}_i=\bar{B}_j$ when $\sigma_i=\sigma_j$, we only need to lower bound $\|\bar{B}_1-\bar{B}_n\|_1$. Let $L_u=\sum_{\sigma_i=u} e^{\omega_i}$ for $u=1,2$. When $L_1 \xi_{+}+L_2\xi_{-}\le L_1 \xi_{-}+L_2\xi_{+}$, we have 
	\begin{align*}
	\|\bar{B}_1-\bar{B}_n\|_1 \ge & \sum_{i=1}^{n_1} |\bar{B}_{1i}-\bar{B}_{ni}| = \sum_{i=1}^{n_1} \left|\frac{e^{\omega_i}\xi_{+}}{L_1 \xi_{+}+L_2 \xi_{-}}- \frac{e^{\omega_i}\xi_{-}}{L_1 \xi_{-}+L_2 \xi_{+}}\right| \\
	= & \frac{1}{L_1 \xi_{-}+L_2 \xi_{+}}\sum_{i=1}^{n_1} e^{\omega_i} \left| \frac{L_1 \xi_{-}+L_2 \xi_{+}}{L_1 \xi_{+}+L_2 \xi_{-}}\xi_{+}-\xi_{-} \right| \\
	\ge & \frac{L_1(\xi_{+}-\xi_{-})}{L_1 \xi_{+}+L_2 \xi_{-}}.
	\end{align*}
	Since $L_u\le (n_u \sum_{i=\sigma_u}e^{2\omega_i})^{1/2}\le \left(n_u\sqrt{n_u \sum_{i=\sigma_u}e^{4\omega_i}}\right)^{1/2}\le n_u \overline{L}$ for $u=1,2$ and $L_1\ge n_1 e^{-\underline{\omega}}\ge \frac{1}{3}n e^{-\underline{\omega}}$, we obtain
	\begin{align*}
	\|\bar{B}_1-\bar{B}_n\|_1 \ge \frac{\frac{1}{3}n e^{-\underline{\omega}}(\xi_{+}-\xi_{-})}{n \overline{L} \xi_{+}}=\frac{\xi_{+}-\xi_{-}}{3 e^{\underline{\omega}} \overline{L} \xi_{+}}.
	\end{align*}
	A similar argument holds when $L_1 \xi_{+}+L_2\xi_{-}> L_1 \xi_{-}+L_2\xi_{+}$ by using $\|\bar{B}_1-\bar{B}_n\|_1 \ge \sum_{i=n_1+1}^{n} |\bar{B}_{1i}-\bar{B}_{ni}|$ at the beginning of the sequence of inequalities. Therefore, the separation condition holds for $\bar{B}$
	\begin{align*}
	\min_{\sigma_i\neq \sigma_j} \|\bar{B}_i-\bar{B}_j\|_1 \ge \frac{\xi_{+}-\xi_{-}}{3 e^{\underline{\omega}} \overline{L} \xi_{+}}.
	\end{align*}
    
\subparagraph{Step 2.2: Bounding $\sum_{\hsigma^0_i\neq \sigma_i} e^{\omega_i}$.} Let $\widehat{v}_1$ and $\widehat{v}_2$ be the centroids from the $k$-median step of Algorithm~\ref{alg:init}. 
    Recall $\indexset_0 := \{i: \widehat{\sigma}_i = 0  \}$ from Algorithm~\ref{alg:init}.
    Fill matrix $\widehat{V}\in \mathbb{R}^{n\times n}$ with  $\widehat{V}_i=\widehat{v}_{\hsigma^0_i}$ being its $i$th row, if $i \in \indexset_0^c$ and $\widehat{V}_i=(0,\cdots,0)$ if $i\in \indexset_0$. Let $\indexset=\{i\in \indexset_0^c: \|\widehat{V}_i-\bar{B}_i\|_1\ge \frac{\xi_{+}-\xi_{-}}{6 e^{\underline{\omega}} \overline{L} \xi_{+}}\}$. 
    %({\color{blue} $J_0$ is the set of $i$ where $\|\widehat{P}_i\|_1=0$. This notation should appear in the algorithm.}) 
    As in Lemma 5 of \cite{gao2018community} we define
    \begin{align*}
    \calC_u & =\{i\in \indexset_0^c: \sigma_i=u, \|\widehat{V}_i-\bar{B}_i\|_1< \frac{\xi_{+}-\xi_{-}}{6 e^{\underline{\omega}} \overline{L} \xi_{+}}\}, \\
    R_1 & =\{u\in\{1,2\}: \calC_u=\emptyset\}, \\
    R_2 & =\{u\in\{1,2\}: \calC_u\neq \emptyset, \forall i,j\in \calC_u, \hsigma^0_i=\hsigma_j^0\}, \\
    R_3 & =\{u\in\{1,2\}: \calC_u\neq \emptyset, \exists i,j\in \calC_u, \mathrm{s.t.\ } i\neq j, \hsigma^0_i\neq \hsigma_j^0\}.
    \end{align*}
%    For $u\in R_3$, we have $\sum_{i\in \calC_u}e^{\omega_i}\le L_u\le n_u \overline{L}\le n\overline{L}$. For $u\in R_1$, we have $L_u \ge n_u e^{-\underline{\omega}}\ge \frac{1}{3}n e^{-\underline{\omega}}$. 
The counting argument in Lemma 5 of \cite{gao2018community} implies $|R_3|\leq |R_1|$. 
Therefore, 
    \begin{align*}
    \sum_{i \in \cup_{u\in R_3} \calC_u} e^{\omega_i} \le |R_3| n \overline{L} \le |R_1| n \overline{L} \le 3 e^{\underline{\omega}} \overline{L} \sum_{i\in \indexset} e^{\omega_i}.
    \end{align*}
    Here the last inequality holds because $\sum_{i\in \indexset} e^{\omega_i}\ge \sum_{u\in R_1}\sum_{i\in C_u^c} e^{\omega_i}=\sum_{u\in R_1}\sum_{\sigma_i=u} e^{\omega_i}\ge |R_1|\frac{1}{3}n e^{-\underline{\omega}}$. Hence we have obtained
    \begin{align}
    \sum_{\hsigma_i^0\neq \sigma_i} e^{\omega_i} \le \sum_{i \in \indexset_0} e^{\omega_i} + \sum_{i \in \indexset} e^{\omega_i} + \sum_{i \in \cup_{u\in R_3} \calC_u} e^{\omega_i}\le \sum_{i \in \indexset_0} e^{\omega_i} + (1+3 e^{\underline{\omega}} \bar{L}) \sum_{i \in \indexset} e^{\omega_i}. \label{eq:geometry-cluster}
    \end{align}
    
\subparagraph{Step 2.3: Bounding $\sum_{i \in \indexset_0} e^{\omega_i}$ and $\sum_{i \in \indexset} e^{\omega_i}$.} By definition of $\widehat{P}$ from Algorithm~\ref{alg:init}, we have 
    %({\color{blue} change the $\kmbound$ notation later})
    % edit: FN, changed to \varepsilon, cf the \epsilon notation in \nu, the contraction rate
    \begin{align*}
    \sum_{i=1}^n \|\widehat{P}_i\|_1 \|\widehat{V}_i-\tilde{P}_i\|_1\le (1+\kmbound)\sum_{i=1}^n \|\widehat{P}_i\|_1 \|\bar{B}_i-\tilde{P}_i\|_1.
    \end{align*}	
    Then a bound for $\sum_{i\in \indexset}\|\widehat{P}_i\|_1$ can be established by
    \begin{align*}
    \sum_{i\in \indexset}\|\widehat{P}_i\|_1 & \le \frac{6 e^{\underline{\omega}} \overline{L} \xi_{+}}{\xi_{+}-\xi_{-}} \sum_{i\in \indexset}  \|\widehat{P}_i\|_1 \|\widehat{V}_i-\bar{B}_i\|_1 \\
    & \le \frac{6 e^{\underline{\omega}} \overline{L} \xi_{+}}{\xi_{+}-\xi_{-}} \sum_{i\in \indexset}  \left( \|\widehat{P}_i\|_1  \|\widehat{V}_i-\tilde{P}_i\|_1 + \|\widehat{P}_i\|_1  \|\tilde{P}_i-\bar{B}_i\|_1\right) \\
    & \le (2+\kmbound)\frac{6 e^{\underline{\omega}} \overline{L} \xi_{+}}{\xi_{+}-\xi_{-}} \sum_{i=1}^n \|\widehat{P}_i\|_1  \|\tilde{P}_i-\bar{B}_i\|_1 \\
    & \le (2+\kmbound)\frac{6 e^{\underline{\omega}} \overline{L} \xi_{+}}{\xi_{+}-\xi_{-}} \sum_{i=1}^n 2 \|\widehat{P}_i-B_i\|_1 \frac{\|\widehat{P}_i\|_1}{\|\widehat{P}_i\|_1 \vee \|B_i\|_1} \\
    & \le (2+\kmbound)\frac{12 e^{\underline{\omega}} \overline{L} \xi_{+}}{\xi_{+}-\xi_{-}} \sum_{i=1}^n \|\widehat{P}_i-B_i\|_1 \\
    & \le (2+\kmbound)\frac{12 e^{\underline{\omega}} \overline{L} \xi_{+}}{\xi_{+}-\xi_{-}} n \fnorm{\widehat{P}-B}.
    \end{align*}
    Since $\|B_i\|_1=e^{\alpha_i}\sum_{j=1}^n e^{\alpha_j}\xi_{ij}\ge e^{\omega_i} \frac{n}{3} e^{2\alphaavg-\underline{\omega}}\xi_{+}$, we can bound $\sum_{i\in \indexset} e^{\omega_i}$ by 
    \begin{align}
    \sum_{i\in \indexset} e^{\omega_i} & \le \frac{3}{ne^{2\alphaavg-\underline{\omega}} \xi_{+}} \sum_{i\in \indexset} \|B_i\|_1 \le \frac{3}{ne^{2\alphaavg- \underline{\omega}} \xi_{+}} \sum_{i\in \indexset} \left(\|\widehat{P}_i\|_1+ \|\widehat{P}_i-B_i\|_1 \right) \nonumber \\
    & \le \frac{3}{ne^{2\alphaavg-\underline{\omega}} \xi_{+}}\left[ (2+\kmbound)\frac{12 e^{\underline{\omega}} \overline{L} \xi_{+}}{\xi_{+}-\xi_{-}} n \fnorm{\widehat{P}-B} + n \fnorm{\widehat{P}-B} \right] \nonumber \\
    & = \frac{3}{e^{2\alphaavg-\underline{\omega}} \xi_{+}}\left[ (2+\kmbound)\frac{12 e^{\underline{\omega}} \overline{L} \xi_{+}}{\xi_{+}-\xi_{-}}+1 \right] \fnorm{\widehat{P}-B}. \label{eq:set-bound}
    \end{align} 
    We can also bound $\sum_{i\in \indexset_0} e^{\omega_i}$ by
    \begin{align}
    \sum_{i\in \indexset_0} e^{\omega_i} & \le \frac{3}{n e^{2\alphaavg-\underline{\omega}} \xi_{+}} \sum_{i\in \indexset_0} \|B_i\|_1 \le \frac{3}{n e^{2\alphaavg-\underline{\omega}} \xi_{+}} \sum_{i\in \indexset_0} \|\widehat{P}_i-B_i\|_1  \le \frac{3}{e^{2\alphaavg-\underline{\omega}} \xi_{+}} \fnorm{\widehat{P}-B}. \label{eq:set0-bound}
    \end{align}
    Combining \eqref{eq:geometry-cluster}, \eqref{eq:set-bound} and \eqref{eq:set0-bound}, we obtain
    \begin{align}
    \sum_{\{i:\hsigma^0_i\neq \sigma_i\}} e^{\omega_i} \le \sum_{i\in \indexset_0} e^{\omega_i} + (1+3 e^{\underline{\omega}} \bar{L}) \sum_{i\in \indexset} e^{\omega_i} \le C' e^{3\underline{\omega}} \overline{L}^2 e^{-2\alphaavg} \fnorm{\widehat{P}-B} \label{eq:theta-error}
    \end{align}
    for some constant $C'>0$.
    
\subparagraph{Step 2.4: Bounding $\|\widehat{P}-B\|_F$.} We follow the argument of Lemma 6 in \cite{gao2018community}. By definition of $\widehat{P}$, $\fnorm{\widehat{P}-A}^2 \le \fnorm{B-A}^2$. Then
    \begin{align*}
    \fnorm{\widehat{P}-B}^2 & =\fnorm{\widehat{P}-A}^2-\fnorm{B-A}^2-2\langle \widehat{P}-B,B-A\rangle \\
    & \le 2 |\langle \widehat{P}-B,B-A\rangle| 
    \le 2 \fnorm{\widehat{P}-B} \sup_{K: \fnorm{K}=1: \rank(K)\le 4} |\langle K, A-B\rangle| \\
    & \le \frac{1}{4}\fnorm{\widehat{P}-B}^2 + 4 \sup_{K: \fnorm{K}=1: \rank(K)\le 4} |\langle K, A-B\rangle|^2. 
    \end{align*}
    By rearranging terms we obtain
    \begin{align*}
    \fnorm{\widehat{P}-B}^2 & \le \frac{16}{3} \sup_{K: \fnorm{K}=1: \rank(K)\le 4} |\langle K, A-B\rangle|^2.
    \end{align*}
    Suppose $K$ has singular value decomposition $K=\sum_{l=1}^{4} \lambda_l u_l u_l^\top$, then
    \begin{align*}
    |\langle K, A-B\rangle|\le \sum_{l=1}^{4} |\lambda_l| |u_l^\top (A-B) u_l|\le \norm{A-B}_2 \sum_{l=1}^4 |\lambda_l| \le 2 \norm{A-B}_2.
    \end{align*}
    Therefore, we have
    \begin{align}
    \fnorm{\widehat{P}-B} \le \frac{8}{\sqrt{3}} \norm{A-B}_2. \label{eq:Phat-B-bound}
    \end{align}
    Define $Q_{ij}=e^{\alpha_i+\alpha_j+z_i^\top H z_j}$ for $1\le i\neq j\le n$ and $Q_{ii}=0$ for $1\le i\le n$. By the triangle inequality,
    \begin{align}
    \norm{A-B}_2 \le \norm{A-P}_2 + \norm{P-Q}_2 +\norm{Q-B}_2. \label{eq:triangle}
    \end{align}
    We bound the three terms on the right hand side separately. First by Example 4.1 in \cite{latala2018dimension}, for any $u \ge 1$ and $t>0$, we bound 
    \begin{equation}
    %\bbE\|A - P\|_2 \le (\bbE\| A - P\|^{2p}_{S_{2p}})^{1/(2p)} \le \bigl( 2 e^{1/(2u)} \sqrt{d} +  C e^{1/u}\sqrt{u\log n}\bigr) .\\
    \bbP \bigl( \| A - P\|_2 > 2 e^{1/(2u)} \sqrt{b} +  C_4 e^{1/u}\sqrt{u\log n}  + t \bigr|P) < \exp\Bigl( -\frac{t^2}{C_4 } \Bigr) \label{eq:latala}
    \end{equation}
    with some constant $C_4>0$, where $b : = \max_{i} \sum_{j=1}^n P_{ij}$. Observe that $\sum_{j=1}^n P_{ij}= e^{\alpha_i}\sum_{j=1}^n e^{\alpha_j} e^{z_i^\top H z_j}\le e^{2\alphaavg+\omegabar}\sum_{j=1}^n e^{\omega_j} \overline{\xi}\le \overline{\xi} \overline{L} n e^{2\alphaavg+\omegabar}$ for all $i\in [n]$. 
    Take $t = \sqrt{C_4(1+C_4) \log n}$ in \eqref{eq:latala}, then conditional on $P$, with probability at least $1-n^{-(1+C_4)}$,
    \begin{align}
    \| A - P\|_2 \le C_1' \sqrt{\overline{L} n e^{2\alphaavg+\omegabar}} + C_2'\sqrt{\log n} \label{eq:K1_bound}
    \end{align}
    for constants $C_1'>0$ and $C_2'>0$.
    
	By definition, for $i\neq j$,
	\begin{align*}
	|P_{ij}-Q_{ij}|& = e^{2\alphaavg+\omega_i+\omega_j + z_i^\top H z_j} \frac{ e^{2\alphaavg+\omega_i+\omega_j + z_i^\top H z_j}}{1+ e^{2\alphaavg+ \omega_i+\omega_j + z_i^\top H z_j}} \le e^{4\alphaavg+2\omega_i+2\omega_j + 2z_i^\top H z_j} \le e^{4\alphaavg} e^{2\omega_i+2\omega_j} \overline{\xi}^2,
	\end{align*}
	and $P_{ii}-Q_{ii}=0$. Then we obtain
	\begin{align}
	\norm{P-Q}_2\le \fnorm{P-Q}\le \left(\sum_{i,j=1}^n e^{8\alphaavg} e^{4\omega_i+4\omega_j} \overline{\xi}^4\right)^{1/2} = e^{4\alphaavg}\sum_{i=1}^n e^{4\omega_i} \overline{\xi}^2 \le \overline{\xi}^2 \overline{L}^4 n e^{4\alphaavg} . \label{eq:K2_bound}
	\end{align}
	
	By definition, $(Q_{ij}-B_{ij})^2 = e^{4\alphaavg+2\omega_i+ 2\omega_j} f_{ij}$ for $i\neq j$, and $(Q_{ii}-B_{ii})^2=e^{4\alphaavg+4\omega_i+2z_i^\top H z_i}$. By Cauchy-Schwarz inequality,
	\begin{align*}
	\sum_{1\le i\neq j\le n} e^{4\alphaavg+2\omega_i+ 2\omega_j} f_{ij} \le \left(\sum_{1\le i\neq j\le n} e^{8\alphaavg+4\omega_i+ 4\omega_j}\right)^{1/2} \left(\sum_{1\le i\neq j\le n} f_{ij}^2\right)^{1/2}.
	\end{align*}
	It is straightforward to obtain the bound $\left(\sum_{1\le i\neq j\le n} e^{8\alphaavg+4\omega_i+ 4\omega_j}\right)^{1/2}\le e^{4\alphaavg} \sum_{i=1}^n e^{4\omega_i}\le \overline{L}^4 n e^{4\alphaavg}$. Since $f_{ij}\le \overline{f}$, we have
	\begin{align*}
	\left(\sum_{1\le i\neq j\le n} f_{ij}^2\right)^{1/2} & \le \overline{f}^{1/2} \left(\sum_{1\le i\neq j\le n} f_{ij}\right)^{1/2} \\
	& \le \overline{f}^{1/2} \left(4r^2(n-1)n/(\log n)^{1-\omegarate} \right)^{1/2} \\
	& \le 2 r \overline{f}^{1/2} n/(\log n)^{\frac{1-\omegarate}{2}}.
	\end{align*}
	Hence we obtain
	\begin{align*}
	\sum_{1\le i\neq j\le n} (Q_{ij}-B_{ij})^2 \le 2 r \overline{f}^{1/2} \overline{L}^4 n^2 e^{4\alphaavg}/(\log n)^{\frac{1-\omegarate}{2}}.
	\end{align*}
	On the other hand,
	\begin{align*}
	\sum_{i=1}^n (Q_{ii}-B_{ii})^2 & = \xi_{+}^2 \sum_{i=1}^n e^{4\alphaavg+4\omega_i} \le \xi_{+}^2 \overline{L}^4 n e^{4\alphaavg} \le 2 r \overline{f}^{1/2} \overline{L}^4 n^2 e^{4\alphaavg}/(\log n)^{\frac{1-\omegarate}{2}}.
	\end{align*}
	Then we can bound $\norm{Q-B}_2$ by
	\begin{align} 
	\norm{Q-B}_2\le \fnorm{Q-B} \le &  2\sqrt{r} \overline{f}^{1/4}\overline{L}^2 n e^{2\alphaavg}/(\log n)^{\frac{1-\omegarate}{4}}. \label{eq:K3_bound}
	\end{align}

\subparagraph{Step 2.5: Bounding $\sum_{\hsigma^0_i\neq \sigma_i} e^{\omega_i}$.} Combining \eqref{eq:theta-error}, \eqref{eq:Phat-B-bound}, \eqref{eq:triangle}, \eqref{eq:K1_bound}, \eqref{eq:K2_bound} and \eqref{eq:K3_bound}, we obtain that conditional on $P$, with probability at least $1-n^{-(1+C_4)}$
	\begin{align*}
	\sum_{\{i:\hsigma^0_i\neq \sigma_i\}} e^{\omega_i} & \le \frac{8}{\sqrt{3}} C' e^{3\underline{\omega}} \overline{L}^2 e^{-2\alphaavg} \left[ C_1' \sqrt{\overline{L} n e^{2\alphaavg+\omegabar}} + C_2'\sqrt{\log n}+ \overline{\xi}^2 \overline{L}^4 n e^{4\alphaavg} + 2\sqrt{r} \overline{f}^{1/4}\overline{L}^2 n e^{2\alphaavg}/(\log n)^{\frac{1-\omegarate}{4}} \right] \\
	& \le n\left( C_1'' \frac{1}{\sqrt{n e^{2\alphaavg-\omegabar}}}+C_2''\frac{\sqrt{\log n}}{ne^{2\alphaavg}} + C_3'' e^{2\alphaavg}+ C_4''\sqrt{r} \frac{1}{(\log n)^{\frac{1-\omegarate}{4}}} \right)
	\end{align*}
	for constants $C_1'', C_2'', C_3'', C_4''>0$. 
    %({\color{blue} if $\underline{\omega}$ or $\overline{L}$ or both is allowed to go to $\infty$, we can derive the needed assumption from here, but somewhat tedious}) 
    By \eqref{eq:alpha4} and \eqref{eq:alpha5} of Assumption \ref{assump:alpha}, we have $1/\sqrt{n e^{2\alphaavg-\omegabar}}\to 0$ and $(\log n)/\sqrt{ne^{2\alphaavg}} \to 0$. For any $\gamma>0$, we can then make $r$ small enough such that $\sum_{\{i:\hsigma^0_i\neq \sigma_i\}} e^{\omega_i} \le e^{-\underline{\omega}} \gamma n$. Note that when $\gamma$ is fixed, $r$ is can still be a constant bounded away from 0.
	
	At last, putting \eqref{eq:Dc_bound}, \eqref{eq:B_bound} and \eqref{eq:B_Cc_bound} together with the conclusion from the previous paragraph, we obtain
	\begin{align*}
	& \Prob\left(\sum_{\{i:\hsigma^0_i\neq \sigma_i\}} e^{\omega_i}> e^{-\underline{\omega}} \gamma n\right) \\
	& \le \Expect^{\{\alpha_i,z_i\}_{i=1}^n}\left[\Prob\left(\sum_{\{i: \hsigma^0_i\neq \sigma_i\}} e^{\omega_i}> e^{-\underline{\omega}} \gamma n \bigg|\{\alpha_i,z_i\}_{i=1}^n\right)\indc{(z_1,\cdots,z_n)\in \mathbb{B}_\eta\cap \mathbb{C}_r, (\omega_1,\cdots,\omega_n)\in \mathbb{D}} \right] \\
	& + \Prob\left((z_1,\cdots,z_n)\in \mathbb{B}_\eta\cap \mathbb{C}_r^c\right)  + \Prob\left((z_1,\cdots,z_n)\in \mathbb{B}_\eta^c \right) + \Prob\left( (\omega_1,\cdots, \omega_n)\in \mathbb{D}^c \right)\\
	& \le n^{-(1+C_4)} + n^{-(1+C_2)} + n^{-(1+C_3)}+ n^{-2} + n^{-(1+C_1/2)} \\
	& < n^{-(1+2C)}
	\end{align*}
	with $0<C<\min\{\frac{C_1}{4},\frac{C_2}{2},\frac{C_3}{2},\frac{C_4}{2}, \frac{1}{2}\}$.
	
	Note that $\sum_{\{i:\hsigma^0_i\neq \sigma_i\}} e^{\omega_i} \ge e^{-\underline{\omega}} n \ell(\sigma, \hsigma^0)$, we immediately get
	\begin{align*}
	\Prob\left(\ell(\sigma, \hsigma^0)>\gamma\right)\le \Prob\left(\sum_{\{i:\hsigma^0_i\neq \sigma_i\}} e^{\omega_i}>e^{-\underline{\omega}}  \gamma n\right) < n^{-(1+2C)}. 
	\end{align*}
This completes the proof.
\end{proof}

%\begin{equation*}
%    \| P - \bbE P\|_2 \le \| P - \bbE P \|_F 
%\end{equation*}
%Define $R := \bigl(\bbE \exp(z_i^\top h z_j + \alpha_i + \alpha_j + \beta X_{ij}) \bigr)_{1 \le i, j \le n}$.
%\begin{align*}
%    \| \bbE P \|_2 & \le \| \bbE P - R  \|_2  + \| R\|_2.\\
%\end{align*}
%It is not difficult to see 
%\begin{equation*}
%    R_{ij} % & = \bbE \exp(z_i^\top h z_j + \alpha_i + \alpha_j + \beta X_{ij})\\
%    % & = \bigl( \exp\bigl((1_{\sigma_i = \sigma_j} - 1_{\sigma_i \neq \sigma_j}) \mu^\top H \mu\bigr)\bigr)_{1\le i,j\le n}  
%    = \bbE[e^{2\alpha}]\begin{cases}
%        \exp(\mu^\top H \mu) \int \exp\bigl( \tau^2 u^\top H v + \tau \mu^\top H (u + v) \bigr) d\Phi(u) d\Phi(v)  & \quad \text{if } \sigma_i = \sigma_j; \\
%        \exp(- \mu^\top H \mu) \int \exp(-\tau^2 u^\top H v + \tau \mu^\top H (u + v) d\Phi(u) d\Phi(v)  & \quad \text{if } \sigma_i = \sigma_j;
%    \end{cases}
%\end{equation*}
%Define $T_{ij} = \exp\bigl((1_{\sigma_i = \sigma_j} - 1_{\sigma_i \neq \sigma_j}) \mu^\top H \mu\bigr)$, then $R_{ij } = \bbE[e^{2 \alpha}] T_{ij}\bigl(1+o(1)\bigr)$, as $\tau \rightarrow 0$.  
%It is apparent that $\|T\|_2 = n(e^{\mu^\top H \mu} - e^{- \mu^\top H \mu})/2 $ in balanced case $m_1 = m_2 = n/2$. 
%
%We bound $K_2 =\| e^{-\beta X} \odot ( A - P)\|$ by applying \eqref{eqn:A-B-hadamard-product-bound} and obtain
%\begin{equation*}
%    \bbE [K_2 ] \le \maxnorm{e^{-\beta X}}  \| 
%\end{equation*}

%!TEX root = ../main.tex

% \section{Approximating the Probability Matrix}
\section{Proof of Theorems \ref{thm:upper} and \ref{thm:lowbd}}
\label{sec:prooflowerupper}

%!TEX root = ../main.tex

% \section{Approximating the Probability Matrix}
\subsection{Combining the initial error and edge counting}

Let $\wh\sigma^{(-1,0)}$ be an $n$-dimensional vector one obtains after line $7$ of Algorithm \ref{alg:provable}.
%In other words, $\wh\sigma^{(-1,0)}_1$ is obtained via line $6$.
The following Proposition~\ref{prop:init-error-edge-counting} gives an error bound for $\wh\sigma^{(-1,0)}$.

\begin{proposition}\label{prop:init-error-edge-counting}
    %Let $k=2$ and $\calP_n = \calP_n(H,\mu,\tau, F_\alpha)$.

	Suppose that Assumptions \ref{assump:alpha}, \ref{assump:tau} and \ref{assump:mu-H-mu} hold. Let $p(\alpha_1,z_1)$ and $q(\alpha_1,z_1)$ be quantities defined in \eqref{eqn:def-p-z0} and \eqref{eqn:def-q-z0} respectively, and
    \[
        I(\alpha_1,z_1)=-2\log\left(\sqrt{p(\alpha_1,z_1)q(\alpha_1,z_1)}+\sqrt{(1-p(\alpha_1,z_1))(1-q(\alpha_1,z_1))}\right).
    \]
    Assume $n_1, n_2 \in \bigl( (1- \delta_n)/2, (1+\delta_n)/2\bigr)$.
	For any $\epsilon>0$, define $\calB_\epsilon=\{z_1: \|z_1-\mu\|_2\le \sqrt{1-\frac{\epsilon}{4}} \rho\}$. Then there is an $n_\epsilon$ such that for all $n>n_\epsilon$,
	\begin{align}\label{eq:init-error-edge-counting}
	\Prob\left(\hsigma_{1}^{(-1,0)} \neq \sigma_1\right)\le \Expect_{\{\sigma_1=1\}}^{\alpha_1,z_1}\left[\indc{z_1\in \calB_\epsilon} \exp\left\{-\frac{n}{2}(1-\epsilon)I(\alpha_1,z_1)\right\}\right] + \exp\left\{-(1-\epsilon)\frac{\rho^2}{2\tau^2}\right\} + n^{-(1+C)}
	\end{align}
	for some constant $C>0$.
\end{proposition}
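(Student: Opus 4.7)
The proof proposal rests on a crucial \emph{leave-one-out independence}: the vector $\widehat\sigma^{(-1,0)}$ produced in lines 2--5 of Algorithm~\ref{alg:provable} depends only on $A^{(-1)}$, so it is independent of the edges $\{A_{1j}: j=2,\dots,n\}$ that are subsequently used in the refinement for node $1$ (line 6). Using this, I would first invoke Proposition~\ref{prop:init-error} on the subnetwork $A^{(-1)}$ (a model with $n-1$ nodes that still satisfies Assumptions~\ref{assump:alpha}--\ref{assump:mu-H-mu}, with balanced sizes up to a $\delta_n$-perturbation). For any small $\gamma>0$, this delivers a good event $\mathcal{E}_\gamma$ on which
\(
\sum_{\{i\ge 2:\, \widehat\sigma^{(-1,0)}_i\neq \pi(\sigma_i)\}} e^{\omega_i}\le e^{-\underline{\omega}}\gamma\, n
\)
(for some label permutation $\pi\in S_2$), with complementary probability at most $n^{-(1+2C)}$. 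The tail term $n^{-(1+C)}$ in the proposition is harvested from this failure probability, absorbing a constant slackness.

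Conditional on $A^{(-1)}$ and on $\mathcal{E}_\gamma$, the refinement step is a deterministic functional of the row $\{A_{1j}\}_{j\ne 1}$ alone. Letting $\widehat{n}_u=|\{j\ne 1:\widehat\sigma^{(-1,0)}_j=u\}|$ and $S_u=\sum_{\{j\ne 1:\widehat\sigma^{(-1,0)}_j=u\}}A_{1j}$, and assuming without loss of generality $\sigma_1=1$, the misclassification event $\{\widehat\sigma^{(-1,0)}_1\neq \sigma_1\}$ is $\{S_1/\widehat n_1\le S_2/\widehat n_2\}$. Split each $S_u$ into the ``correctly labeled'' part and the ``mislabeled'' part; after standardizing by the (nearly equal) sizes $\widehat n_u=(1+o(1))n/2$, the correctly labeled parts match exactly the $A_{0,+}$ and $A_{0,-}$ of Section~\ref{subsec:testing} with $m=(1+o(1))n/2$, while the mislabeled parts are controlled, pointwise in $(\alpha_0,z_0)$, by
\(
\sum_{\{\widehat\sigma^{(-1,0)}_j\neq \sigma_j\}}A_{1j}\le C\,e^{\alpha_0}\sum_{\{\widehat\sigma^{(-1,0)}_j\neq \sigma_j\}} e^{\alpha_j}\le C'\gamma\, n\, e^{2\alphaavg+\omega_0+\omegabar}
\)
on $\mathcal E_\gamma\cap \mathbb B_\eta$ (using the same truncation $\mathbb B_\eta$ as in Step~1 of the proof of Proposition~\ref{prop:init-error}). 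Because Assumption~\ref{assump:alpha} guarantees $ne^{2\alphaavg}\to\infty$ while the Rényi-divergence factor in \eqref{eq:rate-upper} is at least $e^{2\alphaavg}\underline{C}$ on $\mathcal B_\epsilon$, the mislabeled contribution is negligible relative to the signal once $\gamma$ is chosen sufficiently small relative to $\epsilon$.

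With this perturbation absorbed, I would then replay the upper-bound half of the proof of Lemma~\ref{lem:edgecount-rate} verbatim: conditioning on $(\alpha_1,z_1)$ and invoking the Chernoff bound of \cite{gao2015achieving,gao2018minimax} for Bernoulli tail probabilities yields, on $\{z_1\in\mathcal B_\epsilon\}$, the bound $\exp\{-\tfrac{n}{2}(1-\epsilon)I(\alpha_1,z_1)\}$ (the factor $(1-\epsilon/2)$ coming from the Chernoff approximation, with the remaining $\epsilon/2$ absorbing both the sample-size discrepancy $\widehat n_u$ vs $n/2$ and the mislabeling noise above). Off $\mathcal B_\epsilon$, the Gaussian tail estimate via Mill's ratio (exactly as in \eqref{eq:ball_bound}) supplies the second term $\exp\{-(1-\epsilon)\rho^2/(2\tau^2)\}$. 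Summing the conditional bound over $(\alpha_1,z_1)$ and adding the failure probability $\Prob(\mathcal E_\gamma^c)\le n^{-(1+2C)}$ gives \eqref{eq:init-error-edge-counting}.

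The main obstacle is the control of the ``mislabeled'' contribution in the edge-counting sum: we only have a bound on the $e^{\omega_i}$-weighted count of misclassified nodes (not on their cardinality), so we must be careful to dominate the random Bernoulli sum by its conditional mean, scale by $e^{\alpha_0}$, and verify that the resulting deterministic perturbation does not swamp the Rényi rate. The exponential separation between $e^{2\alphaavg}$ (in the rate) and $\gamma\cdot e^{2\alphaavg+\omegabar}$ (in the perturbation) is exactly what \eqref{eq:alpha5} is designed to provide, after further choosing $\gamma$ small.
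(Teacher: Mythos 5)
Your overall architecture is right, and matches the paper's: invoke Proposition~\ref{prop:init-error} on the leave-one-out network $A^{(-1)}$ to get a good initialization event, then condition on the remaining latent randomness and apply a Chernoff bound to the edge-counting test, treating the mislabeling as a small perturbation. But there are two genuine gaps in the execution, and neither is cosmetic.

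First, your proposed control of the mislabeled part is wrong as stated. You write a ``pointwise'' deterministic bound $\sum_{\{\widehat\sigma^{(-1,0)}_j\neq\sigma_j\}}A_{1j}\le C'\gamma n\, e^{2\alphaavg+\omega_0+\omegabar}$, but the left side is a sum of Bernoulli random variables (an integer that can be as large as the number of misclassified nodes), whereas the right side is of order $\gamma n e^{2\alphaavg}$, which is $o(\gamma n)$ under Assumption~\ref{assump:alpha}. No almost-sure bound of this form holds, and your later remark that we must ``dominate the random Bernoulli sum by its conditional mean'' does not fix it --- one cannot dominate a random sum by its mean. What actually works, and what the paper does, is to refrain from splitting off the mislabeled sum \emph{before} the Chernoff step: one applies the exponential tilting to the entire test statistic and only then sees the mislabeled nodes enter through the factor the paper calls $K_4$, which involves $\sum_{i\in J_{21}}P_{1i}$ rather than $\sum A_{1i}$. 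That factor can be bounded by the $e^{\omega_i}$-weighted misclassification count, which is exactly what Proposition~\ref{prop:init-error} controls.

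Second, your claim that the ``correctly labeled parts match exactly the $A_{0,+}$ and $A_{0,-}$ of Section~\ref{subsec:testing}'' overlooks a measurability obstruction. In Lemma~\ref{lem:edgecount-rate} the nuisance parameters $\{\alpha_i,z_i\}_{i\ge 1}$ are integrated out \emph{first}, so the Bernoulli success probabilities are the deterministic $p(\alpha_0,z_0)$ and $q(\alpha_0,z_0)$. Here you cannot integrate them out, because $\widehat\sigma^{(-1,0)}$ (hence the sets $J_{u_1u_2}$ and sizes $m_u$ in the test) is itself a function of $\{\alpha_i,z_i\}_{i\ge 2}$ together with $A^{(-1)}$. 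Conditioning on all of $\{\alpha_i,z_i\}$ is unavoidable, but then the edge probabilities $P_{1i}$ are random and merely average to $p$ and $q$. You need an additional concentration event --- the paper's $\bbF_1$, ensuring $\sum_{i\in J_u}P_{1i}\approx n_u p$ or $n_u q$ --- before the bound of Lemma~\ref{lem:edgecount-rate} can be ``replayed.'' This is precisely what the paper's terms $K_2$ and $K_3$ account for, and it is absent from your proposal.
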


\begin{proof}
% Suppose $\{\hsigma_i^{(-1)}:i=2,\cdots,n\}$ are the labels from Algorithm \ref{alg:init} with point 1 removed.
We start with some notation.
Let $J_{u}=\{i: \sigma_i=u, 2\le i\le n\}$, $n_u=|J_u|$,
$\widehat{J}_{u}=\{i: \hsigma_i^{(-1,0)}=u, 2\le i\le n\}$,
$m_u=|\widehat{J}_u|$ for $u\in \{1,2\}$, and
$J_{u_1 u_2}=\{i: \hsigma_i^{(-1,0)}=u_1,\sigma_i=u_2, 2\le i\le n\}$,
$m_{u_1 u_2}=|J_{u_1 u_2}|$ for $u_1,u_2\in\{1,2\}$.
% Let $\hsigma_{1}^{(-1)}$ be the label of point 1 from edge counting (\nb{Algorithm \ref{alg:provable}}) based on $\{\hsigma_i^{(-1)}: 2\le i\le n\}$.
For convenience, we suppress the superscript $(-1,0)$ from $\hsigma_i^{(-1,0)}$ in the rest of this proof.

Recall the definitions of $P_{ij}$ in \eqref{eq:model-1} and $p(\alpha,z)$ and $q(\alpha,z)$ in \eqref{eqn:def-p-z0} and \eqref{eqn:def-q-z0}. 
Define events
\begin{gather*}
    \bbC_1 := \Bigl\{ \max_{2 \le i \le n} \norm{z_i - \mu_i}_2 \le \eta \Bigr\},\\
    \bbD_1 := \left\{\sum_{\{i:\hsigma^{(-1,0)}_i\neq \sigma_i\}} e^{\omega_i}\le e^{-\underline{\omega}} \frac{\gamma}{2} (n-1)\right\}, \\
    \bbF_1 := \left\{ \bigg| \sum_{i \in J_1} P_{1i} - n_1 p(\alpha_1, z_1)\biggr|  \le  n_1 \epsilon' p(\alpha_1, z_1) \right\} \cap \left\{\biggl| \sum_{i \in J_2} P_{1i} - n_2 q(\alpha_1, z_1)  \biggr| \le  n_2 \epsilon' q(\alpha_1, z_1) \right\},\\
    E_1 := \bbC_1 \cap \bbD_1 \cap \bbF_1,
\end{gather*}
where $\eta = \tau\sqrt{12\log n}$ as in the proof of Proposition~\ref{prop:init-error}, $\gamma>0$ and $\epsilon'>0$ are fixed constants that will be specified later. 
Note that $\bbC_1$, $\bbD_1$, $\bbF_1$ and $E_1$ are all measurable with respect to the $\sigma$-algebra generated by $\{ \alpha_i, z_i\}_{i=1}^n$ and $A^{(-1)}$.
The proof of Proposition \ref{prop:init-error} implies that $\Prob(\bbC_1)\ge 1-(n-1)^{-2}\ge 1-n^{-3/2}$ and $\Prob(\bbD_1)\ge 1-(n-1)^{-(1+2C_1)}\ge 1-n^{-(1+C_1)}$ for some constant $C_1>0$ that depends on $\gamma$.

Conditional on $\alpha_1$ and $z_1 \in \calB_\epsilon$, we provide a probabilistic bound for $\bbF_1$ on event $\bbC_1$. % \cap \{ z_1 \in \calB_\epsilon\}$. 
With slight abuse of notation, let $\bbE$ denote the expectation with respect to the measure of $z$'s restricted on $\bbC_1$.
When $i \in J_2$ and $\sigma_1 = 1$, we note 
\begin{align*}
    \bbE[ P_{1i}^2 \mid \alpha_1, z_1] & \le  e^{2\alpha_1} \bbE[ \exp(2 \alpha_i + 2 z_1^\top H z_i) \mid \alpha_1, z_1] \\
    & = e^{2 \alpha_1 + 2 \alphaavg- 2z_1^\top H \mu }\bbE[ e^{2 \omega_i}]{\bbE}[ \exp\bigl( 2z_1^\top H (z_i + \mu) \bigr) \mid z_1]\\
    & \le C_1' e^{ 2 \alpha_1 + 2 \alphaavg } \exp\bigl(\tau^2 \| H z_1\|_2^2/2\bigr)\\
    & \le C_2' e^{ 2 \alpha_1 + 2 \alphaavg } ( 1 +\tau^2),  \qquad \text{for } n \text{ sufficiently large}.
\end{align*}
The first inequality in the preceding display holds as a result of $S(x) \le e^x$ for $x\in \bbR$.
In the second inequality, we use Assumption~\ref{assump:alpha} to bound $\bbE[e^{2w_i}]$, apply Lemma~\ref{lem:limit-mgf} and note that both $z_1$ and $z_i$ are bounded on $\bbC_1$ and $\{ z_1 \in \calB_\epsilon \}$.  
The last inequality holds for $n$ sufficiently large as $\tau \rightarrow 0$ as $n \rightarrow \infty $.  
We proceed to bound $P_{1i}$ on $\alpha_1, z_1 \in \calB_\epsilon$
\begin{align*}
    P_{1i} & \le \exp(\alpha_1 + \alphaavg + \omega') \exp\bigl( z_1^\top H z_i) \le C_3'  \exp(\alpha_1 + \alphaavg + \omega'),
\end{align*}
where we again apply $S(x) \le e^x$ for $x \in \bbR$ and $ z_1 $ is finite on $\{ z_1 \in \calB_\epsilon\}$.
On $\bbC_1 \cap \{ z_1 \in \calB_\epsilon\}$, by Assumption~\ref{assump:alpha}, we bound $q$ from below by 
\begin{equation*}
    q(\alpha_1, z_1) \ge C_4' \exp( \alpha_1 + \alphaavg - \underline{\omega}), \quad \text{for } n \text{ sufficiently large}.
\end{equation*}
We apply Bernstein's inequality and obtain 
\begin{align*}
    \bbP\left( \biggl| \sum_{i \in J_2} P_{1i} - n_2 q(\alpha_1, z_1)\biggr| \ge t \,\Big|\, \alpha_1, z_1\right) & \le 2\exp\left( - \frac{t^2}{2n_2 C_2' e^{ 2  \alpha_1 + 2\alphaavg } (1 + \tau^2) + (2/3) C_3' e^{\alpha_1 + \alphaavg + \omega'} t }\right). 
\end{align*}
Take $t = n_2 \epsilon' q(\alpha_1, z_1) \ge C_4' n_2 \epsilon' e^{\alpha_1 + \alphaavg - \underline{\omega}} $ and we further obtain, for some proper constants $C_5'$ and $C_2$,
\begin{equation}
    \label{eqn:bound-P1j}
\begin{aligned}
    \bbP\left( \bigg| \sum_{i \in J_2} P_{1i} - n_2 q(\alpha_1, z_1) \biggr| \ge  n_2 \epsilon' q(\alpha_1, z_1)  \,\Big|\, \alpha_1, z_1\right)  & \le 2 \exp\left( - \frac{C_4'^2 n_2 \epsilon'^2 e^{- 2 \underline{\omega}}}{2 C_2'  (1+\tau^2) + (2/3) C_3' C_4'  \epsilon' e^{ \omega' - \underline{\omega} } }\right) \\
    & \le 2 \exp\left( - C_5' n_2 \epsilon' e^{-\underline{\omega} - \omega'}\right) \\
    %& \le \exp\left( - C_5' n_2 \epsilon' e^{ - 2\underline{\omega} }\right) \\
    & \le \frac{1}{2} n^{- ( 1 + C_2)}.
\end{aligned}
\end{equation}
The second inequality in the preceding display holds as $e^{\omega' - \underline{\omega}} \gtrsim 1$ by Assumption~\ref{assump:alpha}.
We apply \eqref{eq:alpha5} in Assumption~\ref{assump:alpha} to obtain the last inequality. 
A similar argument yields that conditional on $\bbC_1$, for $z_1 \in \calB_\epsilon$
\begin{equation}
    \label{eqn:bound-P1j-p}
     \bbP\left( \biggl| \sum_{i \in J_1} P_{1i} - n_1 p(\alpha_1, z_1) \biggr| \ge  n_1 \epsilon' p(\alpha_1, z_1) \,\Big|\, \alpha_1, z_1\right) \le \frac{1}{2} n^{- ( 1 + C_2)}.
\end{equation}
Combining \eqref{eqn:bound-P1j} and \eqref{eqn:bound-P1j-p}, we obtain that conditional on $\alpha_1$ and $z_1\in \calB_\epsilon$, %$\bbD_1 \cap \calB_\epsilon$,
\[
    \bbP( \bbF_1^c \mid \bbC_1 ) \le n^{-(1+C_2)}.
\]
Together with the probablistic bound on $\bbC_1$, for some constant $C_2''$, we have conditional on $\alpha_1$ and $z_1\in \calB_\epsilon$,
\begin{equation}
    \bbP( \bbF_1^c )  \le \bbP( \bbF_1^c \mid \bbC_1 ) \bbP(\bbC_1) + \bbP( \bbF_1^c \mid \bbC_1^c ) \bbP(\bbC_1^c) \le \bbP( \bbF_1^c \mid \bbC_1 ) + \bbP(\bbC_1^c) \le n^{-(1+C_2)}  + n^{-3/2} \le n^{ - ( 1 + C_2'')},
    \label{eqn:bound-F1}
\end{equation}
%We then obtain, conditional on $\alpha_1$ and $z_1 \in \calB_\epsilon$,
Inspection of the above argument reveals that as long as $z_1 \in \calB_\epsilon$, the constant $C_2''$ in the preceding display does not depend on $\alpha_1$ and $z_1$, whence we obtain %, conditional on event $\{ z_1 \in \calB_\epsilon\}$,
\begin{align}
\Prob(E_1^c \mid z_1 \in \calB_\epsilon )\le \Prob(\bbC_1^c) + \Prob(\bbD_1^c) +\Prob(\bbF_1^c) \le n^{-3/2} + n^{-(1+C_1)} + n^{-(1+C_2'')} \le n^{-(1+C)}, \label{eq:bound-CDF}
\end{align}
with $0<C<\min\{\frac{1}{2}, C_1, C_2''\}$.
It will be useful at the end of the proof to give a probalistic bound on $E_1$ without conditioning on $\{ z_1 \in \calB_\epsilon\}$ 
\begin{equation}
    \bbP(E_1^c) \le \bbP(E_1^c \mid z_1 \in \calB_\epsilon) + \bbP(z_1 \in \calB_\epsilon) \le n^{ -(1 + C)} + \exp\Bigl(- (1 -\epsilon/2)\frac{\rho^2}{ 2 \tau^2} \Bigr),
    \label{eqn:E1c-bound}
\end{equation}
where the last inequality follows from \eqref{eq:ball_bound} in Lemma \ref{lem:edgecount-rate}.

Next observe that
\begin{equation}
\begin{aligned}
& \Prob_{\{\sigma_1=1\}}(\hsigma_1=2 \text{\ and\ } E_1) \\
& =   \Prob_{\{\sigma_1=1\}}\left(\frac{1}{m_1}\sum_{i\in \widehat{J}_1} A_{1,i}\le \frac{1}{m_2}\sum_{i\in \widehat{J}_2} A_{1,i} \text{\ and\ } E_1 \right)
\\
& = \Expect_{\{\sigma_1=1\}}^{\alpha_1, z_1} \left[\indc{z_1\in\calB_\epsilon} \Prob\left(\frac{1}{m_1}\sum_{i\in \widehat{J}_1} A_{1,i}\le \frac{1}{m_2}\sum_{i\in \widehat{J}_2} A_{1,i} \text{\ and\ } E_1\bigg|\alpha_1,z_1 \right)\right]
\\
&~~~ + \Expect_{\{\sigma_1=1\}}^{\alpha_1, z_1} \left[\indc{z_1\in\calB_\epsilon^c} \Prob\left(\frac{1}{m_1}\sum_{i\in \widehat{J}_1} A_{1,i}\le \frac{1}{m_2}\sum_{i\in \widehat{J}_2} A_{1,i} \text{\ and\ } E_1\bigg|\alpha_1,z_1 \right)\right].
\end{aligned}
\label{eq:bound-decomp}
\end{equation}
%Let $\Omega_1=\left\{\{a_{1,i}\}_{i=2}^{n}\in \{0,1\}^{n-1}: \frac{1}{m_1}\sum_{i\in \widehat{J}_1} a_{1,i}\le \frac{1}{m_2}\sum_{i\in \widehat{J}_2} a_{1,i}\right\}$.
We deal with the first term in the above display. Assume $z_1\in \calB_\epsilon$ in the following. We then have
	\begin{align}
        & \Prob\left(\frac{1}{m_1}\sum_{i\in \widehat{J}_1} A_{1,i}\le \frac{1}{m_2}\sum_{i\in \widehat{J}_2} A_{1,i} \text{\ and\ } E_1\,\bigg|\,\alpha_1,z_1 \right) \nonumber \\
        & = \bbE \left( \bbE\left[ \indc{E_1}  \Indc\left(\frac{1}{m_1}\sum_{i\in \widehat{J}_1} A_{1,i}\le \frac{1}{m_2}\sum_{i\in \widehat{J}_2} A_{1,i}\right) \, \bigg|\, \{ \alpha_i,z_i\}_{i=1}^n  \right] \, \bigg| \, \alpha_1, z_1 \right) \nonumber \\
        & \le \bbE \left( \bbE\left[ \indc{E_1}  \Indc\left(\frac{1}{m_1}\sum_{i\in {J}_{11}} A_{1,i}\le \frac{1}{m_2}\sum_{i\in {J}_{22}} A_{1,i} + \frac{1}{m_2}\sum_{i \in J_{21}} A_{1,i} \right) \, \bigg|\, \{ \alpha_i,z_i\}_{i=1}^n  \right] \, \bigg| \, \alpha_1, z_1 \right) . \label{eq:bound-decomp2}
	%= & \sum_{\{a_{1,i}\}_{i=2}^{n}\in\Omega_1} \Prob(A_{1,2}=a_{1,2},\cdots,A_{1,n}=a_{1,n} \text{\ and\ } E_1 |\alpha_1, z_1) \\
	%= & \sum_{\{a_{1,i}\}_{i=2}^{n}\in\Omega_1}\Expect^{\{z_i,\alpha_i\}_{i=2}^n}\left[\indc{E_1} \Prob(A_{1,2}=a_{1,2},\cdots,A_{1,n}=a_{1,n}|\alpha_1, z_1,\{\alpha_i, z_i \}_{i=2}^{n}) \right] \\
	%= & \sum_{\{a_{1,i}\}_{i=2}^{n}\in\Omega_1}\Expect^{\{\alpha_i, z_i \}_{i=2}^n} \left[\indc{E_1} \prod_{i=2}^{n} \Prob(A_{1,i}=a_{1,i}|\alpha_1,z_1,\alpha_i, z_i ) \right].
	\end{align}
The equality holds because of the tower property of conditional expectations. %and $E_1$ is  \(\{ \alpha_i,z_i\}_{i=1}^n\}\) measurable.
We now consider the conditional expectaion inside the round brackets in the preceding display.
Conditional on $\{ \alpha_i,z_i\}_{i=1}^n$, we define for $i\in [n]$
\begin{gather*}
	%P_{1i} = S\bigl(\alpha_1 + \alpha_i + z_1^\top H z_i \bigr), \\
    W_i \stackrel{\text{ind}}{\sim} \mathrm{Bernoulli}({P}_{1i}).
\end{gather*}
Contional on  $\{ \alpha_i,z_i\}_{i=1}^n$, $(A_{1i})_{2\le i \le n}$ are mutually independent and independent of $A^{(-1)}$, whence we have, for any $t>0$ measurable with respect to the $\sigma$-algebra generated by $\{ \alpha_i, z_i\}_{i=1}^n$ and $A^{(-1)}$,
\begin{align}
    &\quad \bbE\left[ \indc{E_1}  \Indc\left(\frac{1}{m_1}\sum_{i\in {J}_{11}} A_{1,i}\le \frac{1}{m_2}\sum_{i\in {J}_{22}} A_{1,i} + \frac{1}{m_2}\sum_{i \in J_{21}} A_{1,i} \right) \, \bigg|\, \{ \alpha_i,z_i\}_{i=1}^n  \right] \nonumber\\
    &  = \bbE\left[ \indc{E_1}  \Indc\left(\frac{1}{m_1}\sum_{i\in {J}_{11}} W_i \le \frac{1}{m_2}\sum_{i\in {J}_{22}} W_{i} + \frac{1}{m_2}\sum_{i \in J_{21}} W_{i} \right) \, \bigg|\, \{ \alpha_i,z_i\}_{i=1}^n  \right] \nonumber \\
&  = \bbE\left[ \indc{E_1}  \bbE\left[\Indc\left(\frac{1}{m_1}\sum_{i\in {J}_{11}} W_i \le \frac{1}{m_2}\sum_{i\in {J}_{22}} W_{i} + \frac{1}{m_2}\sum_{i \in J_{21}} W_{i} \right) \, \bigg| \, \{ \alpha_i,z_i\}_{i=1}^n, A^{(-1)} \right] \, \bigg|\, \{ \alpha_i,z_i\}_{i=1}^n  \right] \nonumber \\
&  \le \bbE\left[ \indc{E_1}  \prod_{i\in J_{22}} \bigl( P_{1i} e^{t/m_2} + 1 - P_{1i} \bigr) \prod_{i\in J_{21}} \bigl( P_{1i} e^{t/m_2} + 1 - P_{1i} \bigr) \prod_{i\in J_{11}} \bigl( P_{1i} e^{-t/m_1} + 1 - P_{1i} \bigr) \, \bigg|\, \{ \alpha_i,z_i\}_{i=1}^n  \right]\nonumber \\
& \le \bbE\left[ \indc{E_1}  \exp\left( \sum_{i\in J_{22}} P_{1i} (e^{t/m_2}  -1 ) + \sum_{i\in J_{21}}  P_{1i} (e^{t/m_2}  - 1) + \sum_{i\in J_{11}}  P_{1i} (e^{-t/m_1}  - 1) \right) \, \bigg|\, \{ \alpha_i,z_i\}_{i=1}^n  \right].
\label{eq:penultimate-bound}
\end{align}
The second equality in the preceding display holds by the tower property of conditional expectations and because $E_1$ is measurable with respect to the $\sigma$-algebra generated by \(\{ \alpha_i,z_i\}_{i=1}^n\) and \(A^{(-1)}\).
In the first inequality, we apply the Chernoff bound and note that $m_1$, $m_2$, $P_{1i}$'s and $(J_{u_1 u_2})_{u_1, u_2 \in [2]}$ are all measurable with respect to the $\sigma$-algebra generated by \(\{ \alpha_i,z_i\}_{i=1}^n\) and \(A^{(-1)}\).
The second inequality holds as we note $1 + x \le e^x$ for all $x \in \bbR$.
%Recall the defintions of $p(\alpha, z)$ and $q(\alpha, z)$ from \eqref{eqn:def-p-z0} and \eqref{eqn:def-q-z0} and 
Write $p = p(\alpha_1, z_1)$ and $q = q(\alpha_1, z_1)$ as shorthands. 
Define the following quantities
\begin{gather*}
    K_1 : = \exp\bigl( m_2 ( e^{t/m_2} -1) q + m_1  (e^{- t/m_1 } -1) p \bigr),\\
    K_2 : = \exp\left( ( e^{t/m_2} -1) \Bigl( \sum_{i\in J_{22}} P_{1i} - m_2 q\Bigr) \right),\\
    K_3 : = \exp\left( ( e^{-t/m_1} -1) \Bigl( \sum_{i\in J_{11}} P_{1i} - m_1 p\Bigr) \right),\\
    K_4 : = \exp\Bigl( ( e^{t/m_2} -1)  \sum_{i\in J_{21}} P_{1i} \Bigr).
\end{gather*}
We note that \eqref{eq:penultimate-bound} is the same as
\(
\bbE\bigl[ \indc{E_1} K_1 K_2 K_3 K_4 \mid \{\alpha_i,z_i\}_{i=1}^n  \bigr].
\)  
Set $t=\frac{m_1 m_2}{m_1+ m_2}\log(\frac{p}{q})$. 
%We note $t$ is measurable with respect to the $\sigma$-algebra generated by $\{ \alpha_i, z_i\}_{i=1}^n$ and $A^{(-1)}$.
Next we deal with $K_1$, $K_2$, $K_3$ and $K_4$ separately.

Before we proceed, we note the following useful facts. For any fixed $\gamma>0$, we make $n$ sufficiently large so that $\delta_n<\gamma$. Hence, $n_1, n_2\in \left[(1-\gamma)\frac{n}{2}, (1+\gamma)\frac{n}{2}\right]$. On event $E_1 \subset \bbD_1$, we have 
\(|{\{i:\hsigma^{(-1,0)}_i\neq \sigma_i\}}| \le e^{\underline{\omega}}\sum_{\{i:\hsigma^{(-1,0)}_i\neq \sigma_i\}} e^{\omega_i} \le \frac{\gamma}{2} (n-1)<\frac{\gamma}{2} n.\)
Therefore, we get $m_{12}\le \frac{\gamma}{2} n$, $m_{21}\le \frac{\gamma}{2} n$, and hence $m_1, m_2\in \left[(1-2\gamma)\frac{n}{2}, (1+2\gamma)\frac{n}{2}\right]$. Furthermore, for $z_1\in \calB_\epsilon$, the lower bound \eqref{eq:z0-H-mu-bound} holds for $z_1^\top H \mu$. We denote $\underline{\xi}=e^{\frac{\epsilon}{8}\mu^\top H\mu}$. For $z_1\in \calB_\epsilon$, on event $E_1$, both $e^{z_1^\top H z_i}$ and $e^{z_1^\top H \mu}$ are bounded above by some constant $\overline{\xi}$, which is larger than 1 since $z_1^\top H \mu>0$ when $z_1\in \calB_\epsilon$. 

%Set $t = m_1 m_2/ (m_1 + m_2) \log(p/q)$ and simple algebra leads to 
%\begin{align*}
%   K_1 & = \exp\Bigl(- m_1 p - m_2 q_2 + (m_1 + m_2) p^{\frac{m_1}{m_1+m_2}} q^{\frac{m_2}{m_1+m_2}}\Bigr) \\
%        & = \exp\Bigl(-\frac{1}{2} (m_1 + m_2) J_{\frac{m_1}{m_1+m_2}}(p, q)\Bigr).
%\end{align*}

% bold font for easy navigation
% to be rid of later
First we deal with the main term $K_1$. Since \eqref{eq:p-decomposition}, \eqref{eq:q-decomposition}, \eqref{eq:Dp-upper-bound}, \eqref{eq:Dq-upper-bound}, \eqref{eq:D-bound}, \eqref{eq:Dp-lower-bound} and \eqref{eq:Dq-lower-bound} continue to hold for $p$ and $q$, we obtain
\begin{align}
	(1-\kappa) e^{2\alphaavg} e^{z_1^\top H \mu} D(\omega_1,z_1) & \le p\le e^{2\alphaavg} e^{z_1^\top H \mu} D(\omega_1,z_1), \quad \mbox{and} \label{eq:p_bound} \\
	(1-\kappa) e^{2\alphaavg} e^{-z_1^\top H \mu} D(\omega_1,z_1) & \le q \le e^{2\alphaavg} e^{-z_1^\top H \mu} D(\omega_1,z_1), \label{eq:q_bound}
\end{align}
where $\kappa=1-\frac{1}{4}(1+\underline{\xi}^{-1})^2\in (0,1)$ and $0<e^{-2\underline{\omega}} \underline{D} \le D(\omega_1,z_1)\le e^{\omegabar} \overline{D}$. For this particular choice of $\kappa$, we have 
\begin{align}
\sqrt{1-\kappa} e^{z_1^\top H\mu}-1 \ge \frac{1}{2}(1+\underline{\xi}^{-1})\underline{\xi} -1=\frac{1}{2}(\underline{\xi} -1)>0. \label{eq:kappa}
\end{align}
By direct calculation,
\begin{align}
m_2 q (e^{t/m_2}-1)+m_1 p (e^{-t/m_1}-1)= & -\left(m_1 p+m_2 q-(m_1+m_2)p^{\frac{m_1}{m_1+ m_2}}q^{\frac{m_2}{m_1+ m_2}}\right) \nonumber \\
\le & -\frac{n}{2}\left(p+q-2\gamma(p-q)-2\left(\frac{p}{q}\right)^{\gamma}\sqrt{pq}\right). \label{eq:exponent_bound11}
\end{align}
We aim to show that the term inside the round brackets of the last display and $-\frac{n}{2}I(\alpha_1,z_1)$ are close.
To this end, first note that
\begin{align*}
\frac{p+q-2\gamma(p-q)-2\left(\frac{p}{q}\right)^{\gamma}\sqrt{pq}}{(\sqrt{p}-\sqrt{q})^2} = & 1- 2\gamma\frac{\sqrt{p}+\sqrt{q}}{\sqrt{p}-\sqrt{q}}- 2\left[\left(\frac{p}{q}\right)^{\gamma}-1\right]\frac{\sqrt{pq}}{(\sqrt{p}-\sqrt{q})^2}
\end{align*}
Using \eqref{eq:p_bound}, \eqref{eq:q_bound} and \eqref{eq:kappa}, we obtain
\begin{align*}
2\gamma\frac{\sqrt{p}+\sqrt{q}}{\sqrt{p}-\sqrt{q}} & \le 2\gamma\frac{e^{\frac{1}{2}z_1^\top H\mu}+e^{-\frac{1}{2}z_1^\top H\mu}}{\sqrt{1-\kappa} e^{\frac{1}{2}z_1^\top H\mu}-e^{-\frac{1}{2}z_1^\top H\mu}}= 2\gamma\frac{e^{z_1^\top H\mu}-1}{\sqrt{1-\kappa}e^{z_1^\top H\mu}-1}\le 4\gamma \frac{\overline{\xi}-1}{\underline{\xi}-1}, \\
\left(\frac{p}{q}\right)^{\gamma}-1 & \le \left(\frac{e^{z_1^\top H \mu}}{(1-\kappa)e^{-z_1^\top H \mu}}\right)^{\gamma}-1\le \left(\frac{\overline{\xi}^2}{1-\kappa}\right)^{\gamma}-1, \quad \mbox{and} \\
\frac{\sqrt{pq}}{(\sqrt{p}-\sqrt{q})^2} & \le \frac{1}{ (\sqrt{1-\kappa} e^{\frac{1}{2}z_1^\top H\mu} -e^{-\frac{1}{2}z_1^\top H\mu} )^2} = \frac{e^{z_1^\top H\mu}}{(\sqrt{1-\kappa} e^{z_1^\top H\mu} -1)^2} 
%\stackrel{\text{\eqref{eq:kappa}}}{\le} 
{\le} 
\frac{4\overline{\xi}}{(\underline{\xi}-1)^2}.
\end{align*}
We choose $\gamma$ such that the second last and third last displays are sufficiently small.
Hence, for sufficiently small  constant $\gamma>0$,
\begin{align}
\frac{p+q+2\gamma(p-q)-2\left(\frac{p}{q}\right)^{\gamma}\sqrt{pq}}{(\sqrt{p}-\sqrt{q})^2} \ge  1-\frac{\epsilon}{4}. \label{eq:exponent_bound12}
\end{align}
Also note that
\begin{align*}
	I(\alpha_1,z_1)= & -2\log\left(1-\frac{1}{2}(\sqrt{p}-\sqrt{q})^2-\frac{1}{2}(\sqrt{1-p}-\sqrt{1-q})^2\right).
\end{align*}
Let
\begin{align*}
	\beta & =   \frac{1}{2}(\sqrt{p}-\sqrt{q})^2+\frac{1}{2}(\sqrt{1-p}-\sqrt{1-q})^2 \\
	& = \frac{1}{2} (\sqrt{p}-\sqrt{q})^2 \left[1+\frac{(\sqrt{p}+\sqrt{q})^2}{(\sqrt{1-p}+\sqrt{1-q})^2}\right].
\end{align*}
By \eqref{eq:p_bound}, \eqref{eq:q_bound}, \eqref{eq:alpha1} of Assumption \ref{assump:alpha}, and that $e^{z_1^\top H\mu}\le \overline{\xi}$, we have $p,q\le \frac{3}{4}$.
Thus
\begin{align*}
\frac{(\sqrt{p}+\sqrt{q})^2}{(\sqrt{1-p}+\sqrt{1-q})^2} & \le  \frac{e^{2\alphaavg}\left(e^{\frac{1}{2}z_1^\top H \mu}+e^{-\frac{1}{2}z_1^\top H \mu}\right)^2 D(\omega_1,z_1)}{(\frac{1}{2}+\frac{1}{2})^2} \\
& \le  e^{2\alphaavg+\omegabar}(\overline{\xi}^{\frac{1}{2}}+\underline{\xi}^{-\frac{1}{2}})^2 \overline{D},
\end{align*}
which goes to 0 as $2\alphaavg+\omegabar\to -\infty$ by \eqref{eq:alpha1} of Assumption \ref{assump:alpha}. Consequently,
\begin{align*}
	\beta & \le \frac{1}{2} e^{2\alphaavg+\omegabar} \left(e^{\frac{1}{2}z_1^\top H \mu}-\sqrt{1-\kappa}e^{-\frac{1}{2}z_1^\top H \mu}\right)^2 \overline{D} \left[1+ e^{2\alphaavg+\omegabar}\left(\overline{\xi}^{\frac{1}{2}}+\underline{\xi}^{-\frac{1}{2}}\right)^2 \overline{D}\right] \\
	& \le \frac{1}{2} e^{2\alphaavg+\omegabar} \left(\overline{\xi}^{\frac{1}{2}}- \frac{1}{2}(1+\underline{\xi}^{-1}) \overline{\xi}^{-\frac{1}{2}}\right)^2 \overline{D} \left[1+ e^{2\alphaavg+\omegabar} \bigl(\overline{\xi}^{\frac{1}{2}}+\underline{\xi}^{-\frac{1}{2}} \bigr)^2 \overline{D}\right],
\end{align*}
which also goes to 0 as $2\alphaavg+\omegabar\to -\infty$.
Since $\log(1-\beta)\ge -\beta-\beta^2$ for all $0<\beta<1/2$, we obtain $I(\alpha_1,z_1) \le  2\beta+2\beta^2$. Therefore,
\begin{align*}
\frac{I(\alpha_1,z_1)}{(\sqrt{p}-\sqrt{q})^2}
& \le   (1+ \beta)\left(1 + \frac{(\sqrt{p}+\sqrt{q})^2}{(\sqrt{1-p}+\sqrt{1-q})^2} \right).
%\left[1+\frac{(\sqrt{p}+\sqrt{q})^2}{(\sqrt{1-p}+\sqrt{1-q})^2} 
\end{align*}
Since the limits of $\beta$ and $\frac{(\sqrt{p}+\sqrt{q})^2}{(\sqrt{1-p}+\sqrt{1-q})^2}$ are both zeros, we have for large values of $n$ that
\begin{align}
	\frac{I(\alpha_1,z_1)}{(\sqrt{p}-\sqrt{q})^2} \le 1+\frac{\epsilon}{4}. \label{eq:exponent_bound13}
\end{align}
We combine \eqref{eq:exponent_bound11}, \eqref{eq:exponent_bound12} and \eqref{eq:exponent_bound13} to obtain
\begin{align}
K_1 \le \exp\left\{-\frac{n}{2}\,
\frac{1-{\epsilon}/{4}}{1+{\epsilon}/{4}} \,
I(\alpha_1,z_1)\right\}
\le \exp\left\{-\frac{n}{2}\left(1-\frac{\epsilon}{2}\right)
I(\alpha_1,z_1)\right\}.
\label{eq:exponent_bound1}
\end{align}

To bound $K_2$, we have the decomposition 
\begin{align*}
K_2=\exp\left( ( e^{t/m_2} -1) \Bigl( \sum_{i\in J_{2}} P_{1i} - n_2 q -\sum_{i\in J_{12}}P_{1i}+(n_2-m_2)q\Bigr) \right).
\end{align*}
By \eqref{eq:p_bound} and \eqref{eq:q_bound}, we can bound $e^{t/m_2}-1$ by a constant
\begin{align}
    e^{t/m_2}-1=\left(\frac{p}{q}\right)^{\frac{m_1}{m_1+m_2}}-1 \le \left(\frac{p}{q}\right)^{\gamma+\frac{1}{2}}-1 \le \left(\frac{e^{z_1^\top H \mu}}{(1-\kappa)e^{-z_1^\top H \mu}}\right)^{\gamma+\frac{1}{2}}-1 
    {\le} 
    \frac{\overline{\xi}^2}{1-\kappa}-1. \label{eq:exp-tm-bound}
\end{align}
We then bound $|\sum_{i\in J_{2}} P_{1i} - n_2 q|$, $\sum_{i\in J_{12}}P_{1i}$ and $(n_2-m_2)q$ one by one. By definition, on event $E_1$ we have 
\begin{align}
\bigg|\sum_{i\in J_{2}} P_{1i} - n_2 q\bigg| \le n_2 \epsilon' q <  n\epsilon' q. \label{eq:K211_bound}
\end{align}
We use $\log(1-x)\le -x$ for $0<x<1$ to obtain
\begin{align*}
I(\alpha_1,z_1) & \ge  2\beta  =  (\sqrt{p}-\sqrt{q})^2 \left[1+\frac{(\sqrt{p}+\sqrt{q})^2}{(\sqrt{1-p}+\sqrt{1-q})^2}\right].
\end{align*}
By \eqref{eq:p_bound} and \eqref{eq:q_bound}, we get
\begin{align}
\frac{I(\alpha_1,z_1)}{(\sqrt{p}-\sqrt{q})^2 } & \ge
1+\frac{1}{4}\left[(1-\kappa) e^{2\alphaavg-2\underline{\omega}}(e^{\frac{1}{2}z_1^\top H \mu}+ e^{-\frac{1}{2}z_1^\top H \mu})^2 \underline{D}\right] \nonumber \\
& \ge 1+ \frac{1}{4}\left[(1-\kappa) e^{2\alphaavg-2\underline{\omega}}(\underline{\xi}^{\frac{1}{2}}+ \overline{\xi}^{-\frac{1}{2}})^2 \underline{D}\right] \to 1, \label{eq:K212_bound}
\end{align}
as $\alphaavg\to -\infty$. 
Following \eqref{eq:kappa}, we also have
\begin{align}
\frac{q}{(\sqrt{p}-\sqrt{q})^2}=\frac{1}{\left(\sqrt{p/q}-1\right)^2}\le \frac{1}{(\sqrt{1 - \kappa} e^{z_1^\top H \mu}-1)^2} \le \frac{4}{(\underline{\xi}-1)^2}. \label{eq:K213_bound}
\end{align}
Putting \eqref{eq:exp-tm-bound}, \eqref{eq:K211_bound}, \eqref{eq:K212_bound} and \eqref{eq:K213_bound} together, for a suitably chosen $\epsilon'$, we obtain
\begin{equation}
\label{eq:K21_bound}
\begin{aligned}
(e^{t/m_2}-1) \bigg|\sum_{i\in J_{2}} P_{1i} - n_2 q \bigg| & \le \left(\frac{\overline{\xi}^2}{1-\kappa}-1\right) n \epsilon' I(\alpha_1, z_1) \frac{(\sqrt{p}-\sqrt{q})^2}{I(\alpha_1,z_1)} \frac{q}{(\sqrt{p}-\sqrt{q})^2}  \\
& \le \frac{\epsilon}{32} n I(\alpha_1,z_1). 
\end{aligned}
\end{equation}
%In order to bound $\sum_{i\in J_{12}}P_{1i}$, 
Note 
\(
P_{1i} \le e^{\alpha_1+\alpha_i+z_1^\top H z_i}\le \overline{\xi} e^{2\alphaavg+\omega_1}e^{\omega_i},
\)
then on event $E_1$ we have
\begin{align*}
    \sum_{i\in J_{12}} P_{1i} \le \overline{\xi} e^{2\alphaavg+\omega_1}\sum_{i\in J_{12}} e^{\omega_i} \le \overline{\xi} e^{2\alphaavg+\omega_1}\sum_{ \{i:\hsigma_i^{(-1,0)}\neq \sigma_i\} } e^{\omega_i} \le \overline{\xi} e^{2\alphaavg + \omega_1} e^{-\underline{\omega}} \frac{\gamma}{2} (n-1) .
\end{align*}
By \eqref{eq:q_bound}, the definition of $D(\omega_1,z_1)$ and Assumption~\ref{assump:alpha}, we see 
\(
    %\frac{e^{2 \alphaavg + \omega_1}}{ q } \le \frac{e^{2 \alphaavg + \omega_1}}{ e^{2 \alphaavg + \omega_1} C}.
    q \gtrsim {e^{2 \alphaavg + \omega_1}}.
\)
In view of \eqref{eq:K212_bound} and \eqref{eq:K213_bound}, we make $\gamma$ small enough such that
\begin{equation}
\label{eq:K22_bound}
\begin{aligned}
    (e^{t/m_2}-1)\sum_{i\in J_{12}} P_{1i} & \le \left(\frac{\overline{\xi}^2}{1-\kappa}-1\right)\overline{\xi} e^{2\alphaavg + \omega_1} e^{-\underline{\omega}} \frac{\gamma}{2} (n-1)\\
    &\le
    \left(\frac{\overline{\xi}^2}{1-\kappa}-1\right) \overline{\xi} \frac{e^{2\alphaavg + \omega_1}}{q} 
    \frac{q}{(\sqrt{p} - \sqrt{q})^2}
    \frac{(\sqrt{p} - \sqrt{q})^2}{I(\alpha_1,z_1)}
    e^{-\underline{\omega}} \frac{\gamma}{2} n I(\alpha_1, z_1)\\
    & \le  \frac{\epsilon}{32} n I(\alpha_1,z_1).
\end{aligned}
\end{equation}
%On one hand, we use $\frac{\gamma}{2}<1$ and the bounds \eqref{eq:p_bound}, \eqref{eq:q_bound} to obtain
%\begin{align*}
%\frac{\frac{\gamma}{2} n\left(\sqrt{pq}\left(\frac{p}{q}\right)^{\frac{\gamma}{2}}-q\right)}{\frac{n}{2}(\sqrt{p}-\sqrt{q})^2}
%& \le  \gamma\, \frac{\sqrt{pq}\left(\frac{p}{q}\right)-q}{(\sqrt{p}-\sqrt{q})^2}\\
%& \le  \gamma\, \frac{ {(1-\kappa)^{-1}} e^{2z_1^\top H\mu}-(1-\kappa) e^{-z_1^\top H \mu}}{ (\sqrt{1-\kappa} e^{\frac{1}{2} z_1^\top H \mu}- e^{-\frac{1}{2}z_1^\top H\mu} )^2}.
%\end{align*}
%By the boundedness of $z_1^\top H \mu$ on $\mathcal{B}_\epsilon$, the fraction term in the above display has an constant upper bound.
%Hence, we make $\gamma$ sufficiently small such that
%\begin{align}
%\frac{\frac{\gamma}{2} n\left(\sqrt{pq}\left(\frac{p}{q}\right)^{\frac{\gamma}{2}}-q\right)}{\frac{n}{2}(\sqrt{p}-\sqrt{q})^2} \le \frac{\epsilon}{8}. \label{eq:exponent_bound22}
%\end{align}
Since $n_2-m_2\le \frac{3}{2}\gamma n$, combining \eqref{eq:exp-tm-bound}, \eqref{eq:K212_bound} and \eqref{eq:K213_bound} we obtain
\begin{align}
(e^{t/m_2}-1) (n_2-m_2)q & \le \left(\frac{\overline{\xi}^2}{1-\kappa}-1\right) \frac{3}{2}\gamma n I(\alpha_1, z_1) \frac{(\sqrt{p}-\sqrt{q})^2}{I(\alpha_1,z_1)} \frac{q}{(\sqrt{p}-\sqrt{q})^2} \nonumber \\
& \le \frac{\epsilon}{32} n I(\alpha_1,z_1)\label{eq:K23_bound}
\end{align}
for small enough $\gamma$.
Combining \eqref{eq:K21_bound}, \eqref{eq:K22_bound} and \eqref{eq:K23_bound}, we obtain
\begin{align}
K_2\le \exp\left\{\frac{3\epsilon}{32} n I(\alpha_1,z_1)\right\} \label{eq:exponent_bound2}
\end{align}

%Furthermore, on the event $E_1$, since $m_{21}\le \frac{\gamma}{2} n$,
%we have
%\begin{align}
%	\left| -m_{21}(q e^{\frac{t}{m_2}}-q) \right| & \le \frac{\gamma}{2} n\left(\sqrt{pq}\left(\frac{p}{q}\right)^{\frac{\gamma}{2}}-q\right). \label{eq:exponent_bound21}
%\end{align}

The same bound for $K_3$ is obtained in a similar way to bound $K_2$
\begin{align}
K_3\le \exp\left\{\frac{3\epsilon}{32} n I(\alpha_1,z_1)\right\}. \label{eq:exponent_bound3}
\end{align}

Lastly, the following bound for $K_4$ is obtained by the same argument as in establishing \eqref{eq:K22_bound}
\begin{align}
K_4\le \exp\left\{\frac{\epsilon}{32} n I(\alpha_1,z_1) \right\}. \label{eq:exponent_bound4}
\end{align}

Combining \eqref{eq:penultimate-bound}, \eqref{eq:exponent_bound1}, \eqref{eq:exponent_bound2}, \eqref{eq:exponent_bound3}, \eqref{eq:exponent_bound4}, we get
\begin{align*}
&\bbE\left[ \indc{E_1} \Indc\left(\frac{1}{m_1}\sum_{i\in {J}_{11}} A_{1,i}\le \frac{1}{m_2}\sum_{i\in {J}_{22}} A_{1,i} + \frac{1}{m_2}\sum_{i \in J_{21}} A_{1,i} \right) \, \bigg|\, \{ \alpha_i,z_i\}_{i=1}^n  \right] \\
& \qquad \qquad \le \exp\left\{-\frac{n}{2} (1-\frac{15}{16}\epsilon) I(\alpha_1,z_1)  \right\} \le \exp\left\{-\frac{n}{2} (1-\epsilon) I(\alpha_1,z_1)  \right\}.
\end{align*}
Since the rightmost side of the above display depends only on $(\alpha_1,z_1)$, by \eqref{eq:bound-decomp2} we obtain for $z_1\in\calB_\epsilon$
\begin{align*}
& \Prob\left(\frac{1}{m_1}\sum_{i\in \widehat{J}_1} A_{1,i}\le \frac{1}{m_2}\sum_{i\in \widehat{J}_2} A_{1,i} \text{\ and\ } E_1\,\bigg|\,\alpha_1,z_1 \right) \le \exp\left\{-\frac{n}{2} (1-\epsilon) I(\alpha_1,z_1)\right\}.
\end{align*}
By \eqref{eq:bound-decomp}, we further have
\begin{align*}
\Prob_{\{\sigma_1=1\}}(\hsigma_1=2 \text{\ and\ } E_1) & \le \Expect_{\{\sigma_1=1\}}^{\alpha_1,z_1}\left[\indc{z_1\in \calB_\epsilon} \exp\left\{-\frac{n}{2}(1-\epsilon)I(\alpha_1,z_1)\right\}\right] +\Prob_{\{\sigma_1=1\}}(z_1\in \calB_\epsilon^c) \\
& \le \Expect_{\{\sigma_1=1\}}^{\alpha_1,z_1}\left[\indc{z_1\in \calB_\epsilon} \exp\left\{-\frac{n}{2}(1-\epsilon)I(\alpha_1,z_1)\right\}\right] + \exp\left\{-(1-\epsilon/2)\frac{\rho^2}{2\tau^2}\right\},
\end{align*}
where the last inequality is due to \eqref{eq:ball_bound} in Lemma \ref{lem:edgecount-rate}. Finally, in view of \eqref{eqn:E1c-bound}, we have
\begin{align*}
\Prob_{\{\sigma_1=1\}}(\hsigma_1=2) & \le \Prob_{\{\sigma_1=1\}}(\hsigma_1=2 \text{\ and\ } E_1) + \Prob_{\{\sigma_1=1\}}(E_1^c) \\
& \le \Expect_{\{\sigma_1=1\}}^{\alpha_1,z_1}\left[\indc{z_1\in \calB_\epsilon} \exp\left\{-\frac{n}{2}(1-\epsilon)I(\alpha_1,z_1)\right\}\right] + \exp\left\{-(1-\epsilon)\frac{\rho^2}{2\tau^2}\right\} + n^{-(1+C)}.
\end{align*}
\bigskip

\end{proof}

\subsection{Proof of Theorem \ref{thm:upper}}
%\begin{proof}
The proof strategy here is similar to that used in the proof of Theorem 2 in \cite{gao2015achieving}.
For $i\in [n]$ there is a permutation $\pi_i$ such that
\begin{equation*}
	\ell(\sigma, \wh\sigma^{(-i,0)}) = \frac{1}{n} \sum_{j=1}^n \indc{\sigma_j \neq \pi_i(\hat\sigma_j^{(-i,0)})}.
\end{equation*}
Without loss of generality, we may assume that $\pi_1 = \mathrm{Id}$ is the identity permutation.
Then by Proposition \ref{prop:init-error} and Lemma 4 in \cite{gao2015achieving}, we obtain that for some constant $C>0$, for each $i=2,\cdots, n$ with probability at least $1 - n^{-(1+C)}$,
\begin{equation*}
	\wh\sigma_i = \pi_i(\wh\sigma^{(-i,0)}_i).
\end{equation*}
Together with Proposition \ref{prop:init-error-edge-counting}, we obtain that for $i=1,\dots, n$,
\begin{align}
\Prob\{ \sigma_i \neq \wh\sigma_i \}
& \leq \Prob\{ \sigma_i \neq \pi_i(\wh\sigma^{(-i,0)}_i),~  \wh\sigma_i = \pi_i(\wh\sigma^{(-i,0)}_i) \}
+ \Prob\{ \wh\sigma_i \neq \pi_i(\wh\sigma^{(-i,0)}_i) \}  \nonumber \\
& \leq \countrateuppereps{\epsilon'} + 2 n^{-(1+C)}.
\label{eq:upper-proof-1}
\end{align}
Here, for any fixed $\epsilon \in (0, 1/2)$, we pick
\begin{equation*}
	\epsilon' = \frac{\epsilon}{2}. 
\end{equation*}
By Markov's inequality, We have
\begin{align*}
\Prob\left\{ \ell(\sigma, \wh\sigma) > \countrateuppereps{\epsilon} \right\}
& \leq \frac{1}{\countrateuppereps{\epsilon}} \cdot
\frac{1}{n}\sum_{i=1}^n \Prob\{\sigma_i\neq \wh\sigma_i\} \\
& \leq
\frac{\countrateuppereps{\epsilon'}}{\countrateuppereps{\epsilon}}
+ \frac{2n^{-(1+C)}}{\countrateuppereps{\epsilon}} .
\end{align*}
We divide the remaining proof into two cases depending on the relative magnitude of $\countrateuppereps{\epsilon}$ and $n^{-(1+C/2)}$.

\textbf{Case 1} $\quad$
If $\countrateuppereps{\epsilon} \ge n^{-(1+C/2)}$, then 
\begin{align*}
\Prob\left\{ \ell(\sigma, \wh\sigma) > \countrateuppereps{\epsilon} \right\}
\leq
\frac{\countrateuppereps{\epsilon'}}{\countrateuppereps{\epsilon}}
+ {2n^{-C/2}}.
\end{align*}
To control the ratio $\countrateuppereps{\epsilon'}/\countrateuppereps{\epsilon}$,
we further divide into two subcases. 

\textbf{Subcase 1.1} $\quad$
In this subcase, we assume that 
\begin{equation}
	\label{eq:subcase1}
	e^{-(1-\epsilon)\frac{\rho^2}{2\tau^2}} \ll 
	\Expect_{H_0}^{\alpha_0,z_0}\left[\indc{z_0\in \mathcal{B}_{\epsilon}}e^{-(1-\epsilon) \frac{n}{2} I(\alpha_0, z_0)} \right].
\end{equation}
We then have
\begin{align}
& \Expect_{H_0}^{\alpha_0,z_0}\left[\indc{z_0\in \mathcal{B}_{\epsilon'}}e^{-(1-\epsilon') \frac{n}{2} I(\alpha_0, z_0)} \right]
\nonumber \\
& \leq 
\Expect_{H_0}^{\alpha_0,z_0}\left[\indc{z_0\in \mathcal{B}_{\epsilon}}e^{-(1-\epsilon')\frac{n}{2} I(\alpha_0, z_0)} \right]
+ C e^{-(1-\epsilon)\frac{\rho^2}{2\tau^2}}
\label{eq:upperbd-1}\\
& =
\Expect_{H_0}^{\alpha_0,z_0}\left[\indc{z_0\in \mathcal{B}_{\epsilon}} e^{-(1-\epsilon) \frac{n}{2} I(\alpha_0, z_0)} 
e^{-(\epsilon - \epsilon') \frac{n}{2} I(\alpha_0, z_0)}
\right]
+ C e^{-(1-\epsilon)\frac{\rho^2}{2\tau^2}} 
\nonumber \\
& = o(1) \cdot
\Expect_{H_0}^{\alpha_0,z_0}\left[\indc{z_0\in \mathcal{B}_{\epsilon}} e^{-(1-\epsilon) \frac{n}{2} I(\alpha_0, z_0)} \right]
+ C e^{-(1-\epsilon)\frac{\rho^2}{2\tau^2}} 
\label{eq:upperbd-2}\\
& \ll \countrateuppereps{\epsilon}. 
\label{eq:upperbd-3}
\end{align}
Here, \eqref{eq:upperbd-1} holds since $e^{-(1-\epsilon')m I(\alpha_0, z_0)}\leq 1$ and $\bbP_{H_0}( z_0 \in \calB_{\epsilon'} \setminus \calB_\epsilon) \le \Prob_{H_0}(z_0\notin \mathcal{B}_{\epsilon}) \leq C e^{-(1-\epsilon)\frac{\rho^2}{2\tau^2}}$. 
In \eqref{eq:upperbd-2},
the equality holds since $\epsilon > \epsilon'$ and $n I(\alpha_0,z_0)$ is bounded from below uniformly when $z_0\in \mathcal{B}_{\epsilon}$ by a sequence that diverges to infinity.
Finally, \eqref{eq:upperbd-3} holds since both terms in \eqref{eq:upperbd-2} are $o(\countrateuppereps{\epsilon})$ as $n\to\infty$ under \eqref{eq:subcase1}.
%On the other hand, $e^{-(1-\epsilon')\frac{\rho^2}{2\tau^2}} = o(e^{-(1-\epsilon)\frac{\rho^2}{2\tau^2}})$ holds for all $\epsilon > \epsilon' > 0$. 
Hence, 
\begin{align}
	\label{eq:upperbd-case1}
\Prob\left\{ \ell(\sigma, \wh\sigma) > \countrateuppereps{\epsilon} \right\}
\leq
\frac{\countrateuppereps{\epsilon'}}{\countrateuppereps{\epsilon}}
+ {2n^{-C/2}} = o(1).
\end{align}

\textbf{Subcase 1.2} $\quad$
In this case, we consider the situation complemental to \eqref{eq:subcase1}, namely
\begin{equation}
	\label{eq:subcase2}
	\Expect_{H_0}^{\alpha_0,z_0}\left[\indc{z_0\in \mathcal{B}_{\epsilon}}e^{-(1-\epsilon) \frac{n}{2} I(\alpha_0, z_0)} \right]
	\lesssim e^{-(1-\epsilon)\frac{\rho^2}{2\tau^2}}.
\end{equation}
Note that \eqref{eq:subcase2} leads to
\begin{align}
% &
\Expect_{H_0}^{\alpha_0,z_0}
\left[e^{-(1-\epsilon) \frac{n}{2} I(\alpha_0, z_0)} \right]	
% \\
& 
\leq 
\Expect_{H_0}^{\alpha_0,z_0}\left[\indc{z_0\in \mathcal{B}_{\epsilon}}e^{-(1-\epsilon) \frac{n}{2} I(\alpha_0, z_0)} \right]
+ \Prob_{H_0}(z_0\notin \mathcal{B}_\epsilon)
\nonumber
\\
& \lesssim e^{-(1-\epsilon)\frac{\rho^2}{2\tau^2}}.
\label{eq:subcase2-upper}
\end{align}
For the first term in $\countrateuppereps{\epsilon'}$, we have
\begin{align}
& \Expect_{H_0}^{\alpha_0,z_0}\left[\indc{z_0\in \mathcal{B}_{\epsilon'}}e^{-(1-\epsilon') \frac{n}{2} I(\alpha_0, z_0)} \right]
\nonumber
\\
& 
= \Expect_{H_0}^{\alpha_0,z_0}\left[\indc{z_0\in \mathcal{B}_{\epsilon'}}e^{-(1-\epsilon) \frac{n}{2} I(\alpha_0, z_0)} 
e^{-(\epsilon - \epsilon') \frac{n}{2} I(\alpha_0, z_0)} \right]
\nonumber \\
& 
= o(1)\, \Expect_{H_0}^{\alpha_0,z_0}\left[\indc{z_0\in \mathcal{B}_{\epsilon'}}e^{-(1-\epsilon) \frac{n}{2} I(\alpha_0, z_0)}  \right]
\label{eq:upperbd-4} 
\\
& = o(1)\, \Expect_{H_0}^{\alpha_0,z_0}
\left[e^{-(1-\epsilon) \frac{n}{2} I(\alpha_0, z_0)}  \right]
\nonumber \\
& 
\ll e^{-(1-\epsilon)\frac{\rho^2}{2\tau^2}}.
\label{eq:upperbd-5}
\end{align}
Here \eqref{eq:upperbd-4} holds since $nI(\alpha_0,z_0)$ is bounded from below uniformly when $z_0\in \mathcal{B}_{\epsilon'}$ by a sequence that diverges to infinity and $\epsilon > \epsilon'$.
The bound \eqref{eq:upperbd-5} is due to \eqref{eq:subcase2-upper}.

Under \eqref{eq:subcase2}, we then have 
\begin{align*}
\countrateuppereps{\epsilon'}
=
\Expect_{H_0}^{\alpha_0,z_0}\left[\indc{z_0\in \mathcal{B}_{\epsilon'}}e^{-(1-\epsilon') \frac{n}{2} I(\alpha_0, z_0)} \right]	
+ e^{-(1-\epsilon')\frac{\rho^2}{2\tau^2}} 
\ll e^{-(1-\epsilon)\frac{\rho^2}{2\tau^2}} 
\lesssim \countrateuppereps{\epsilon}.
\end{align*}
Hence the desired bound \eqref{eq:upperbd-case1} continues to hold.

\medskip

\textbf{Case 2} $\quad$
When
\begin{equation}
	\label{eq:super-small-rate}
\countrateuppereps{\epsilon} < n^{-(1+C/2)} < n^{-1},
\end{equation}
then
\begin{align*}
\Prob\left\{ \ell(\sigma, \wh\sigma) > \countrateuppereps{\epsilon} \right\}
& = \Prob\left\{ \ell(\sigma, \wh\sigma) > 0 \right\} \\
& \leq \sum_{i=1}^n \Prob\{\sigma_i\neq \wh\sigma_i\} \\
& \leq n \countrateuppereps{\epsilon} + 2 n^{-C} \\
& \leq n^{-C/2} + 2 n^{-C} = o(1).
\end{align*}
Here, the second inequality is a union bound. The third inequality is due to \eqref{eq:upper-proof-1} and the last inequality holds due to \eqref{eq:super-small-rate}.
This completes the proof. 
%\end{proof}

\subsection{Proof of Theorem \ref{thm:lowbd}}

%\begin{proof}
% In what follows, we abbreviate the notation of paramater space to $\calP_n$.
The lower bound can be established by adapting some arguments spelled out in Section 3 of \cite{gao2018minimax}.
We include them below for the manuscript to be self-contained.

Note that for any $0 < \epsilon_2 < \epsilon_1 < 1/2$, we have
\begin{equation*}
\countratelowereps{\epsilon_1} \leq \countratelowereps{\epsilon_2}
\quad \text{and}
\quad
\frac{\countratelowereps{\epsilon_1}}{\countratelowereps{\epsilon_2}} \to 0.
\end{equation*}
Therefore, for any fixed $\epsilon \in (0,1/2)$, we may choose a fixed $\epsilon' > 0$ and a sequence $\delta'=\delta'_n$ such that
\begin{equation}
	\label{eq:delta-require}
	\frac{1}{n}\ll \delta' \ll 1 \quad \text{and} \quad
	\delta'\, \countratelowereps{{\epsilon'}} \gtrsim
	\countratelowereps{{\epsilon}}.
\end{equation}
Then, we choose a $\sigma^*\in[2]^n$ such that $n_u(\sigma^*)\in\left[(1- \delta')n/2,\,  (1 + \delta')n/2\right]$ for $u=1,2$.
% and $n_1(\sigma^*)=n_2(\sigma^*)=\ceil{\frac{n}{\beta k}+\frac{\delta' n}{k}}$.
Let $\mathcal{C}_u(\sigma^*)=\{i\in[n]:\sigma^*_i=u\}$.
Then we choose some $\wt{\mathcal{C}}_1\subset\mathcal{C}_1(\sigma^*)$ and $\wt{\mathcal{C}}_2\subset\mathcal{C}_2(\sigma^*)$ such that $|\wt{\mathcal{C}}_1|=|\wt{\mathcal{C}}_2|=\ceil{(1-\delta')n/2}$. Define
$$
T=\wt{\mathcal{C}}_1\cup \wt{\mathcal{C}}_2
% \cup \left(\cup_{u=3}^k \mathcal{C}_u(\sigma^*)\right)
\quad\text{and}\quad
\mathcal{Z}_T=\left\{ \sigma \in[2]^n: \sigma_i=\sigma^*_i\text{ for all }i\in T\right\}.
$$
The set $\mathcal{Z}_T$ corresponds to a sub-problem that we only need to estimate the clustering labels $\{\sigma_i\}_{i\in T^c}$.

Given any $\sigma \in\mathcal{Z}_T$, the values of $\{\sigma_i\}_{i\in T}$ are known.
% , and for each $i\in T^c$, there are only two possibilities that $z(i)=1$ or $z(i)=2$.
% The idea is that this sub-problem is simple enough to analyze but it still captures the hardness of the original community detection problem.
Now, we define the subspace
$$
\calP_n^0 = \left\{\calM_n(\sigma, H,\mu,\tau, F_\alpha)
\in \calP_n: \sigma \in \mathcal{Z}_T\right\}.
$$
We have $\calP_n^0 \subset \calP_n$ by the construction of $\mathcal{Z}_T$. This gives the lower bound
\begin{equation}
\inf_{\wh{\sigma}}\sup_{\calP_n}\mathbb{E}\ell(\sigma,\wh{\sigma})
\geq \inf_{\wh{\sigma}}\sup_{\calP_n^0}\mathbb{E}\ell(\sigma,\wh{\sigma})
=
\inf_{\wh{\sigma}}\sup_{\sigma\in\mathcal{Z}_T}\frac{1}{n}\sum_{i=1}^n\mathbb{P}\{\wh{\sigma}_i\neq \sigma_i\}.
\label{eq:reduction}
\end{equation}
The last equality above holds because for any $\sigma^1, \sigma^2\in\mathcal{Z}_T$,
we have $\frac{1}{n}\sum_{i=1}^n\indc{\sigma^1_i\neq \sigma^2_i}=O(\delta') = o(1)$
so that $\ell(\sigma^1, \sigma^2)=\frac{1}{n}\sum_{i=1}^n
\indc{\sigma^1_i\neq \sigma^2_i}$.
Continuing from (\ref{eq:reduction}), we have
\begin{align}
\nonumber \inf_{\wh{\sigma}}\sup_{\sigma \in\mathcal{Z}_T}\frac{1}{n}\sum_{i=1}^n\mathbb{P}\{\wh{\sigma}_i\neq \sigma_i\}
&\geq
\frac{|T^c|}{n}\inf_{\wh{\sigma}}\sup_{\sigma \in\mathcal{Z}_T}\frac{1}{|T^c|}\sum_{i\in T^c}\mathbb{P}\{\wh{\sigma}_i\neq \sigma_i\} \\
\label{eq:ratio}
&\geq \frac{|T^c|}{n}\frac{1}{|T^c|}
\sum_{i\in T^c}\inf_{\wh{\sigma}_i}\ave_{\sigma \in\mathcal{Z}_T}
\mathbb{P}\{\wh{\sigma}_i\neq \sigma_i\}.
\end{align}
Note that for each $i\in T^c$,
\begin{align}
\nonumber & \inf_{\wh{\sigma}_i}
\ave_{\sigma \in\mathcal{Z}_T}\mathbb{P}\{\wh{\sigma}_i\neq \sigma_i\} \\
\label{eq:to-test} &\geq  \ave_{\sigma_{-i}}\inf_{\wh{\sigma}_i}\left(\frac{1}{2}\mathbb{P}_{(\sigma_{-i},\sigma_i=1)}\left(\wh{\sigma}_i\neq 1\right)+\frac{1}{2}\mathbb{P}_{(\sigma_{-i},\sigma_i=2)}\left(\wh{\sigma}_i\neq 2\right)\right).
\end{align}

Now consider any fixed pair
$(\Prob_{(\sigma_{-i},\sigma_i=1)},\Prob_{(\sigma_{-i},\sigma_i=2)} )$.
Let $m_1$ and $m_2$ be the number of nodes with label $1$ and $2$ in $\sigma_{{-i}}$, respectively.
Let $\bar{m} = m_1\vee m_2$.
By the construction of $\mathcal{Z}_T$, we have
\[
\left|\bar{m} - \frac{n}{2} \right|\leq \frac{\delta' n}{2}.
\]
By data processing inequality, the total variation distance between this pair of distributions satisfies
\begin{equation}
	\label{eq:tv-ineq}
\mathrm{TV}(\Prob_{(\sigma_{-i},\sigma_i=1)},\Prob_{(\sigma_{-i},\sigma_i=2)} )
\geq
\mathrm{TV}(\Prob^0_{\bar{m}},\Prob^1_{\bar{m}} ),
\end{equation}
where $\Prob^0_{\bar{m}}$ and $\Prob^1_{\bar{m}}$ refer to the null and the alternative distributions in \eqref{eq:onenode-testing} with $\bar{m}$ observations from either community.
Continuing \eqref{eq:tv-ineq}, we further obtain from Lemmas \ref{lemma:likelihood-ratio-test-equiv} and
\ref{lem:edgecount-rate} that
\begin{equation*}
\mathrm{TV}(\Prob_{(\sigma_{-i},\sigma_i=1)},\Prob_{(\sigma_{-i},\sigma_i=2)} )
\geq
\mathrm{TV}(\Prob^0_{\bar{m}},\Prob^1_{\bar{m}} )
\geq \countratelowereps{\epsilon''},\quad \text{for any $\epsilon'' \in (0, 1/2)$},
\end{equation*}
where we have used the second last display and the fact that $\delta'  = o(1)$.
Together with \eqref{eq:reduction} and \eqref{eq:ratio}, this implies that
\begin{equation*}
	\inf_{\wh{\sigma}}\sup_{\calP_n}\mathbb{E}\ell(\sigma,\wh{\sigma})
    \gtrsim \delta'\, \countratelowereps{\epsilon''},\quad \text{for any $\epsilon'' \in (0,1/2)$}.
\end{equation*}
We complete the proof by observing \eqref{eq:delta-require}.
%\end{proof}

%%%%%%%%%%%%%%%%%%%%%%%%%%%%%%%%%%%%%%%%%%%%%%
\end{document}